\newtheorem{theorem}{Theorem}
\newtheorem{corollary}[theorem]{Corollary}
\newtheorem{lemma}[theorem]{Lemma}
\newtheorem{assumption}{Assumption}
\theoremstyle{definition}
\newtheorem{definition}{Definition}
\newcommand{\rbr}[1]{\left(#1\right)}
\newcommand{\sbr}[1]{\left[#1\right]}
\newcommand{\cbr}[1]{\left\{#1\right\}}
\newcommand{\R}{\mathbb{R}}
\newcommand{\N}{\mathbb{N}}
\newcommand{\mA}{\mathcal{A}}
\newcommand{\mB}{\mathcal{B}}
\newcommand{\mH}{\mathcal{H}}
\newcommand{\mT}{\mathcal{T}}
\newcommand{\mN}{\mathcal{N}}
\newcommand{\mX}{\mathcal{X}}
\newcommand{\Ep}{\mathbb{E}}
\renewcommand{\Pr}{\mathbb{P}}
\renewcommand{\hat}{\widehat}
\renewcommand{\tilde}{\widetilde}
\newcommand{\argmin}{\operatornamewithlimits{argmin}}
\newcommand{\argmax}{\operatornamewithlimits{argmax}}
\def\bX{\mathbf{X}}
\def\bK{\mathbf{K}}
\def\bk{\bm{k}}
\def\bI{\bm{I}}
\def\bx{\bm{x}}
\def\by{\bm{y}}
\def\bz{\bm{z}}
\newcommand{\mbS}{\mathbb{S}}
\newcommand{\secref}[1]{Section~\ref{#1}}
\newcommand{\appref}[1]{Appendix~\ref{#1}}
\newcommand{\algoref}[1]{Algorithm~\ref{#1}}
\newcommand{\asmpref}[1]{Assumption~\ref{#1}}
\newcommand{\thmref}[1]{Theorem~\ref{#1}}
\newcommand{\lemref}[1]{Lemma~\ref{#1}}
\newcommand{\corref}[1]{Corollary~\ref{#1}}
\newcommand{\figref}[1]{Figure~\ref{#1}}
\newcommand{\sek}{k_{\mathrm{SE}}}
\newcommand{\matk}{k_{\mathrm{Mat\acute{e}rn}}}
\newcommand{\cse}{C_{\mathrm{SE}}}
\newcommand{\cmat}{C_{\mathrm{Mat}}}
\newcommand{\cset}{\tilde{C}_{\mathrm{SE}}}
\newcommand{\cmatt}{\tilde{C}_{\mathrm{Mat}}}
\newcommand{\dGP}{\delta_{\mathrm{GP}}}
\newcommand{\cgap}{c_{\mathrm{gap}}}
\newcommand{\csup}{c_{\mathrm{sup}}}
\newcommand{\cquad}{c_{\mathrm{quad}}}
\newcommand{\rquad}{\rho_{\mathrm{quad}}}
\title{Improved Regret Bounds for Gaussian Process Upper Confidence Bound in Bayesian Optimization}
\author{%
  Shogo Iwazaki \\
  LY Corporation \\
  Tokyo, Japan \\
  \texttt{{siwazaki@lycorp.co.jp}} \\
}
\begin{document}
\maketitle

\begin{abstract}
This paper addresses the Bayesian optimization problem (also referred to as the Bayesian setting of the Gaussian process bandit), where the learner seeks to minimize the regret under a function drawn from a known Gaussian process (GP). 
Under a Mat\'ern kernel with a certain degree of smoothness, we show that the Gaussian process upper confidence bound (GP-UCB) algorithm achieves $\tilde{O}(\sqrt{T})$ cumulative regret with high probability. Furthermore, our analysis yields $O(\sqrt{T \ln^2 T})$ regret under a squared exponential kernel. These results fill the gap between the existing regret upper bound for GP-UCB and the best-known bound provided by \citet{scarlett2018tight}.
The key idea in our proof is to capture the concentration behavior of the input sequence realized by GP-UCB, enabling a more refined analysis of the GP's information gain.
\end{abstract}

\section{Introduction}
\label{sec:intro}
We study the Bayesian optimization (BO) problem, where the learner seeks to minimize the regret under a random function drawn from a known Gaussian process (GP)~\citep{frazier2018tutorial,garnett2023bayesian}. 
Throughout this paper, we focus on the GP-UCB algorithm~\citep{srinivas10gaussian}, which combines the posterior distribution of GP with the optimism principle. Due to its simple algorithm construction and general theoretical framework provided by \citet{srinivas10gaussian}, GP-UCB has played an important role in the advancement of the BO field. On the other hand, our theoretical understanding of the performance of GP-UCB has not been improved from \citep{srinivas10gaussian} in the Bayesian setting, while its frequentist counterpart is studied in several existing works~\citep{chowdhury2017kernelized,whitehouse2023improved}. 
Specifically, the current regret upper bound for GP-UCB, as provided by \citet{srinivas10gaussian}, is known to be worse than that of the algorithm in \citep{scarlett2018tight}, which achieves state-of-the-art $O(\sqrt{T \ln T})$ cumulative regret. Then, the natural question is whether there is further room for improvement in the existing regret upper bound of GP-UCB. This paper provides an affirmative answer to this question by showing that GP-UCB achieves $\tilde{O}(\sqrt{T})$ regret with high probability.  

\paragraph{Contribution.}
We summarize our contributions as follows.
\begin{itemize}
    \item We show that the GP-UCB proposed by \citet{srinivas10gaussian} achieves $\tilde{O}(\sqrt{T})$ regret with high probability under a Mat\'ern kernel with a certain degree of smoothness (precise condition is provided in \thmref{thm:cr_gp-ucb}). Here, $\tilde{O}(\cdot)$ is the order notation that hides polylogarithmic dependence. This result is comparable to state-of-the-art $O(\sqrt{T \ln T})$ regret provided by \citet{scarlett2018tight} up to a polylogarithmic factor and strictly improves upon the existing $\tilde{O}(T^{\frac{\nu+d}{2\nu+d}})$ upper bound of GP-UCB~\citep{srinivas10gaussian,vakili2021information}. Here, $d$ and $\nu$ denote the dimension of the input domain and smoothness parameter, respectively.
    \item Furthermore, for a squared exponential kernel, we establish $O\rbr{\sqrt{T \ln^2 T}}$ cumulative regret of GP-UCB. This improves the existing $O\rbr{\sqrt{T \ln^{d+2} T}}$ upper bound provided by \citet{srinivas10gaussian} for any $d \geq 1$.
    \item The key idea behind our analysis is to refine the existing information gain bounds by leveraging algorithm-dependent behavior and sample path properties of the GP.
    We also discuss the applicability of this technique to other algorithms and settings in \secref{sec:discuss}.
\end{itemize}

\subsection{Related Works}
BO has been extensively studied in the past few decades. Some of them are constructed so as to maximize the utility-based acquisition function defined through the GP posterior, including expected improvement~\citep{movckus1975bayesian}, knowledge gradient~\citep{frazier2009knowledge}, and the entropy-based algorithms~\citep{hennig2012entropy}.
The theoretical aspect of BO has also been actively studied through the lens of the bandit algorithms, such as GP-UCB~\citep{srinivas10gaussian}, Thompson sampling~\citep{Russo2014-learning}, and information directed sampling~\citep{russo2014learning}. In contrast to the noisy observation setting, which these algorithms focus on, algorithms for the noise-free setting form a separate line of research~\citep{freitas2012exponential,grunewalder2010regret,kawaguchi2015bayesian}.
Extensions of these algorithms to more advanced settings have also been well-studied, e.g., contextual~\citep{krause2011contextual}, parallel observation~\citep{desautels2014parallelizing}, high-dimensional~\citep{kandasamy2015high}, time-varying~\citep{bogunovic2016time}, and multi-fidelity setting~\citep{kandasamy2019multi}.
Unlike the Bayesian assumption on the objective function adopted in this paper, existing works also extensively study the frequentist assumption of the function, which is also referred to as the frequentist setting of BO or GP bandits~\citep{bull2011convergence,camilleri2021high,chowdhury2017kernelized,iwazaki2025improvedregretanalysisgaussian,li2022gaussian,salgiarandom,scarlett2017lower,vakili2021optimal,valko2013finite}. 

Among the existing studies, \citep{scarlett2018tight} is closely related to this paper, which propose a successive elimination-based algorithm and shows an $O(\sqrt{T\ln T})$ upper bound and an $\Omega(\sqrt{T})$ lower bound of the cumulative regret for a one-dimensional BO problem. 
The fundamental theoretical assumptions and the high-level idea of our analysis are built on the proof provided by \citet{scarlett2018tight}.
Following \citep{scarlett2018tight}, \citet{wang2022tight} also proves similar regret guarantees under the one-dimensional Brownian motion.

In addition to \citep{scarlett2018tight}, some parts of our analysis are inspired by the technique leveraged in \citep{cai2021lenient,janz2020bandit}. Firstly, \citet{cai2021lenient} studies the GP-UCB algorithm through a relaxed version of regret, which is called \emph{lenient regret}. In our analysis, the cumulative regret is decomposed into the lenient regret-based term, and we leverage their technique to analyze it. Secondly, \citet{janz2020bandit} proposed the input partitioning-based algorithm for obtaining a superior regret in the frequentist setting. Roughly speaking, the high-level idea of their analysis is based on the fact that tighter information gain bounds can be obtained within a properly shrinking partition of the input. The key idea provided in \secref{sec:intuitive_explain} is motivated by this fact, while our analysis itself is substantially different from that in~\citep{janz2020bandit}.

\section{Preliminaries}
\label{sec:prelim}
Let $f: \mX \rightarrow \R$ be a black-box objective function whose input domain $\mX$ is $\mX \coloneqq [0, r]^d$ with some $r > 0$.
At each step $t \in \N_+$, the learner chooses a query point $\bx_t \in \mX$, and then receives a noisy observation $y_t = f(\bx_t) + \epsilon_t$. Here, $\epsilon_t$ is a mean-zero noise random variable.
We consider a Bayesian setting, where the objective function $f$ and the noise sequence $(\epsilon_t)$ are drawn from a known zero-mean Gaussian process (GP) and a Gaussian distribution, respectively. We formally describe it using the following assumptions.

\begin{assumption}
    \label{asmp:func}
    Let $k: \mX \times \mX \rightarrow \R$ be the known positive definite kernel with $\forall \bx \in \mX, k(\bx, \bx) \leq 1$. Then, assume $f \sim \mathcal{GP}(0, k)$, 
    where $\mathcal{GP}(0, k)$ denotes the mean-zero GP characterized by the covariance function $k$. 
\end{assumption}

\begin{assumption}
    \label{asmp:noise}
    The noise sequence $(\epsilon_t)_{t \in \N_+}$ is mutually independent. Furthermore, assume $\epsilon_t \sim \mN(0, \sigma^2)$, where $\sigma > 0$ is the known constant.
    Here, $\mN(\mu, \sigma^2)$ denotes the Gaussian distribution with mean $\mu$ and variance $\sigma^2$. 
\end{assumption}

These are standard sets of assumptions in the existing theory of BO~\citep{Russo2014-learning,srinivas10gaussian}. Specifically, in \asmpref{asmp:func}, we focus on the following squared exponential (SE) kernel $\sek$ and Mat\'ern kernel $\matk$:
\begin{align}
    \sek(\bx, \tilde{\bx}) = \exp\rbr{-\frac{\|\bx - \tilde{\bx}\|_2^2}{2\ell^2}},~~
    \matk(\bx, \tilde{\bx}) = \frac{2^{1-\nu}}{\Gamma(\nu)} \rbr{\frac{\sqrt{2\nu} \|\bx - \tilde{\bx}\|_2}{\ell}}^{\nu} J_{\nu}\rbr{\frac{\sqrt{2\nu} \|\bx - \tilde{\bx}\|_2}{\ell}},
\end{align}
where $\ell > 0$ and $\nu > 0$ are the known lengthscale
and smoothness parameters, respectively. 
In addition, $J_{\nu}(\cdot)$ and $\Gamma(\cdot)$ respectively denote modified Bessel and Gamma functions.
Under Assumptions~\ref{asmp:func} and \ref{asmp:noise}, the learner can infer the function $f$ through the GP posterior distribution.
Let $\mH_t \coloneqq (\bx_i, y_i)_{i \leq t}$ be the history that the learner obtained up to the end of step $t$. 
Given $\mH_t$, the posterior distribution of $f$ is again GP, whose posterior mean and variance are respectively defined as
\begin{align}
    \mu(\bx; \bX_t, \by_t) &= \bk(\bX_t, \bx)^{\top} (\bK(\bX_t, \bX_t) + \sigma^2 \bI_t)^{-1} \bm{y}_t, \\
    \sigma^2(\bx; \bX_t) &= k(\bx, \bx) - \bk(\bX_t, \bx)^{\top} (\bK(\bX_t, \bX_t) + \sigma^2 \bI_t)^{-1} \bk_t(\bX_t, \bx), 
\end{align}
where $\bk(\bX_t, \bx) \coloneqq [k(\bx, \tilde{\bx})]_{\bx \in \bX_t}$ and $\by_t \coloneqq (y_1, \ldots, y_t)^{\top}$ are the $t$-dimensional kernel and output vectors, respectively. Here, we set $\bX_t = (\bx_1, \ldots, \bx_t)$.
Furthermore, $\bK(\bX_t, \bX_t) \coloneqq [k(\bx, \tilde{\bx})]_{\bx, \tilde{\bx} \in \bX_t}$ and $\bI_t$ denote $t\times t$-gram matrix and $t\times t$-identity matrix, respectively.

\paragraph{Learner's goal.}
Under the total step size $T \in \N_+$, the learner's goal is to minimize the cumulative regret $R_T \coloneqq \sum_{t \in [T]} f(\bx^{\ast}) - f(\bx_t)$, where $\bx^{\ast} \in \argmax_{\bx \in \mX} f(\bx)$ and $[T] = \{1, \ldots, T\}$.

\paragraph{Maximum information gain.}
To quantify the regret, the existing theory utilizes the following information-theoretic quantity $\gamma_T(\mX)$ arising from GP:
\begin{equation}
    \gamma_T(\mX) = \sup_{\bx_1, \ldots, \bx_T \in \mX} I(\bX_T), ~\mathrm{where}~ I(\bX_T) = \frac{1}{2} \ln \det (\bI_T + \sigma^{-2} \bK(\bX_T, \bX_T)).
\end{equation}
The quantity $\gamma_T(\mX)$ is referred to as the \emph{maximum information gain} (MIG) over $\mX$~\citep{srinivas10gaussian}, since $I(\bX_T)$ equals the mutual information between the function values $(f(\bx_t))_{t \in [T]}$ and the outputs $(y_t)_{t \in [T]}$ under Assumptions~\ref{asmp:func} and \ref{asmp:noise}, and the input sequence $\bX_t = (\bx_1, \ldots, \bx_t)$. MIG plays a vital role in the theoretical analysis of BO, and its increasing speed is analyzed in several commonly used kernels. For example, $\gamma_T(\mX) = O(\ln^{d+1} T)$ as $T \rightarrow \infty$ under $k = \sek$~\citep{srinivas10gaussian}. 
For the notational convenience, we also define $\gamma_i(\mX) = \gamma_{\lceil i \rceil}(\mX)$ for any non-integer $i > 0$.

\begin{algorithm}[t!]
    \caption{Gaussian process upper confidence bound}
    \label{alg:gp-ucb}
    \begin{algorithmic}[1]
        \REQUIRE Kernel $k$, confidence width parameters $(\beta_t^{1/2})_{t \in \N_+}$.
        \FOR {$t = 1, 2, \ldots$}
            \STATE $\bm{x}_{t} \leftarrow \mathrm{arg~max}_{\bm{x} \in \mathcal{X}} \mu(\bx; \bX_{t-1}, \by_{t-1}) + \beta_t^{1/2} \sigma(\bx; \bX_{t-1})$.
            \STATE Observe $y_t$ and update the posterior mean and variance.
        \ENDFOR
    \end{algorithmic}
\end{algorithm}

\paragraph{Probabilistic property of GP sample path.}
The existing theory of GP-UCB under the Bayesian setting utilizes 
the regularity conditions of the realized sample path of GP.
We summarize the existing known properties of the GP sample path in the following lemmas.

\begin{lemma}[Lipchitz condition of sample path, e.g., \citep{srinivas10gaussian}]
    \label{lem:lip}
    Suppose $k = \sek$ or $k = \matk$ with $\nu > 2$. Assume \asmpref{asmp:func}.
    Then, there exist the constants $a, b > 0$ such that
    \begin{equation}
        \forall L > 0,~ \Pr\rbr{ \forall \bx, \tilde{\bx} \in \mX, ~ |f(\bx) - f(\tilde{\bx})| \leq L \|\bx - \tilde{\bx}\|_1 } \geq 1 - d a \exp\rbr{-\frac{L^2}{b^2}}.
    \end{equation}
\end{lemma}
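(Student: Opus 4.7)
The plan is to bound each partial derivative of $f$ uniformly on $\mX$ via Gaussian concentration, and then convert these coordinate-wise bounds into a Lipschitz estimate by the fundamental theorem of calculus. The smoothness condition ($k = \sek$, or $k = \matk$ with $\nu > 2$) is exactly what is needed to guarantee that pathwise partial derivatives exist as well-defined Gaussian processes with bounded supremum.

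First, I would verify that under the stated smoothness of $k$, the sample paths of $f$ are almost surely continuously differentiable on $\mX$, so that the mean-square partial derivatives $g_i(\bx) \coloneqq \partial_i f(\bx)$, $i \in [d]$, coincide with the pathwise derivatives. For $k = \sek$ this is immediate since $k$ is $C^\infty$; for $\matk$ with $\nu > 2$, classical results (e.g.\ Ghosal--Roy or Stein) show that the Mat\'ern sample paths are in $C^1(\mX)$. Each $g_i$ is then a centered Gaussian process whose covariance $\partial^2 k / \partial x_i \partial \tilde{x}_i$ is bounded and continuous on the compact set $\mX \times \mX$, hence the variance of $g_i$ is uniformly bounded on $\mX$.

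Second, I would apply a Borell--TIS type concentration inequality (or Dudley's entropy bound) to each supremum $M_i \coloneqq \sup_{\bx \in \mX} |g_i(\bx)|$ to conclude that there exist constants $a', b' > 0$, depending only on $k$ and $\mX$, such that for all $L > 0$ and $i \in [d]$,
\begin{equation}
    \Pr(M_i > L) \leq a' \exp\rbr{-L^2 / b'^2}.
\end{equation}
Taking a union bound over the $d$ coordinates yields
\begin{equation}
    \Pr\rbr{\max_{i \in [d]} M_i > L} \leq d a' \exp\rbr{-L^2 / b'^2}.
\end{equation}
On the complementary event, integration along an axis-aligned polygonal path from $\tilde{\bx}$ to $\bx$ combined with the fundamental theorem of calculus gives $|f(\bx) - f(\tilde{\bx})| \leq \sum_{i=1}^d M_i |x_i - \tilde{x}_i| \leq L \|\bx - \tilde{\bx}\|_1$, which is the claimed inequality after renaming $a \coloneqq a'$ and $b \coloneqq b'$.

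The main obstacle is the first step: justifying that $f$ has $C^1$ sample paths and that the Gaussian concentration applies to $\sup_{\bx} |g_i(\bx)|$ pathwise, not just in the mean-square sense. This is a classical but nontrivial regularity argument that is precisely where the Mat\'ern smoothness restriction $\nu > 2$ enters (so that $k$ is $C^2$ near the diagonal with some margin, allowing Kolmogorov--\v{C}entsov to be invoked for the derivative process). Once this is settled, the remaining steps are standard Gaussian supremum estimates and a union bound.
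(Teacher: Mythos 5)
The paper does not prove \lemref{lem:lip} itself; it cites \citet{srinivas10gaussian}, whose argument (via Theorem~5 of Ghosal and Roy) is exactly the one you sketch: Gaussian-tail concentration for $\sup_{\bx}|\partial_i f(\bx)|$ in each coordinate, a union bound over the $d$ coordinates (which is where the factor $da$ comes from), and integration along an axis-aligned path to produce the $\|\cdot\|_1$ Lipschitz bound. Your proposal is correct and follows essentially this same route; the only imprecision is the remark that $\nu>2$ is needed so that $k$ is ``$C^2$ near the diagonal'' --- the actual requirement behind the cited concentration result is existence and H\"older continuity of the fourth-order mixed partials of $k$ (so that the derivative field itself is a sufficiently regular Gaussian process admitting the Borell--TIS/Dudley bound), and it is this stronger condition that forces $\nu>2$ for the Mat\'ern kernel.
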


\begin{lemma}[Sample path condition for the global maximizer, e.g., \citep{de2012regret,freitas2012exponential,scarlett2018tight}]
    \label{lem:maximum}
    Suppose $k = \sek$ or $k = \matk$ with $\nu > 2$. Assume \asmpref{asmp:func}.
    Then, for any $\dGP \in (0, 1)$, there exist the strictly positive constants $\cgap, \csup, \cquad, \rquad > 0$ such that the following statements simultaneously hold with probability at least $1 - \dGP$:
    \begin{enumerate}
        \item The function $f$ has a unique maximizer $\bx^{\ast} \in \mX$ such that $f(\bx^{\ast}) > f(\tilde{\bx}^{\ast}) + \cgap$ holds for any local maximizer $\tilde{\bx}^{\ast} \in \mX$ of $f$.

        \item The sup-norm of the sample path is bounded as $\|f\|_{\infty} \leq \csup$.

        \item The function $f$ satisfies $\forall \bx \in \mB_2(\rquad; \bx^{\ast}),~ f(\bx^{\ast}) - \cquad \|\bx^{\ast} - \bx\|_2^2 \geq f(\bx)$, where $\mB_2(\rquad; \bx^{\ast}) \coloneqq \{\bx \in \mX \mid \|\bx^{\ast} - \bx \|_2 \leq \rquad \}$ is the L2-ball on $\mX$, whose radius and center are $\rquad$ and $\bx^{\ast}$, respectively\footnote{In the earlier version and conference version of the paper, we claim that the linear condition holds when $\bx^{\ast}$ is on the boundary of $\mX$. However, the linear condition as in \citep{scarlett2018tight} is only valid under $d = 1$, since the partial derivative parallel to the boundary may become zero unless $\bx^{\ast}$ is on the corner of $\mX$. Therefore, for $d \geq 2$, even when $\bx^{\ast}$ is on the boundary, we also have to consider the quadratic condition as with the proof of Theorem~5 in \citep{de2012regret}. This modification only affects the constant factors in our regrets.}
    \end{enumerate}
\end{lemma}

\lemref{lem:lip} states that the sample path $f$ of GP is a Lipschitz function with high probability. 
This property is leveraged in the theory of GP-UCB to control the discretization error arising from the confidence bound construction in the continuous input domain. As described in \citep{srinivas10gaussian}, \lemref{lem:lip} is a direct consequence of Theorem~5 in \cite{ghosal2006posterior} under the existence of fourth-order mixed partial derivatives of the kernel, which are satisfied under $k = \sek$ and $k = \matk$ with $\nu > 2$\footnote{Differentiability of $\matk$ is derived in the existing works, e.g., Chapter~2.7 in \citep{stein1999interpolation}.}.
\lemref{lem:maximum} specifies 
the regularity condition of $f$ related to the maximizer $\bx^{\ast}$. 
Here, property 1 is implied from the fact that the GP-sample path has a unique maximizer almost surely under $\sek$ and $\matk$~\citep[e.g., Lemma 2.6 in][]{kim1990cube}. Property 2 is implied from, e.g., the compactness of $\mX$ and the almost-sure continuity of the sample path under $\sek$ and $\matk$.
Property 3 also holds automatically under $k = \sek$ and $k = \matk$ with $\nu > 2$ and is used in existing works.
See Theorem 5 in \citep{de2012regret}, Assumption 3 in \citep{scarlett2018tight}, and the discussions provided by them for further details.
Note that the properties in \lemref{lem:maximum} are not used in the existing proof of GP-UCB in \citep{srinivas10gaussian}. As described in the next section, we analyze the realized input sequence $\bX_T$ of GP-UCB by relating it to conditions in \lemref{lem:maximum}.

\paragraph{Summary of existing analysis of GP-UCB.}
We briefly summarize the existing analysis of GP-UCB (\algoref{alg:gp-ucb}) provided by~\citet{srinivas10gaussian}. 
Based on Assumptions~\ref{asmp:func} and \ref{asmp:noise}, we can construct the high-probability confidence bound of the underlying function value $f(\bx)$ for each $\bx$ and $t \in \N_+$ through the posterior distribution of $f(\bx)$. Specifically, by choosing a properly designed finite representative input set $\mX_t \subset \mX$ and taking into account the discretization error with \lemref{lem:lip}, \citet{srinivas10gaussian} showed the following events hold simultaneously with probability at least $1-\delta$:
\begin{enumerate}
    \item \textbf{Confidence bound.} For any $t \in \N_+$, the function value at the queried point $\bx_t$ satisfies $\mu(\bx_t; \bX_{t-1}, \by_{t-1}) - \beta_t^{1/2} \sigma(\bx_t; \bX_{t-1}) \leq f(\bx_t)$. Furthermore, for any $t \in \N_+$, any function value $f(\bx)$ on $\mX_t$ satisfies $f(\bx) \leq \mu(\bx; \bX_{t-1}, \by_{t-1}) + \beta_t^{1/2} \sigma(\bx; \bX_{t-1})$.
    \item \textbf{Discretization error.} The discretization error arising from $\mX_t$ is at most $1/t^2$. Namely, $|f(\bx) - f([\bx]_t)| \leq 1/t^2$ holds for any $\bx \in \mX$ and $t \in \N_+$, where $[\bx]_t$ denotes one of the closest points of $\bx$ on $\mX_t$.
\end{enumerate}

In the above statements, $\beta_t^{1/2}$ is chosen based on the constants $a$, $b$ in \lemref{lem:lip} and the length $r$ of $\mX$, and is defined as
\begin{equation}
    \label{eq:beta}
    \beta_t = 2 \ln \frac{2 t^2 \pi^2}{3\delta} + 2d \ln \rbr{t^2dbr \sqrt{\ln \frac{4da}{\delta}}}.
\end{equation}
The above two events and the UCB-selection rule for $\bx_t$ imply
\begin{align}
    \label{eq:simple_reg_ub}
    R_T  = \sum_{t=1}^T f(\bx^{\ast}) - f([\bx^{\ast}]_t) + \sum_{t=1}^T f([\bx^{\ast}]_t) - f(\bx_t) \leq
    \frac{\pi^2}{6} + 2 \beta_T^{1/2} \sum_{t=1}^T \sigma(\bx_t; \bX_{t-1}).
\end{align}
In the above expression, the upper bound $\sum_{t=1}^T f(\bx^{\ast}) - f([\bx^{\ast}]_t) \leq \sum_{t=1}^{T} 1/t^2 \leq \pi^2/6$ follows from the second event (discretization error). The inequality $\sum_{t=1}^T f([\bx^{\ast}]_t) - f(\bx_t) \leq 2 \beta_T^{1/2} \sum_{t=1}^T \sigma(\bx_t; \bX_{t-1})$ also follows from the first event (confidence bound) and the definition of $\bx_t$. See the proof of Theorem~2 in \citep{srinivas10gaussian} for details.
The above inequality suggests that the regret upper bound of GP-UCB depends on the sum of the posterior standard deviations $\sum_{t=1}^T \sigma(\bx_t; \bX_{t-1})$.
\citet{srinivas10gaussian} provides the upper bound of this term by leveraging the information gain $I(\bX_T)$ as follows:
\begin{equation}
    \label{eq:sigma_gamma}
    \sum_{t=1}^T \sigma(\bx_t; \bX_{t-1}) \leq \sqrt{C T I(\bX_T)} \leq \sqrt{C T \gamma_T(\mX)},
\end{equation}
where $C = \frac{2}{\ln (1 + \sigma^{-2})}$. From Eqs.~\eqref{eq:simple_reg_ub} and \eqref{eq:sigma_gamma}, we conclude that the regret upper bound of GP-UCB is $O\rbr{\sqrt{\beta_T T \gamma_T(\mX)}}$ with probability at least $1-\delta$. By combining the explicit upper bound of $\gamma_T(\mX)$~\citep{srinivas10gaussian,vakili2021information}, we also obtain $O\rbr{\sqrt{T \ln^{d+2} T}}$ and $\tilde{O}\rbr{T^{\frac{\nu + d}{2\nu + d}}}$ regret upper bounds for SE and Mat\'ern kernels, respectively.

\section{Improved Regret Bound for GP-UCB}
\label{sec:reg_ub_gpucb}

The following theorem presents our main result: a new regret upper bound for GP-UCB.
\begin{theorem}[Improved regret upper bound for GP-UCB]
    \label{thm:cr_gp-ucb}
    Suppose Assumptions~\ref{asmp:func} and \ref{asmp:noise} hold. Set $k = \sek$ or $k = \matk$ with $\nu > 2$. Furthermore, assume that $d, \nu, \ell$, $r$ , and $\sigma^2$ are fixed constants. Fix any $\dGP \in (0, 1)$, and set the confidence width parameter $\beta_t$ of GP-UCB as defined in Eq.~\eqref{eq:beta} with any fixed $\delta \in (0, 1 - \dGP)$.
    Then, with probability at least $1 - \dGP - \delta$, the cumulative regret of GP-UCB (\algoref{alg:gp-ucb}) satisfies
    \begin{equation}
        R_T = \begin{cases}
            \tilde{O}\rbr{\sqrt{T}} & \mathrm{if}~ k = \matk~\mathrm{with}~2\nu + d \leq \nu^2, \\
            O\rbr{\sqrt{T \ln^2 T}} & \mathrm{if}~ k = \sek.
        \end{cases} 
    \end{equation}
    The hidden constants in the above expressions may depend on $\ln (1/\delta), d, \nu, \ell, r, \sigma^2$, and the constants $\csup, \cgap$, $\rquad, \cquad$ corresponding with $\dGP$, which are guaranteed to exist by \lemref{lem:maximum}.
\end{theorem}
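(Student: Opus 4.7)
The plan is to strengthen the standard analysis recalled in the excerpt by combining a threshold-based (``lenient-regret'') split of the time horizon with a \emph{localized} information-gain bound that exploits how the GP-UCB queries concentrate around $\bx^{\ast}$. I first condition on the intersection of the three high-probability events in play: the \citet{srinivas10gaussian} confidence-bound event (with $\beta_t$ as in Eq.~\eqref{eq:beta}, failure probability $\leq \delta$), the Lipschitz sample-path property of \lemref{lem:lip} (used implicitly in the discretization), and the global regularity conditions of \lemref{lem:maximum} (failure probability $\leq \dGP$). On this intersection one has $r_t := f(\bx^{\ast}) - f(\bx_t) \leq 2\beta_t^{1/2}\sigma_t + 1/t^2$ with $\sigma_t = \sigma(\bx_t; \bX_{t-1})$. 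I then pick a threshold $h = h(T) > 0$ (to be optimized) and split $[T] = A \cup B$ with $A = \{t : r_t \leq h\}$ and $B = \{t : r_t > h\}$, in the spirit of \citet{cai2021lenient}.

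Next I would establish a geometric localization for steps in $A$: once $h < \min(\cgap, \cquad \rquad^2)$ in the interior-maximizer case (and the analogue with $\clin, \rlin$ on the boundary), any $\bx$ with $f(\bx) \geq f(\bx^{\ast}) - h$ must lie in $\mB_2(\rquad; \bx^{\ast})$, because on its complement the gap condition together with the quadratic lower bound force $f \leq f(\bx^{\ast}) - \min(\cgap, \cquad \rquad^2)$. The quadratic (or linear) inequality of \lemref{lem:maximum} then yields $\|\bx_t - \bx^{\ast}\| \leq D(h)$ for every $t \in A$, where $D(h) = \sqrt{h/\cquad}$ (interior) or $D(h) = h/\clin$ (boundary). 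Writing $B_A := \mB_2(D(h); \bx^{\ast}) \cap \mX$, all $A$-queries live in $B_A$, and its volume shrinks with $h$.

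For $R_T^{(A)} = \sum_{t\in A} r_t$, I would reuse $r_t \leq 2\beta_T^{1/2}\sigma_t + 1/t^2$ and Cauchy--Schwarz to reduce to controlling $\sum_{t\in A}\sigma_t^2$. The key trick is that posterior variance is monotone decreasing in the conditioning set: letting $\bX_{t-1}^A$ denote the queries in $A$ strictly before $t$, one has $\sigma_t^2(\bx_t;\bX_{t-1}) \leq \sigma_t^2(\bx_t; \bX_{t-1}^A)$. Applying the standard posterior-variance-to-MIG bound underlying Eq.~\eqref{eq:sigma_gamma} to the $A$-subprocess then gives $\sum_{t\in A}\sigma_t^2 \leq C\,\gamma_{|A|}(B_A) \leq C\,\gamma_T(B_A)$, so $R_T^{(A)} = O\bigl(\beta_T^{1/2}\sqrt{T\gamma_T(B_A)}\bigr)$. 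For $R_T^{(B)}$ the confidence bound forces $\sigma_t \gtrsim h/\beta_T^{1/2}$ on $B$ once $1/t^2 \leq h/2$, and combining with the \emph{global} bound $\sum_{t \leq T}\sigma_t^2 \leq C\gamma_T(\mX)$ produces $|B| = O(\beta_T\gamma_T(\mX)/h^2)$; since $r_t \leq 2\csup$ by \lemref{lem:maximum}, $R_T^{(B)} = O\bigl(\csup\,\beta_T\,\gamma_T(\mX)/h^2\bigr)$.

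Combining the two pieces gives $R_T \lesssim \beta_T^{1/2}\sqrt{T\gamma_T(B_A)} + \beta_T\gamma_T(\mX)/h^2$; it remains to substitute kernel-specific MIG estimates on the small ball $B_A$ of radius $D(h)$ and optimize over $h$. For SE, the localized MIG collapses to a near-dimension-free polylog, and the balance produces $O(\sqrt{T\ln^4 T})$. For Mat\'ern, $\gamma_T(B_A)$ scales like $D(h)^{\alpha} T^{d/(2\nu+d)}$ up to logs for an exponent $\alpha$ depending on $(d,\nu)$; the balanced optimum collapses to $\tilde O(\sqrt T)$ exactly when the smoothness is large enough that the $D(h)$-dependent savings beat the $1/h^2$ blow-up, which is precisely the role of the conditions $2\nu + d \leq \nu^2$ and $4\nu^2 + 2\nu d \geq d^2$ in the statement, with the boundary case (where $D(h) \propto h$ rather than $\sqrt h$) being the binding one. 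The main obstacle I expect is proving a sufficiently sharp upper bound on $\gamma_T(B_A)$ as a function of the ball radius --- tight enough after localization that $R_T^{(A)}$ does not dominate $R_T^{(B)}$ at the optimal $h$ --- and doing so uniformly across the interior and boundary cases with their different scalings of $D(h)$ in $h$.
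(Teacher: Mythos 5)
Your proposal has the right architecture and correctly identifies every ingredient the paper uses: conditioning on the confidence-bound event plus \lemref{lem:maximum}, a lenient-regret split of $[T]$, localization of the near-optimal queries into a ball $\mB_2(D(\cdot);\bx^{\ast})$ via the linear/quadratic growth conditions, the monotonicity-of-variance trick to bound the sub-sequence by a \emph{localized} information gain, and the need for a radius-explicit MIG bound (which is indeed the technically heavy part the paper spends its appendix on). The SE case goes through essentially as you describe. However, for the Mat\'ern case there is a genuine gap: a \emph{single} $T$-dependent threshold $h$ cannot reach the conditions stated in the theorem. The tension is that $\gamma_T(B_A)$ is evaluated at time index $T$ with radius $D(h)$, so to kill the $T^{d/(2\nu+d)}$ growth you need $D(h)=\tilde{O}(T^{-1/(2\nu)})$, while controlling $|B|$ forces $h$ to stay above roughly $\beta_T\gamma_T(\mX)/\sqrt{T}$. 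In the interior (quadratic) case $D(h)=\sqrt{h/\cquad}$ — which, contrary to your remark, is the \emph{binding} case, since $\sqrt{h}\gg h$ for small $h$ — the two requirements are compatible only when $2\nu^2 \geq \nu d + 4\nu + 2d$ (and optimizing $h$ to balance the two terms gives exactly the same condition). This is strictly stronger than the theorem's $2\nu+d\leq\nu^2$, i.e.\ $2\nu^2\geq 4\nu+2d$: for instance, Mat\'ern-$5/2$ with $d=1$ satisfies the theorem's hypotheses ($6\leq 6.25$) but fails yours ($12.5\not\geq 14.5$). Using the cruder bound $R_T^{(B)}\leq 2\csup|B|$ makes this worse still.

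What the paper does instead (Lemma~\ref{lem:r2}) is a multi-scale peeling that your single split cannot replicate. It first fixes a \emph{constant} $\varepsilon=\min\{\cgap,\clin\rlin,\cquad\rquad^2\}$ for the lenient-regret part, so that $|\mT(\varepsilon)|=O(\beta_T\gamma_T(\mX))=o(T)$ and that term is handled separately (this is where $4\nu^2+2\nu d\geq d^2$ enters). Then, inside $\mT^c(\varepsilon)$, it introduces a dyadic sequence of thresholds $\eta_i\asymp\sqrt{\beta_T\overline{\gamma}_{T_i}/T_i}$ with $T_i=T/2^{i-1}$, chosen self-consistently so that the worst-case subset-regret bound forces $|\mT(\eta_i)|\leq T/2^i$. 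The shell of steps with gap in $(\eta_i,\eta_{i-1}]$ therefore has cardinality at most $T_i$, and its localized MIG is evaluated at time index $T_i$ with radius $\propto\eta_i$ or $\sqrt{\eta_i}$ — both shrinking as a power of $T_i$. Because the time index and the radius shrink \emph{together}, each of the $O(\ln T)$ shells contributes $\tilde{O}(\sqrt{\beta_T T})$ precisely under $2\nu+d\leq\nu^2$, which is how the theorem's stated condition is attained. To repair your argument you would need to replace the single threshold $h(T)$ with this shell decomposition (or an equivalent multi-resolution device); as written, the Mat\'ern half of the theorem is not established on its full stated range of $(\nu,d)$.
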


We would like to note the following three aspects of our results. First, the constants associated with the sample path properties defined in \lemref{lem:maximum} are used solely for analyzing the regret. On the other hand, the existing algorithm provided by \citet{scarlett2018tight}, which shows the same $\tilde{O}(\sqrt{T})$ regret as ours, requires prior information about these constants for the algorithm run. This is often unrealistic in practice. Secondly, our result does not imply the upper bound of Bayesian expected regret $\Ep[R_T]$. The main issue is that the dependence of the constants in \lemref{lem:maximum} on $\dGP$ is not explicitly known. We leave future work to break this limitation; however, note that the same limitation exists in the algorithm provided by \citet{scarlett2018tight}. Thirdly, our results in \thmref{thm:cr_gp-ucb} only focus on the dependence of the total step size $T$ in the regret. Therefore, we cannot claim any improvements of the regret on the dependence of the other parameters. For example, compared to the existing $R_T = O(\sqrt{T \ln^{d+2} T})$ regret under $k = \sek$, our regret upper bound $R_T = O(\sqrt{T \ln^2 T})$ indeed avoids the dependence of $d$ in the logarithmic factor; however, under the joint limit of $d$ and $T$ ($d, T \rightarrow \infty$), it easily behaves super-linearly even under the slowly increasing $d$ (e.g., $d = \Theta(\ln \ln T)$) due to the hidden constants in the regret.

\subsection{Intuitive Explanation of our Analysis}
\label{sec:intuitive_explain}
Before we describe the proof, we provide an intuitive explanation of why GP-UCB achieves a tighter regret than the existing $O(\sqrt{\beta_T T\gamma_T(\mX)})$ upper bound. The motivation for our new analysis comes from the observation that the upper bound of the information gain: $I(\bX_T) \leq \gamma_T(\mX)$ in Eq.~\eqref{eq:sigma_gamma} is not always tight depending on the specific realization of the input sequence $\bX_T$. To see this, let us observe the following two simple extreme cases of $\bX_T$ where the inequality $I(\bX_T) \leq \gamma_T(\mX)$ is loose and tight:
\begin{itemize}
    \item \textbf{Case I: $I(\bX_T) \leq \gamma_T(\mX)$ is loose}: Let us assume all the input is equal to the unique maximizer $\bx^{\ast}$ (namely, $\forall t \in [T], \bx_t = \bx^{\ast}$). Then, when the kernel function satisfies $\forall \bx \in \mX, k(\bx, \bx) = 1$ as with $\sek$ and $\matk$, we have:
    \begin{equation}
        I(\bX_T) = \frac{1}{2} \ln \det (\bI_T + \sigma^{-2} \bK(\bX_T, \bX_T)) = \frac{1}{2} \sum_{i = 1}^T \ln (1 + \sigma^{-2} \lambda_i) = \frac{1}{2} \ln (1 + \sigma^{-2} T),
    \end{equation}
    where $\lambda_i$ is the $i$-th eigenvalue of $\bK(\bX_T, \bX_T) = \bm{1}\bm{1}^{\top}$ with $\bm{1} = (1, \ldots, 1)^{\top} \in \R^T$. The third equation uses the fact that $\bm{1}\bm{1}^{\top}$ is rank $1$, and 
    its unique non-zero eigenvalue is $T$. 
    \item \textbf{Case II: $I(\bX_T) \leq \gamma_T(\mX)$ is tight}: Let us assume that $(\bx_t)$ is the same as the input sequence generated by the maximum variance reduction (MVR) algorithm (namely, $\forall t \in [T], \bx_t \in \argmax_{\bx \in \mX} \sigma(\bx; \bX_{t-1})$)~\citep{srinivas10gaussian,vakili2021optimal}. Then, from the discussion in Sections~2 and 5 in \citep{srinivas10gaussian}, we already know that $\gamma_T(\mX) \leq (1 - 1/e)^{-1} I(\bX_T)$. This suggests that $I(\bX_T) \leq \gamma_T(\mX)$ is tight up to a constant factor when $\bX_T$ is realized by MVR.
\end{itemize}
From Case I, we observe that $I(\bX_T)$ satisfies $\Theta(\ln T) \leq I(\bX_T) \leq \gamma_T(\mX)$ depending on $\bX_T$. Furthermore, by comparing the input sequences in cases I and II, we expect that $I(\bX_T)$ becomes small if $\bX_T$ concentrates around the neighborhood of $\bx^{\ast}$, while $I(\bX_T)$ becomes large if $\bX_T$ spreads over the entire input domain $\mX$. Then, from the fact that the worst-case regret of GP-UCB increases sub-linearly with the speed of $O(\sqrt{\beta_T T \gamma_T(\mX)})$, we can deduce that the input sequence $\bX_T$ of GP-UCB will eventually concentrate around the maximizer $\bx^{\ast}$ if $\bx^{\ast}$ is unique and $\|f\|_{\infty}$ is not extremely small\footnote{Specifically, if $T \|f\|_{\infty} \leq O(\sqrt{\beta_T T \gamma_T(\mX)})$, we cannot make any claims about $\bX_T$ based on the worst-case bound since any sequence $\bX_T$ satisfies the worst-case bound without concentrating around maximizer.
This is why our analysis technique does not improve the worst-case regret in the frequentist setting. Indeed, in the proof of the worst-case lower bound for the frequentist setting~\citep{scarlett2017lower}, the existence of the function $f$ with $T \|f\|_{\infty} = O(\sqrt{\beta_T T \gamma_T(\mX)})$ is guaranteed.
}. 
We provide an illustrative image in \figref{fig:tight_mig_image}. 
Our proof is designed so as to capture the above intuition that $I(\bX_T)$ could be improved from $\gamma_T(\mX)$ to $\Theta(\ln T)$ under “favorable” sample path $f$. 

\begin{figure}[t]
    \centering
    \includegraphics[width=0.40\linewidth]{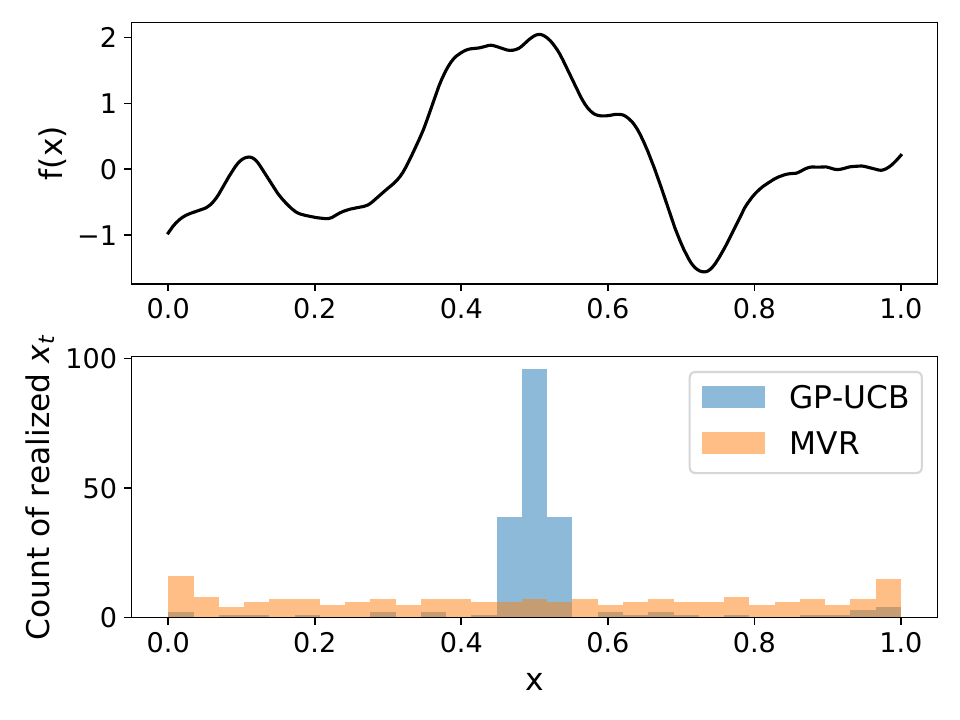}
    \includegraphics[width=0.45\linewidth]{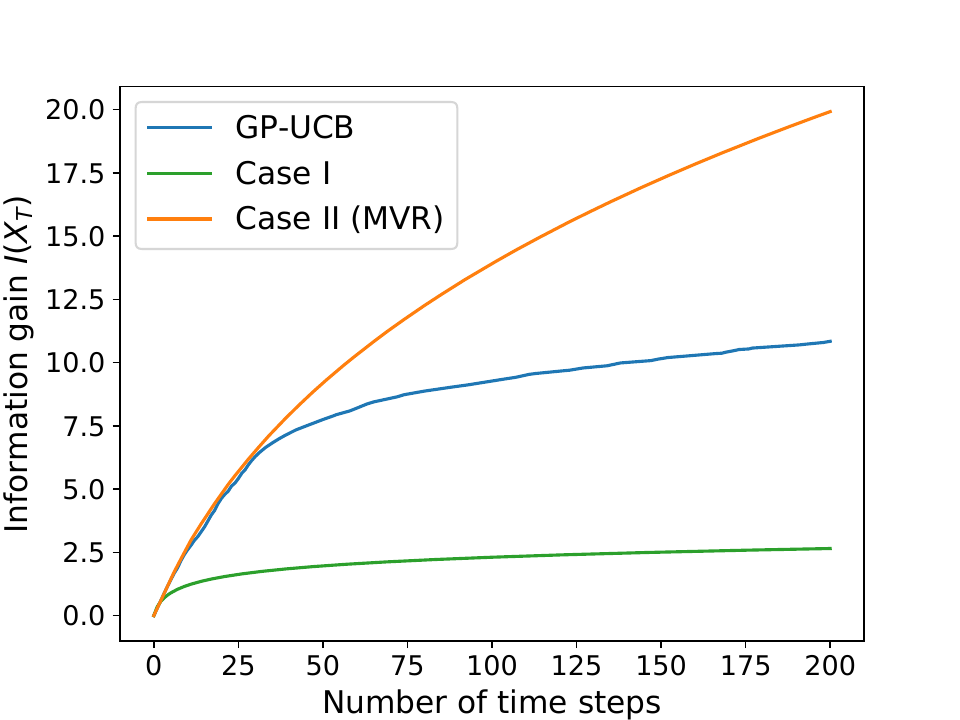}
    \caption{The behavior of the realized input sequence $\bX_T$ (left) and the corresponding information gain $I(\bX_T)$ (right) in the 1-dimensional BO problem with $\sigma^2 = 3$. The top left figure shows the objective function $f$ realized by GP under $k = \matk$ with $\nu = 5/2$. The bottom left figure shows the histogram of the realized inputs: $(\bx_t)_{t \in [200]}$ with GP-UCB (blue) and MVR (orange) under $f$ in the top left figure. Furthermore, the right plot shows the corresponding information gain $I(\bX_t)$ under GP-UCB or MVR. We also plot $I(\bX_t) \coloneqq 0.5 \ln (1 + \sigma^{-2} t)$, corresponding to Case I described in \secref{sec:intuitive_explain}.
    We can observe that the inputs selected by GP-UCB are concentrated around the maximizer from the left figure. Then, from the right figure, we also observe that the corresponding information gain increases more slowly than that of MVR, and behaves similarly to Case I on $t \geq 30$. More comprehensive empirical results are also provided in \appref{sec:experiment}.} 
    \label{fig:tight_mig_image}
\end{figure}

\subsection{Proof of \thmref{thm:cr_gp-ucb}}

   Let $\mA$ be an event such that the two high-probability events of the original GP-UCB proof (described in the last paragraph in \secref{sec:prelim}) and \lemref{lem:maximum} with the confidence level $\dGP$ simultaneously hold. 
    Note that event $\mA$ occurs with probability at least $1 - \dGP - \delta$ from the union bound.
    Therefore, it is enough to prove our upper bound under $\mA$.
    To encode the high-level idea in the previous section, we need to capture the concentration behavior of the input sequence $\bX_T$ around the maximizer $\bx^{\ast}$.
    From this motivation, given some constant $\varepsilon > 0$, we decompose the regret as $R_T = R_T^{(1)}(\varepsilon) + R_T^{(2)}(\varepsilon)$, where:
    \begin{align}
        R_T^{(1)}(\varepsilon) = \sum_{t \in \mT(\varepsilon)} f(\bx^{\ast}) - f(\bx_t),~
        R_T^{(2)}(\varepsilon) = \sum_{t\in \mT^c(\varepsilon)} f(\bx^{\ast}) - f(\bx_t).
    \end{align}
    We set $\mT(\varepsilon) = \{t \in [T] \mid f(\bx^{\ast}) -f(\bx_t) > \varepsilon\}$ and $\mT^c(\varepsilon) = [T]\setminus \mT(\varepsilon)$ in the above definition.
    A key observation is that, if we set sufficiently small $\varepsilon$ depending on the constants in \lemref{lem:maximum}, the inputs $(\bx_t)$ in $R_T^{(2)}(\varepsilon)$ (namely, inputs $(\bx_t)$ such that $f(\bx^{\ast}) - f(\bx_t) \leq \varepsilon$ holds) are on the locally quadratic region around the maximizer $\bx^{\ast}$ due to conditions 1 and 3 in \lemref{lem:maximum}. The formal descriptions are provided in \lemref{lem:sogap_maximizer} in Appendix~\ref{sec:auxiliary_lemma}. This fact is originally leveraged in \citep{scarlett2018tight} to analyze the successive elimination-based algorithm. In the analysis of GP-UCB, it enables us to analyze the behavior of the sub-input sequence $\{\bx_t \mid f(\bx^{\ast}) - f(\bx_t) \leq \varepsilon\}$ through the regularity constant $\cquad$. Below, we formally give the upper bound for $R_T^{(2)}(\varepsilon)$.
    \begin{lemma}[General upper bound of $R_T^{(2)}$]
        \label{lem:r2}
        Suppose $(\bx_t)_{t \in [T]}$ is the input query sequence realized by the GP-UCB algorithm.
        Furthermore, let $\overline{\gamma}_t$ is the upper bound of MIG $\gamma_t(\mX)$ such that $\overline{\gamma}_t/t$ is non-increasing on $[\overline{T}, \infty)$ with some $\overline{T} \in \N_+$\footnote{Namely, $\forall t \geq \overline{T}, \forall \epsilon \geq 0, \overline{\gamma}_t/t \geq \overline{\gamma}_{t+\epsilon}/(t+\epsilon)$ and 
        $\forall t \geq \overline{T}, \gamma_t(\mX) \leq \overline{\gamma}_t$ hold for some $\overline{T} \in \N_+$.}. Then, under event $\mA$, we have
        \begin{align*}
            R_T^{(2)}(\varepsilon) \leq 2\csup \overline{T} + \frac{\pi^2}{3} \rbr{\log_2 T + 1} + \frac{2 \sqrt{2 C \beta_T T}}{\sqrt{2} - 1} \max_{i \in [\overline{i}]} \sqrt{\gamma_{(T/2^{i-1})}\rbr{\mB_2\rbr{\sqrt{\cquad^{-1} \eta_i}; \bx^{\ast}}}},
        \end{align*}
        where $C = 2/\ln(1 + \sigma^{-2})$, $\overline{i} = \lfloor \log_2 \frac{T}{\overline{T}} \rfloor + 1$, 
        $\eta_i = \frac{2 \rbr{2\sqrt{C \beta_T (T/2^{i-1}) \overline{\gamma}_{T/2^{i-1}}} + \frac{\pi^2}{6}}}{(T/2^{i-1})}$, and $\varepsilon = \min\{\cgap, \cquad \rquad^2\}$.
    \end{lemma}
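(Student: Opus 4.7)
The plan is to exploit that, under event $\mA$, each $\bx_t$ with $f(\bx^{\ast})-f(\bx_t)\leq\varepsilon$ must lie close to the unique maximizer $\bx^{\ast}$, so that the global maximum information gain $\gamma_T(\mX)$ can be replaced by a localized one on a shrinking ball around $\bx^{\ast}$. With $\varepsilon=\min\{\varepsilon_1,\varepsilon_2\}$, conditions 1, 3, and 4 of \lemref{lem:maximum} (as packaged in \lemref{lem:sogap_maximizer}) imply that any $\bx_t$ with gap $\Delta_t\coloneqq f(\bx^{\ast})-f(\bx_t)\in(0,\varepsilon]$ satisfies $\bx_t\in\mB_2(\clin^{-1}\Delta_t;\bx^{\ast})$ when $\bx^{\ast}$ is on the boundary of $\mX$ and $\bx_t\in\mB_2(\sqrt{\cquad^{-1}\Delta_t};\bx^{\ast})$ when $\bx^{\ast}$ is interior. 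The individual $\Delta_t$ are not known a priori, so I control them through a peeling argument on their sorted values.

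Order the gaps in $\mT^c(\varepsilon)$ as $\Delta_{(1)}\geq\Delta_{(2)}\geq\cdots\geq\Delta_{(M)}$, let $\hat\bx_k$ be the query achieving $\Delta_{(k)}$ at original step $\hat t_k$, and set $m_i\coloneqq T/2^{i-1}$. Since under $\mA$ the pointwise inequality $f(\bx^{\ast})-f(\bx_t)\leq 1/t^2+2\beta_T^{1/2}\sigma(\bx_t;\bX_{t-1})$ holds for every $t$, summing over the top-$m$ gap indices gives $\sum_{k=1}^m\Delta_{(k)}\leq\pi^2/6+2\beta_T^{1/2}\sum_{k=1}^m\sigma(\hat\bx_k;\bX_{\hat t_k-1})$. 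Cauchy--Schwarz, followed by variance monotonicity under conditioning on more data and the telescoping identity for the sub-sequence's information gain, gives $\sum_{k=1}^m\sigma^2(\hat\bx_k;\bX_{\hat t_k-1})\leq C\,I(\hat\bx_1,\ldots,\hat\bx_m)\leq C\overline\gamma_m$, so $\sum_{k=1}^m\Delta_{(k)}\leq\pi^2/6+2\sqrt{C\beta_T m\overline\gamma_m}$ and dividing by $m=m_i$ yields $\Delta_{(m_i)}\leq\eta_i/2$.

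I then dyadically partition the ranks into $B_i=\{k:m_{i+1}<k\leq m_i\}$ for $i=1,\ldots,\overline i$, plus a tail of ranks $k\leq m_{\overline i+1}$. The non-increasing property of $\overline\gamma_m/m$ together with the monotonicity of $\overline\gamma_m$ gives $\eta_{i+1}\leq 2\eta_i$, so every $k\in B_i$ satisfies $\Delta_{(k)}\leq\eta_{i+1}/2\leq\eta_i$, and the localization step places each $\hat\bx_k$ with $k\in B_i$ in the union $\mB_i$ of $\mB_2(\clin^{-1}\eta_i;\bx^{\ast})$ and $\mB_2(\sqrt{\cquad^{-1}\eta_i};\bx^{\ast})$. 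Repeating the pointwise/Cauchy--Schwarz argument on the sub-sequence of $B_i$, but now exploiting that all queries lie in $\mB_i$ so $I(\bX_{B_i})\leq\gamma_{|B_i|}(\mB_i)\leq\gamma_{m_i}(\mB_i)$, bounds the regret of $B_i$ by $\pi^2/6+2\sqrt{C\beta_T m_i\gamma_{m_i}(\mB_i)}$. The tail ranks $k\leq m_{\overline i+1}$ number at most $\overline T$ by the choice $\overline i=\lfloor\log_2(T/\overline T)\rfloor+1$, and each contributes at most $2\csup$ via $\|f\|_\infty\leq\csup$ from condition 2 of \lemref{lem:maximum}, giving a total tail contribution of at most $2\csup\overline T$. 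Summing the $\overline i\leq\log_2 T+1$ bucket bounds, upper-bounding $\gamma_{m_i}(\mB_i)$ by the stated maximum of the two ball-MIGs, and using $\sqrt{m_i}\leq\sqrt T$ under the square root recovers the lemma's inequality (with a small constant slack absorbed into the $\pi^2/3$ discretization factor).

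The main obstacle is the reduction from the whole-sequence MIG $\gamma_T(\mX)$ to the per-scale $\gamma_{m_i}(\mB_i)$. Two slightly delicate points enable it: (i) the confidence-bound and discretization events of the original GP-UCB proof hold pointwise in $t$ under $\mA$, so their summation applies to any index subset $S\subseteq[T]$ and produces $\sum_{t\in S}(f(\bx^{\ast})-f(\bx_t))\leq\pi^2/6+2\beta_T^{1/2}\sum_{t\in S}\sigma(\bx_t;\bX_{t-1})$; and (ii) the sub-sequence information gain $I(\bX_S)$ depends only on kernel values at indices in $S$, so whenever $\bX_S\subset\mB_i$ it is dominated by $\gamma_{|S|}(\mB_i)$. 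Combining these with the peeling chain $\Delta_{(m_i)}\leq\eta_i/2$ and the comparison $\eta_{i+1}\leq 2\eta_i$ is what produces the final bound in the stated form.
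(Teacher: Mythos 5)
Your argument is, at its core, the same as the paper's: a dyadic peeling of the sub-$\varepsilon$-optimal queries against the index-subset regret bound (\lemref{lem:reg_subset}), followed by localization of each peel into a ball of radius $\clin^{-1}\eta_i$ or $\sqrt{\cquad^{-1}\eta_i}$ via \lemref{lem:sogap_maximizer} and conditions 3--4 of \lemref{lem:maximum}, and an application of the localized MIG. The paper peels the threshold sets $\mT(\eta_i)$ and shows $|\mT(\eta_i)|\le T/2^{i}$ by contradiction; you peel ranks of the sorted gaps and show $\Delta_{(m_i)}\le\eta_i/2$ by averaging. These are equivalent, and your averaging version is arguably more direct.

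Two slips should be repaired. First, your last bucket $B_{\overline i}=\{k: m_{\overline i+1}<k\le m_{\overline i}\}$ is controlled via $\Delta_{(k)}\le\Delta_{(m_{\overline i+1})}\le\eta_{\overline i+1}/2$, but $m_{\overline i+1}=T/2^{\overline i}$ can fall below $\overline T$, where $\overline\gamma$ is not assumed to dominate the MIG and $\eta_{\overline i+1}$ is not defined; either absorb all ranks $k\le m_{\overline i}$ into the tail (costing $4\csup\overline T$ in place of $2\csup\overline T$), or work with threshold sets as the paper does, where membership in $\mT^c(\eta_i)$ gives the gap bound $\le\eta_i$ for free. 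Relatedly, your step $\eta_{i+1}\le 2\eta_i$ needs $\overline\gamma_t$ to be non-decreasing in $t$, which is not among the lemma's hypotheses (though it holds for every $\overline\gamma$ the paper instantiates); the threshold-set route needs only the stated non-increase of $\overline\gamma_t/t$. Second, bounding $\gamma_{m_i}$ of the \emph{union} of the two balls by the \emph{maximum} of the two ball-MIGs is not a valid inequality (MIG over a union is controlled by a sum, not a max); the correct statement is that the boundary and interior cases are mutually exclusive for a given sample path under $\mA$, so all queries of a bucket lie in a single one of the two balls, and the max in the lemma merely records whichever case occurred. With these repairs your proof matches the paper's.
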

    We give the full proof in Appendix~\ref{sec:proof_lem_r2}. Here, the dominant term in the above lemma is given as:
    \begin{equation}
        R_T^{(2)}(\varepsilon) = \tilde{O}\rbr{\max_{i} \sqrt{T \gamma_{(T/2^{i-1})}\rbr{\mB_2\rbr{\sqrt{\cquad^{-1} \eta_i}; \bx^{\ast}}}}}.
    \end{equation}
     Note that $\eta_i$ is decreasing as the time index $T/2^{i-1}$ of MIG increases. In other words, the input domain $\mB_2\rbr{\sqrt{\cquad^{-1} \eta_i}; \bx^{\ast}}$ of MIG shrinks as the time index $T/2^{i-1}$ increases.
     This property is beneficial for obtaining a tighter upper bound than that from the existing technique.
     For example, under $k = \matk$ with $2 \nu + d \leq \nu^2$, we can confirm that the dominant polynomial term in MIG is canceled out by the shrinking of the input domain in MIG. Namely, we can obtain the following result under $k = \matk$:
     \begin{equation}
         \max_i \gamma_{(T/2^{i-1})}\rbr{\mB_2\rbr{\sqrt{\cquad^{-1} \eta_i}; \bx^{\ast}}} = \tilde{O}(1)~~(\mathrm{as}~~T \rightarrow \infty),
     \end{equation}
    which leads to $R_T^{(2)}(\varepsilon) = \tilde{O}(\sqrt{T})$. This strictly improves the trivial upper bound $R_T^{(2)}(\varepsilon) = \tilde{O}(\sqrt{T \gamma_T(\mX)})$ under $k = \matk$. The formal descriptions are given in the next lemma.
    \begin{lemma}[Upper bound of $R_T^{(2)}$ under $\sek$ and $\matk$]
        \label{lem:r2_se_mat}
        Suppose $(\bx_t)_{t \in [T]}$ is the input sequence realized by the GP-UCB algorithm. Furthermore, $\varepsilon$ is set as that in \lemref{lem:r2}.
        Then, under event $\mA$,
        \begin{equation}
            R_T^{(2)}(\varepsilon) = \begin{cases}
                \tilde{O}(\sqrt{T})~&\mathrm{if}~k = \matk~\mathrm{with}~2\nu + d \leq \nu^2, \\
                O\rbr{\sqrt{T \ln^2 T}}~&\mathrm{if}~k = \sek.
            \end{cases}
        \end{equation}
    \end{lemma}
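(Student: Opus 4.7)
The plan is to invoke Lemma~\ref{lem:r2} with kernel-specific upper bounds $\overline{\gamma}_t$, reducing the problem to bounding the dyadic sum $\sum_{i=1}^{\overline{i}}\sqrt{\gamma_{T/2^{i-1}}(\mB_2(\rho_i;\bx^{\ast}))}$ at ball radii $\rho_i$ that shrink with $T/2^{i-1}$. Since $\overline{i} = O(\log T)$, the additive terms in Lemma~\ref{lem:r2} contribute only $O(\log T)$ and are negligible compared to the target $\sqrt{T}$ scaling; all effort goes into bounding the main sum.

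\textbf{Matérn case.} I would take $\overline{\gamma}_t = O(t^{d/(2\nu+d)}(\ln t)^{2\nu/(2\nu+d)})$ from Vakili et al.\ (2021), whose quotient $\overline{\gamma}_t/t$ is non-increasing for $t$ past some constant $\overline{T}$. Then $\eta_i = \tilde{O}(\sqrt{\beta_T}\,(T/2^{i-1})^{-\nu/(2\nu+d)})$, so both $\clin^{-1}\eta_i$ and $\sqrt{\cquad^{-1}\eta_i}$ decay polynomially in $T/2^{i-1}$. Next I would establish the volume-scaled Matérn MIG bound $\gamma_t(\mB_2(\rho;\bx^{\ast})) = \tilde{O}(\rho^{2\nu d/(2\nu+d)}\,t^{d/(2\nu+d)})$ by rescaling coordinates so the ball becomes the unit ball at effective lengthscale $\ell/\rho$ and invoking the standard MIG estimate. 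Plugging in $\rho = \clin^{-1}\eta_i$ or $\sqrt{\cquad^{-1}\eta_i}$ with $t = T/2^{i-1}$ reduces each summand to $\tilde{O}((T/2^{i-1})^{\alpha})$ where
\[
\alpha_{\mathrm{lin}} = \frac{d(2\nu+d-2\nu^2)}{(2\nu+d)^2}, \qquad \alpha_{\mathrm{quad}} = \frac{d(2\nu+d-\nu^2)}{(2\nu+d)^2}.
\]
The quadratic case is binding, and the hypothesis $2\nu+d\leq\nu^2$ forces $\alpha_{\mathrm{quad}}\leq 0$, so each summand is $\tilde{O}(1)$. Summing $O(\log T)$ dyadic terms and multiplying by $\sqrt{\beta_T T}$ yields $R_T^{(2)}(\varepsilon) = \tilde{O}(\sqrt{T})$.

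\textbf{SE case and main obstacle.} Take $\overline{\gamma}_t = O(\ln^{d+1}t)$, so $\eta_i = \tilde{O}((T/2^{i-1})^{-1/2}\ln^{(d+2)/2}T)$ and the ball radii again decay polynomially in $T/2^{i-1}$. The naive substitution $\gamma_{T/2^{i-1}}(\mB_2(\rho_i;\bx^{\ast})) \leq \gamma_T(\mX) = O(\ln^{d+1}T)$ only delivers $O(\sqrt{T\ln^{d+4}T})$; to reach the claimed $O(\sqrt{T\ln^4T})$ I would need a refined small-ball MIG estimate of the form $\gamma_t(\mB_2(\rho;\bx^{\ast})) = O(\ln t)$ whenever the effective volume $\rho^d t/\ell^d$ is $O(1)$, reflecting the much faster Mercer eigenvalue decay of SE restricted to a shrinking ball. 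With such a bound in hand, substituting the shrinking $\rho_i$ verifies the condition on the effective volume and makes each summand $O(\sqrt{\ln T})$, so the dyadic sum becomes $O(\ln^{3/2}T)$, which combined with $\sqrt{\beta_T T} = \tilde{O}(\sqrt{T\ln T})$ gives $O(\sqrt{T\ln^4T})$. The principal technical hurdles are precisely these two refined MIG-on-ball bounds — the Matérn volume-scaling (via coordinate/lengthscale rescaling) and the SE small-ball estimate (via Mercer eigenvalue control on a restricted domain); once they are available, the remaining exponent bookkeeping (using $2\nu+d\leq\nu^2$ in the Matérn case) and dyadic summation are routine.
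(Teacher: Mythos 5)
Your overall route is the same as the paper's: invoke \lemref{lem:r2}, substitute kernel-specific MIG bounds with explicit dependence on the ball radius, and do the exponent bookkeeping. The Mat\'ern half is essentially correct and matches the paper — the radius-scaled bound $\gamma_t(\mB_2(\rho;\bx^{\ast})) = \tilde{O}(\rho^{2\nu d/(2\nu+d)}t^{d/(2\nu+d)})$ is exactly what the paper's \corref{corollary:mig} provides (obtained, as you suggest, by rescaling the lengthscale), and your exponents $\alpha_{\mathrm{lin}}, \alpha_{\mathrm{quad}}$ and the use of $2\nu+d\leq\nu^2$ agree with the paper's computation. One caveat: the paper deliberately does \emph{not} cite the Vakili et al.\ bound you invoke, because its $\tilde{O}(T^{d/(2\nu+d)})$ rate rests on a uniform-boundedness assumption on the Mercer eigenfunctions that is not known to hold for general compact domains with $d\geq 2$; the paper instead reproves the bound (with a slightly worse log power) via a reduction to the hypersphere and spherical harmonics. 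Both MIG-on-ball estimates that you flag as "hurdles" are indeed where the real work lies — they occupy the paper's Appendix~B — so asserting them leaves the bulk of the proof open.

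The genuine gap is in the SE case: the small-ball lemma you hypothesize has the wrong applicability condition, and under that condition it would not apply to the radii that actually arise. You ask for $\gamma_t(\mB_2(\rho;\bx^{\ast})) = O(\ln t)$ whenever $\rho^d t/\ell^d = O(1)$, i.e.\ $\rho \lesssim \ell\, t^{-1/d}$. But in the quadratic (interior-maximizer) branch the radius is $\rho_i = \sqrt{\cquad^{-1}\eta_i} = \tilde{O}(T_i^{-1/4})$, so $\rho_i^d T_i = \tilde{O}(T_i^{1-d/4})$, which diverges for $d\leq 3$; in the linear branch $\rho_i = \tilde{O}(T_i^{-1/2})$ and the condition already fails for $d=1$. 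The estimate the paper actually proves (\corref{corollary:mig}) has the form $\gamma_t(\mB_2(\rho;\cdot)) \leq \cse\rbr{\ln^{d+1}t/\ln^d\rbr{2\ell^2/(\rho^2 e c_d)} + \ln t}$: the radius enters through a $\ln^d(\ell^2/\rho^2)$ factor in the \emph{denominator}, so any radius decaying like $t^{-c}$ for a fixed $c>0$ — no matter how small — already yields $O(\ln t)$. That weaker requirement is what the shrinking $\eta_i$ satisfy, and it is also why the paper must take $\overline{T}$ growing polylogarithmically in $T$ (of order $\ln^{2d+4}T$) rather than constant, so that the radius condition holds for every dyadic level and the additive $2\csup\overline{T}$ term stays dominated — a point your plan omits. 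As written, your SE argument would either fail to apply the small-ball bound at all (low $d$) or fall back to the naive $O(\sqrt{T\ln^{d+4}T})$ you correctly identify as insufficient.
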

    The full proof is given in Appendix~\ref{sec:proof_r2_se_mat}.
    The remaining interest is the upper bound of $R_T^{(1)}(\varepsilon)$. The definition of $R_T^{(1)}(\varepsilon)$ is the same as the \emph{lenient regret}~\citep{cai2021lenient}, which is known to be smaller than the original regret $R_T$ in GP-UCB. Although \citet{cai2021lenient} studies the frequentist setting, their proof strategy is also applicable to the Bayesian setting as described in Section~3.4 in \citep{cai2021lenient}. The following lemma provides the formal statement about the upper bound of $R_T^{(1)}(\varepsilon)$.
    \begin{lemma}[Upper bound of $R_T^{(1)}$, adaptation of the proof of Theorem~1 in \citep{cai2021lenient}]
        \label{lem:lenient_regret}
        Fix any $\varepsilon > 0$. Suppose $k = \sek$ or $k = \matk$.
        Then, when running GP-UCB, $R_T^{(1)}(\varepsilon) = \tilde{O}(1)$ holds under event $\mA$.
    \end{lemma}
    We provide the proof in Appendix~\ref{sec:proof_lenient_regret} for completeness.
    For both kernels, $R_T^{(1)}(\varepsilon)$ is dominated by the upper bound of $R_T^{(2)}(\varepsilon)$. Finally, we obtain the desired results by aggregating the inequalities in Lemmas~\ref{lem:r2_se_mat} and \ref{lem:lenient_regret}.

\section{Discussions}
\label{sec:discuss}
Below, we discuss the limitations of our results and outline possible directions for future research.
\begin{itemize}
    \item \textbf{Optimality.} Based on the $\Omega(\sqrt{T})$ lower bound on the expected regret provided by \citet{scarlett2018tight}, we conjecture that our $\tilde{O}(\sqrt{T})$ high-probability regret bound for GP-UCB is near-optimal. However, it is not straightforward to extend the lower bound for the expected regret in \citep{scarlett2018tight} to a high probability result. Specifically, the lower bound in \citep{scarlett2018tight} is quantified by a mutual information term (Lemma~4 in \citep{scarlett2018tight}); however, to our knowledge, the technique used to handle this term appears to be specific to the expected regret setting. We believe that the rigorous optimality argument for the Bayesian high probability regret is an important direction for future research.
    \item \textbf{Smoothness condition.} In our result for the Matérn kernel, we require an additional smoothness constraint to obtain a $\tilde{O}(\sqrt{T})$ regret bound\footnote{For simplicity, in \thmref{thm:cr_gp-ucb}, we focus on the setting where the resulting regret becomes $\tilde{O}(\sqrt{T})$ under $\matk$. This arises the requirement of the additional smoothness condition $2\nu + d \leq \nu^2$. On the other hand, we can also apply the same technique even under $2\nu + d > \nu^2$. In this case, resulting regret becomes strictly larger than $\tilde{O}(\sqrt{T})$, while it is strictly smaller than $\tilde{O}(T^{\frac{\nu+d}{2\nu+d}})$ of the original GP-UCB's analysis.} To overcome this issue in our proof, we believe that we need stronger regularity conditions on the sample path around the maximizer than those assumed in \lemref{lem:maximum}.
    \item \textbf{Extension to the expected regret.} Our regret bounds involve regularity constants that depend on the sample path. However, to our knowledge, there is no existing research that rigorously analyzes how these constants depend on the confidence level $\dGP$. This makes it difficult to obtain the expected regret guarantees as with the original GP-UCB, whose expected regret bounds are established by properly decreasing the confidence level as a function of $T$ (e.g., \citep{paria2020flexible,takeno2023-randomized}). To overcome this issue, further analysis for \lemref{lem:maximum}, or another idea to quantify the sample path regularities, is required.
    
    \item \textbf{Extension to other algorithms.} One limitation of our technique is its restricted applicability to other algorithms. To apply our proof, at least the algorithm should satisfy the following two conditions: (i) on any index subset, the sub-linear cumulative regret is obtained with high probability (\lemref{lem:reg_subset}), and (ii) the high probability lenient regret bound is provided (\lemref{lem:lenient_regret}). The existing analysis of the other major algorithms in the Bayesian setting (e.g., Thompson sampling~\citep{Russo2014-learning}, information directed sampling~\citep{russo2014learning}) does not provide these properties. Nevertheless, we believe that the high-level ideas in our proof (see \secref{sec:intuitive_explain}) could be beneficial for future refined analyses of other algorithms.
    \item \textbf{Instance dependent analysis in the frequentist setting.} As described in the footnote in \secref{sec:intuitive_explain}, we believe that our analysis does not improve the worst-case regret upper bound in the frequentist setting. On the other hand, our technique can be applied to the instance-dependent analysis~\citep{shekhar2022instance} for GP-UCB. We expect that our proof strategy could yield a $\tilde{O}(\sqrt{T})$ instance-dependent regret for GP-UCB by replacing the sample path condition 3 in \lemref{lem:maximum} with the \emph{growth condition}~(Definition~4 in \citep{shekhar2022instance}) of the function. It is an interesting direction for future research.
\end{itemize}

\section{Conclusion}
We provide a refined analysis of GP-UCB in the BO problem. For both SE and Matérn kernels, our results improve upon existing regret guarantees and fill the gap between the existing regret of GP-UCB and the current best upper bound in \citep{scarlett2018tight}. The core idea of our analysis is to capture the shrinking behavior of the input sequence by relating it to the worst-case upper bound and the sample path regularity conditions. Although our current analysis is limited to GP-UCB in the Bayesian setting, we believe it lays the foundation for several promising future research directions.

\section*{Acknowledgments}
We thank Jonathan Scarlett and Shion Takeno for their valuable comments on revising the manuscript.

\bibliographystyle{plainnat}
\bibliography{main}

\newpage
\appendix

\onecolumn

\section{Proofs in \secref{sec:reg_ub_gpucb}}
\subsection{Proof of \lemref{lem:r2}}
\label{sec:proof_lem_r2}
    \begin{proof}
    From \lemref{lem:reg_subset}, we have the following upper bound for any index set $\mT \subset [T]$ under $\mA$:
    \begin{equation}
        \label{eq:worst_case_T}
        \sum_{t\in \mT} f(\bx^{\ast}) - f(\bx_t) \leq 2 \sqrt{C \beta_T |\mT| \overline{\gamma}_{|\mT|}} + \frac{\pi^2}{6}.
    \end{equation}
    Here, for any $i$ such that $T/2^{i-1} \geq \overline{T}$, we set $(\eta_i)$ as
    \begin{equation}
        \eta_i = \frac{2 \rbr{2\sqrt{C \beta_T (T/2^{i-1}) \overline{\gamma}_{T/2^{i-1}}} + \frac{\pi^2}{6}}}{(T/2^{i-1})}.
    \end{equation}
    As described in the proof below, these $(\eta_i)$ are designed so that we can obtain the upper bound of $|\mT(\eta_i)|$ in a dyadic manner.
    Here, we consider the upper bound of $|\mT(\eta_i)|$ based on the worst-case upper bound in Eq.~\eqref{eq:worst_case_T}.
    From the definition of $\mT(\eta)$ and Eq.~\eqref{eq:worst_case_T} with $\mT = [T]$, the condition $|\mT(\eta_1)| \eta_1 \leq 2\sqrt{C \beta_T T \overline{\gamma}_{T}} + \pi^2/6$ must be satisfied; otherwise, we have $\sum_{t \in [T]} f(\bx^{\ast}) - f(\bx_t) \geq \sum_{t \in \mT(\eta_1)} f(\bx^{\ast}) - f(\bx_t) \geq |\mT(\eta_1)| \eta_1 > 2\sqrt{C \beta_T T \overline{\gamma}_{T}} + \pi^2/6$, which contradicts worst-case upper bound in Eq.~\eqref{eq:worst_case_T}.
    Therefore, we can obtain the following upper bound:
    \begin{align}
        \label{eq:teta1_}
        |\mT(\eta_1)|
        &\leq \max\cbr{ t \leq T \mid t \eta_1 \leq 2\sqrt{C \beta_T T \overline{\gamma}_{T}} + \frac{\pi^2}{6} } = \frac{T}{2}.
    \end{align}
    Furthermore, since $\eta_i$ is monotonic due to the condition about $\overline{\gamma}_t$, we have $\eta_1 \leq \eta_2$, which implies $\mT(\eta_2) \subset \mT(\eta_1)$. From Eq.~\eqref{eq:worst_case_T} with $\mT = \mT(\eta_1)$, Eq.~\eqref{eq:teta1_}, and $\mT(\eta_2) \subset \mT(\eta_1)$, we further obtain
    \begin{align}
        |\mT(\eta_2)|
        &\leq \max\cbr{ t \leq T/2 \mid t \eta_2 \leq 2\sqrt{C \beta_T (T/2) \overline{\gamma}_{(T/2)} } + \frac{\pi^2}{6}} = \frac{T}{4}.
    \end{align}
    Similarly to $|\mT(\eta_2)|$, we have $\mT(\eta_3) \subset \mT(\eta_2)$ and 
    \begin{align}
        |\mT(\eta_3)|
        &\leq \max\cbr{ t \leq T/4 \mid t \eta_3 \leq 2\sqrt{C \beta_T (T/4) \overline{\gamma}_{(T/4)}} + \frac{\pi^2}{6} } = \frac{T}{8}.
    \end{align}
    By repeating this argument $i$ times while $T/2^{i-1} \geq \overline{T}$ holds, we have the following inequality for any $i \leq \lfloor \log_2 \frac{T}{\overline{T}} \rfloor + 1$:
    \begin{align}
        \label{eq:teta_ub}
        |\mT(\eta_i)|
        &\leq \max\cbr{ t \leq T/2^{i-1} \mid t \eta_i \leq \sqrt{C \beta_T (T/2^{i-1}) \overline{\gamma}_{(T/2^{i-1})}} + \frac{\pi^2}{6}} = \frac{T}{2^{i}}.
    \end{align}
    Then, we have
    \begin{align}
        R_T^{(2)}(\varepsilon) &= \sum_{t \in \mT^c(\varepsilon)} f(\bx^{\ast}) - f(\bx_t) \\
        &= \sum_{t \in \mT^c(\varepsilon) \cap \mT(\eta_1)} f(\bx^{\ast}) - f(\bx_t) 
        + \sum_{t \in \mT^c(\varepsilon) \cap \mT^c(\eta_1)} f(\bx^{\ast}) - f(\bx_t) \\
        \begin{split}
        &= \sum_{t \in \mT^c(\varepsilon) \cap \mT(\eta_1) \cap \mT(\eta_2)} f(\bx^{\ast}) - f(\bx_t)
        + \sum_{t \in \mT^c(\varepsilon) \cap \mT(\eta_1) \cap \mT^c(\eta_2)} f(\bx^{\ast}) - f(\bx_t) \\
        &~~~~~+ \sum_{t \in \mT^c(\varepsilon) \cap \mT^c(\eta_1)} f(\bx^{\ast}) - f(\bx_t)    
        \end{split}
         \\
        &= \sum_{t \in \mT^c(\varepsilon) \cap \mT(\eta_2)} f(\bx^{\ast}) - f(\bx_t)
        + \sum_{i=1}^2 \sum_{t \in \mT^c(\varepsilon) \cap \mT(\eta_{i-1}) \cap \mT^c(\eta_i)} f(\bx^{\ast}) - f(\bx_t),
    \end{align}
    where the last line follows from $\mT(\eta_2) \subset \mT(\eta_1)$. In the above inequality, we define $\mT(\eta_0)$ as $\mT(\eta_0) = [T]$ for notational convenience. By repeatedly applying the above decomposition, we obtain
    \begin{align}
        &\sum_{t \in \mT^c(\varepsilon) \cap \mT(\eta_2)} f(\bx^{\ast}) - f(\bx_t)
        + \sum_{i=1}^2 \sum_{t \in \mT^c(\varepsilon) \cap \mT(\eta_{i-1}) \cap \mT^c(\eta_i)} f(\bx^{\ast}) - f(\bx_t) \\
        &= \sum_{t \in \mT^c(\varepsilon) \cap \mT(\eta_3)} f(\bx^{\ast}) - f(\bx_t)
        + \sum_{i=1}^3 \sum_{t \in \mT^c(\varepsilon) \cap \mT(\eta_{i-1}) \cap \mT^c(\eta_i)} f(\bx^{\ast}) - f(\bx_t) \\
        \nonumber
        &~~\vdots \\
        \label{eq:decomp_eps_reg}
        &= \sum_{t \in \mT^c(\varepsilon) \cap \mT(\eta_{\overline{i}})} f(\bx^{\ast}) - f(\bx_t)
        + \sum_{i=1}^{\overline{i}} \sum_{t \in \mT^c(\varepsilon) \cap \mT(\eta_{i-1}) \cap \mT^c(\eta_i)} f(\bx^{\ast}) - f(\bx_t),
    \end{align}
    where $\overline{i} = \lfloor \log_2 \frac{T}{\overline{T}} \rfloor + 1$.
    Regarding the first term in Eq.~\eqref{eq:decomp_eps_reg}, we have
    \begin{equation}
        \label{eq:ttc_teta}
        \sum_{t \in \mT^c(\varepsilon) \cap \mT(\eta_{\overline{i}})} f(\bx^{\ast}) - f(\bx_t) \leq 2 \csup |\mT(\eta_{\overline{i}})| \leq 2\csup \overline{T},
    \end{equation}
    where the last inequality follows from $|\mT(\eta_{\overline{i}})| \leq \overline{T}$, which is implied by $|\mT(\eta_{\overline{i}})| \leq T/2^{\overline{i}}$ from Eq.~\eqref{eq:teta_ub} and the definition of $\overline{i}$. 
    Next, regarding the second term in Eq.~\eqref{eq:decomp_eps_reg}, we first define $\mT_i$ and $\mX^{(i)}$ as $\mT_i = \mT^c(\varepsilon) \cap \mT(\eta_{i-1}) \cap \mT^c(\eta_i)$ and $\mX^{(i)} = \{\bx_t \mid t \in \mT_i\}$, respectively. Then, by applying \lemref{lem:reg_subset} with $\mT = \mT_i$, we have
    \begin{align}
        \sum_{t \in \mT^c(\varepsilon) \cap \mT(\eta_{i-1}) \cap \mT^c(\eta_i)} f(\bx^{\ast}) - f(\bx_t)
        &= \sum_{t \in \mT_i} f(\bx^{\ast}) - f(\bx_t) \\
        &\leq 2\sqrt{C \beta_T |\mT_i| I(\mX^{(i)})} + \frac{\pi^2}{6} \\
        &\leq 2\sqrt{C \beta_T |\mT_i| \gamma_{|\mT_i|}(\mX^{(i)})} + \frac{\pi^2}{6} \\
        &\leq 2\sqrt{C \beta_T |\mT(\eta_{i-1})| \gamma_{|\mT(\eta_{i-1})|}(\mX^{(i)})} + \frac{\pi^2}{6} \\
        \label{eq:ub_mig_r2}
        &\leq 2\sqrt{C \beta_T (T/2^{i-1}) \gamma_{(T/2^{i-1})}(\mX^{(i)})} + \frac{\pi^2}{6},
    \end{align}
    where the third inequality follows from $|\mT_i| \leq |\mT(\eta_{i-1})|$, and the last inequality follows from Eq.~\eqref{eq:teta_ub}. 
    By aggregating Eqs.~\eqref{eq:decomp_eps_reg}, \eqref{eq:ttc_teta}, and \eqref{eq:ub_mig_r2}, we obtain the following inequality under $\mA$:
    \begin{align}
        R_T^{(2)}(\varepsilon) 
        &\leq 2\csup \overline{T} + 2 \sum_{i=1}^{\overline{i}} \sbr{\sqrt{C \beta_T (T/2^{i-1}) \gamma_{(T/2^{i-1})}(\mX^{(i)})} + \frac{\pi^2}{6}} \\
        \label{eq:r2_ub_prev}
        &\leq 2\csup \overline{T} + \frac{\pi^2}{3} \rbr{\log_2 T + 1} + 2 \sqrt{C \beta_T T} \sum_{i=1}^{\overline{i}} \frac{1}{2^{(i-1)/2}} \sqrt{\gamma_{(T/2^{i-1})}(\mX^{(i)})} \\
        &\leq 2\csup \overline{T} + \frac{\pi^2}{3} \rbr{\log_2 T + 1} + \frac{2\sqrt{2 C \beta_T T}}{\sqrt{2}-1} \max_{i \in [\overline{i}]} \sqrt{\gamma_{(T/2^{i-1})}(\mX^{(i)})}.
    \end{align}
    The last line follows from $\sum_{i=1}^{\overline{i}} \frac{1}{2^{(i-1)/2}} \leq \sum_{i=1}^{\infty} \frac{1}{2^{(i-1)/2}} = \frac{1}{1 - 1/\sqrt{2}} = \frac{\sqrt{2}}{\sqrt{2}-1}$.
    The last part of the proof is to specify the radius of the ball $\mB_2(\cdot; \bx^{\ast})$ such that $\mX^{(i)}$ is included in it. 

    \paragraph{Conversion of the sub-optimality gap into the upper bound input radius.}
    From condition 3 in \lemref{lem:maximum}, the definition of $\mT^{c}(\varepsilon)$, $\varepsilon$, and \lemref{lem:sogap_maximizer}, we have $\bx \in \mB_2(\rquad; \bx^{\ast})$ for any $\bx \in \mX^{(i)}$. This implies $\forall \bx \in \mX^{(i)}, f(\bx^{\ast}) - f(\bx) \geq \cquad \|\bx - \bx^{\ast}\|_2^2$ from condition 3 in \lemref{lem:maximum}. Since $\forall \bx \in \mX^{(i)}, f(\bx^{\ast}) - f(\bx) \leq \eta_i$ from $\mT_i \subset \mT^{c}(\eta_i)$, we have $\eta_i \geq \cquad \|\bx - \bx^{\ast}\|_2^2 \Leftrightarrow \sqrt{\cquad^{-1} \eta_i} \geq \|\bx - \bx^{\ast}\|_2$, which implies $\mX^{(i)} \subset \mB_2(\sqrt{\cquad^{-1} \eta_i}; \bx^{\ast})$.
    Therefore, we have
    \begin{equation}
        \label{eq:int_mig}
        \gamma_{(T/2^{i-1})}(\mX^{(i)}) \leq \gamma_{(T/2^{i-1})}\rbr{\mB_2\rbr{\sqrt{\cquad^{-1} \eta_i}; \bx^{\ast}}}.
    \end{equation}

    Finally, combining Eq.~\eqref{eq:r2_ub_prev} with Eq.~\eqref{eq:int_mig}, we have
    \begin{align}
        R_T^{(2)}(\varepsilon) \leq 2\csup \overline{T} + \frac{\pi^2}{3} \rbr{\log_2 T + 1}
        + \frac{2 \sqrt{2 C \beta_T T}}{\sqrt{2} - 1} \max_{i \in [\overline{i}]} \sqrt{ \gamma_{(T/2^{i-1})}\rbr{\mB_2\rbr{\sqrt{\cquad^{-1} \eta_i}; \bx^{\ast}}}}.
    \end{align}
    \end{proof}

\subsection{Proof of \lemref{lem:r2_se_mat}}
\label{sec:proof_r2_se_mat}
To prove \lemref{lem:r2_se_mat}, we require the upper bound of MIG with the explicit dependence on the radius of the input domain. In \corref{corollary:mig} in Appendix~\ref{sec:proof_mig}, we provide it with a full proof. 
Below, we establish the proof of \lemref{lem:r2_se_mat} based on \corref{corollary:mig}.
    \paragraph{When $k = \matk$.} Set $\cmat > 0$ as the constant such that the following inequalities hold:
    \begin{align}
        &\forall t \geq 2, \gamma_t(\mX) \leq \cmat t^{\frac{d}{2\nu+d}} \ln^{\frac{4\nu + d}{2\nu+d}} t, \\
        &\forall t \geq 2, \forall \eta > 0, \gamma_t\rbr{\{\bx \in \R^d \mid \|\bx\|_2 \leq \eta \}} \leq \cmat \rbr{\eta^{\frac{2\nu d}{2\nu + d}} t^{\frac{d}{2\nu+d}} \ln^{\frac{4\nu + d}{2\nu+d}} t + \ln^2 t}.
    \end{align}
    The existence of $\cmat$ is guaranteed by the upper bound of MIG established in \corref{corollary:mig}\footnote{If we rely on the result in \citep{vakili2021information}, we can tighten the logarithmic term 
 from $\ln^{\frac{4\nu + d}{2\nu+d}} t$ to $\ln^{\frac{2\nu}{2\nu+d}} t$; however, due to the technical issue of \citep{vakili2021information} described in Appendix~\ref{sec:proof_mig}, we proceed our proof based on \corref{corollary:mig}.
 }.
    Note that $\cmat$ is the constant that may depend on $d, \ell, \nu, r$, and $\sigma^2$. Furthermore, we set $\overline{\gamma}_t = \cmat t^{\frac{d}{2\nu+d}} \ln^{\frac{4\nu + d}{2\nu+d}} t$. For function $g(t) \coloneqq \overline{\gamma}_t/t$, we have
    \begin{align}
        g'(t) &= - \frac{2\nu \cmat}{2\nu+d} t^{-\frac{2\nu}{2\nu +d}-1} \ln^{\frac{4\nu+d}{2\nu+d}} t + \cmat \frac{4\nu+d}{2\nu+d}t^{-\frac{2\nu}{2\nu+d}-1} \ln^{\frac{2\nu}{2\nu+d}} t \\
        &= \frac{\cmat}{2\nu+d} t^{-\frac{2\nu}{2\nu +d}-1} (\ln^{\frac{2\nu}{2\nu+d}}t) \rbr{-2\nu \ln t + 4\nu +d}.
    \end{align}
    From the above expression, if $2\nu \ln t \geq 4\nu + d \Leftrightarrow t \geq \exp(2 + d/(2\nu))$, $\overline{\gamma}_t/t$ is non-increasing. Therefore, we set $\overline{T} = \lceil \exp(2 + d/(2\nu)) \rceil$, which is independent of $T$.
    Here, for any $\eta > 0$ and $t \geq 2$, we have
    \begin{align}
        \gamma_t\rbr{\mB_2\rbr{\eta; \bx^{\ast}}}
        &\leq \gamma_t\rbr{\{\bx \in \R^d \mid \|\bx - \bx^{\ast}\|_2 \leq \eta\}} \\
        &= \gamma_t\rbr{\{\bx \in \R^d \mid \|\bx\|_2 \leq \eta\}} \\
        \label{eq:mat_X_ub}
        &\leq \cmat \rbr{\eta^{\frac{2\nu d}{2\nu + d}} t^{\frac{d}{2\nu+d}} \ln^{\frac{4\nu + d}{2\nu+d}} t + \ln^2 t},
    \end{align}
    where the second line follows from the fact that $\matk$ is the stationary kernel (namely, $\matk$ is transition invariant against any shift of inputs).
    Regarding $\eta_i$ in \lemref{lem:r2}, by setting $T_i$ as $T_i = T/2^{i-1}$, we have
    \begin{align}
        \eta_i &= \frac{2 \rbr{2\sqrt{C \beta_T T_i \overline{\gamma}_{T_i}} + \frac{\pi^2}{6}}}{T_i} \\
        &= \frac{ 4\sqrt{C \beta_T T_i \overline{\gamma}_{T_i}}}{T_i} + \frac{\pi^2}{3T_i} \\
        &= \frac{ 4\sqrt{C \beta_T T_i \cmat T_i^{\frac{d}{2\nu+d}} \ln^{\frac{4\nu + d}{2\nu+d}} T_i}}{T_i} + \frac{\pi^2}{3T_i} \\
        &= 4\sqrt{C \cmat \beta_T} \rbr{T_i^{-\frac{\nu}{2\nu+d}} \ln^{\frac{4\nu + d}{4\nu+2d}} T_i} + \frac{\pi^2}{3T_i} \\
        \label{eq:eta_ub_mat}
        &\leq \cmatt \sqrt{\beta_T} \rbr{T_i^{-\frac{\nu}{2\nu+d}} \ln^{\frac{4\nu + d}{4\nu+2d}} T_i},
    \end{align}
    where $\cmatt > 0$ is a sufficiently large constant such that $\cmatt \sqrt{\beta_T} \rbr{T_i^{-\frac{\nu}{2\nu+d}} \ln^{\frac{4\nu + d}{4\nu+2d}} T_i} \geq 4\sqrt{C \cmat \beta_T} \rbr{T_i^{-\frac{\nu}{2\nu+d}} \ln^{\frac{4\nu + d}{4\nu+2d}} T_i} + \frac{\pi^2}{3T_i}$ for any $T_i \geq 2$. 
    Note that we can choose $\cmatt > 0$ without depending on $T$.
    From Eqs.~\eqref{eq:mat_X_ub} and \eqref{eq:eta_ub_mat}, for any $i$, we have
    \begin{align}
        &\gamma_{T/2^{i-1}}\rbr{\mB_2\rbr{\sqrt{\cquad^{-1} \eta_i}; \bx^{\ast}}} \\
        &\leq \cmat \rbr{\cquad^{-\frac{\nu d}{2\nu + d}} \eta_i^{\frac{\nu d}{2\nu + d}} T_i^{\frac{d}{2\nu+d}} \ln^{\frac{4\nu + d}{2\nu+d}} T_i + \ln^2 T_i} \\
        \label{eq:mig_quad_mat}
        &\leq \cmat \sbr{\cquad^{-\frac{\nu d}{2\nu + d}} \cmatt^{\frac{\nu d}{2\nu + d}} \beta_T^{\frac{\nu d}{2(2\nu + d)}} \rbr{T_i^{-\frac{\nu}{2\nu+d}} \ln^{\frac{4\nu + d}{4\nu+2d}} T_i}^{\frac{\nu d}{2\nu + d}}
        T_i^{\frac{d}{2\nu+d}} \ln^{\frac{4\nu + d}{2\nu+d}} T_i + \ln^2 T}.
    \end{align}
    Furthermore, by noting condition $2\nu + d \leq \nu^2$, we have
    \begin{align}
        \rbr{T_i^{-\frac{\nu}{2\nu+d}} \ln^{\frac{4\nu + d}{4\nu+2d}} T_i}^{\frac{\nu d}{2\nu + d}} T_i^{\frac{d}{2\nu+d}} \ln^{\frac{4\nu + d}{2\nu+d}} T_i 
        &= \tilde{O}\rbr{T_i^{- \frac{\nu^2 d}{(2\nu+d)^2} + \frac{d}{2\nu + d} }} \\
        &= \tilde{O}\rbr{T_i^{\frac{d(2\nu + d) - \nu^2 d}{(2\nu+d)^2}}} \\
        &= \tilde{O}\rbr{T_i^{\frac{d(2\nu + d - \nu^2)}{(2\nu+d)^2}}} \\
        &= \tilde{O}(1).
    \end{align} 
    From the above inequalities, we have $\gamma_{T/2^{i-1}}\rbr{\mB_2\rbr{\sqrt{\cquad^{-1} \eta_i}; \bx^{\ast}}} = \tilde{O}(1)$.
    Therefore, \lemref{lem:r2} implies
    \begin{align}
        R_T^{(2)}(\varepsilon) 
        &\leq 2\csup \overline{T} + \frac{\pi^2}{3} \rbr{\log_2 T + 1} + \frac{2 \sqrt{2 C \beta_T T}}{\sqrt{2} - 1} \times \tilde{O}(1) \\
        &= \tilde{O}(\sqrt{T}).
    \end{align}

    \paragraph{When $k = \sek$.} 
    The proof for $k = \sek$ is not as straightforward as the proof for $k = \matk$. Specifically, we have to choose a proper $\overline{T}$ so as to obtain an $O(\ln T)$ upper bound of MIG.
    Let $\cse > 0$ be the constant such that the following inequalities hold:
    \begin{align}
        &\forall t \geq 2, \gamma_t(\mX) \leq \cse \ln^{d+1} t, \\
        \label{eq:se_cond_mig}
        &\forall t \geq 2, \forall \eta \in (0, \sqrt{\frac{2\ell^2}{e^2 c_d}}), \gamma_t(\{\bx \in \R^d \mid \|\bx\|_2 \leq \eta \}) \leq \cse \rbr{\frac{\ln^{d+1} t}{\ln^d \rbr{\frac{2\ell^2}{\eta^2 e c_d}}} + \ln T}.
    \end{align}
    The existence of such $\cse$ is guaranteed by \corref{corollary:mig}.
    In the above inequalities, $c_d$ is the constant defined in~\corref{corollary:mig}.
    We also set $\overline{\gamma}_t$ as $\overline{\gamma}_t = \cse \ln^{d+1} t$. We choose $\overline{T}$ later such that we can leverage the second statement in the above inequalities.
    Under $k = \sek$, we have
    \begin{align}
        \eta_i &= \frac{2 \rbr{2\sqrt{C \beta_T T_i \overline{\gamma}_{T_i}} + \frac{\pi^2}{6}}}{T_i} \\
        &= \frac{ 4\sqrt{C \beta_T T_i \overline{\gamma}_{T_i}}}{T_i} + \frac{\pi^2}{3T_i} \\
        &= \frac{ 4\sqrt{C \beta_T T_i \cse \ln^{d+1} T_i}}{T_i} + \frac{\pi^2}{3T_i} \\
        &= 4\sqrt{C \cse \beta_T} \rbr{T_i^{-\frac{1}{2}} \ln^{\frac{d+1}{2}} T_i} + \frac{\pi^2}{3T_i} \\
        &\leq \cset \sqrt{\beta_T} \rbr{T_i^{-\frac{1}{2}} \ln^{\frac{d+1}{2}} T_i},
    \end{align}
    where $\cset > 0$ is a sufficiently large constant such that $\cset \sqrt{\beta_T} \rbr{T_i^{-\frac{1}{2}} \ln^{\frac{d+1}{2}} T_i} \geq 4\sqrt{C \cse \beta_T} \rbr{T_i^{-\frac{1}{2}} \ln^{\frac{d+1}{2}} T_i} + \frac{\pi^2}{3T_i}$ for any $T_i \geq 2$. Hereafter, we define $\overline{\eta}_i \coloneqq \cset \sqrt{\beta_T} \rbr{T_i^{-\frac{1}{2}} \ln^{\frac{d+1}{2}} T_i}$.
    Then, to apply Eq.~\eqref{eq:se_cond_mig}, we consider the lower bound of $T_i$ such that $\sqrt{\cquad^{-1} \overline{\eta}_i} < \sqrt{2\ell^2/(e^2 c_d)}$ hold. 
    From the condition $\sqrt{\cquad^{-1} \overline{\eta}_i} < \sqrt{2\ell^2/(e^2 c_d)}$, we have
    \begin{align}
        \label{eq:cond_quad_detail}
        \sqrt{\cquad^{-1} \overline{\eta}_i} < \sqrt{\frac{2\ell^2}{e^2 c_d}}
        &\Leftrightarrow \cquad^{-1} \frac{e^2 c_d}{2\ell^2} \cset \sqrt{\beta_T} \ln^{\frac{d+1}{2}} T_i < T_i^{\frac{1}{2}} \\
        &\Leftarrow \cquad^{-1} \frac{e^2 c_d}{2\ell^2} \cset \sqrt{\beta_T} \ln^{\frac{d+1}{2}} T < T_i^{\frac{1}{2}} \\
        &\Leftrightarrow  \rbr{\frac{e^2 c_d \cset}{2\ell^2 \cquad}}^2  \beta_T \ln^{d+1} T < T_i. 
    \end{align}
    From the above inequality, we set $\overline{T}$ such that 
    \begin{equation}
        \label{eq:ti_lb}
        \rbr{\frac{e^2 c_d \cset}{2\ell^2 \cquad}}^2 \beta_T \ln^{d+1} T < \overline{T}.
    \end{equation}
    Then, from $T_i \geq \overline{T}$ and Eqs.~\eqref{eq:cond_quad_detail}, and \eqref{eq:ti_lb},
    \begin{align}
        \gamma_{T/2^{i-1}}\rbr{\mB_2\rbr{\sqrt{\cquad^{-1} \eta_i}; \bx^{\ast}}}
        &\leq \gamma_{T_i}\rbr{\cbr{\bx \in \R^d \mid \|\bx\|_2 \leq \sqrt{\cquad^{-1} \eta_i}}} \\
        \label{eq:se_quad_mig_eta}
        &\leq \cse \rbr{\frac{\ln^{d+1} T_i}{\ln^d \rbr{\frac{2 \cquad \ell^2}{\overline{\eta}_i e c_d}}} + \ln T}. 
    \end{align}
    Based on Eq.~\eqref{eq:se_quad_mig_eta}, we further consider the lower bound of $T_i$ such that
    \begin{equation}
        \label{eq:cond_log}
        \frac{\ln^{d+1} T_i}{\ln^d \rbr{\frac{2 \cquad \ell^2}{\overline{\eta}_i e c_d}}} = O(\ln T).
    \end{equation}
    For the condition in Eq.~\eqref{eq:cond_log}, we have
    \begin{align}
        \frac{2 \cquad \ell^2}{\overline{\eta}_i e c_d} \geq T_i^{1/4}
        &\Leftrightarrow  \frac{2 \cquad \ell^2}{e c_d \cset \sqrt{\beta_T} T_i^{-1/2} \ln^{\frac{d+1}{2}} T_i}  \geq T_i^{1/4}  \\
        &\Leftrightarrow  T_i^{1/4} \geq \frac{e c_d \cset \sqrt{\beta_T} \ln^{\frac{d+1}{2}} T_i}{2 \cquad \ell^2}  \\
        &\Leftarrow  T_i^{1/4} \geq \frac{e c_d \cset \sqrt{\beta_T} \ln^{\frac{d+1}{2}} T}{2 \cquad \ell^2}  \\
        \label{eq:quad_ti_cond_cancel}
        &\Leftrightarrow T_i \geq \rbr{\frac{e c_d \cset \sqrt{\beta_T} \ln^{\frac{d+1}{2}} T}{2 \cquad \ell^2}}^4.
    \end{align}
    Hence, if $\overline{T} \geq \rbr{\frac{e c_d \cset \sqrt{\beta_T} \ln^{\frac{d+1}{2}} T}{2 \cquad \ell^2}}^4$, we have
    \begin{align}
        \cse \rbr{\frac{\ln^{d+1} T_i}{\ln^d \rbr{\frac{2 \cquad \ell^2}{\overline{\eta}_i e c_d}}} + \ln T} 
        &\leq 
        \cse \rbr{\frac{\ln^{d+1} T_i}{4^{-d}\ln^d T_i} + \ln T} \\
        \label{eq:log_2_se}
        &\leq 
        \cse \rbr{4^d \ln T + \ln T}.
    \end{align}
    By aggregating the conditions \eqref{eq:ti_lb} and \eqref{eq:quad_ti_cond_cancel}, we set $\overline{T}$ as the smallest natural number such that the following inequalities hold:
    \begin{align}
        \overline{T} \geq \rbr{\frac{e^2 c_d \cset}{2\ell^2 \cquad}}^2 \beta_T \ln^{d+1} T,~~\mathrm{and}~~
        \overline{T} \geq \rbr{\frac{e c_d}{2 \cquad \ell^2}}^4 \cset^4 \beta_T^2 \ln^{2(d+1)} T.
    \end{align}
    Then, from Eqs.~\eqref{eq:se_quad_mig_eta} and \eqref{eq:log_2_se}, we have
    \begin{equation}
        \sqrt{ \gamma_{(T/2^{i-1})}\rbr{\mB_2\rbr{\sqrt{\cquad^{-1} \eta_i}; \bx^{\ast}}}} = O(\sqrt{\ln T}).
    \end{equation}
    Finally, by noting $\overline{T} = O(\ln^{2d+4} T)$, we obtain the following result from \lemref{lem:r2}:
    \begin{equation}
        R_T^{(2)}(\varepsilon) = O\rbr{\ln^{2d+4} T + \sqrt{T \ln^2 T}}.
    \end{equation}
    Since $d$ is a fixed constant, the above equation implies $R_T^{(2)}(\varepsilon) = O(\sqrt{T \ln^2 T})$. \qed
    
\subsection{Proof of \lemref{lem:lenient_regret}}
\label{sec:proof_lenient_regret}
\begin{proof}
    From the upper bound of the discretization error in event $\mA$, 
    we have $\forall t \geq \sqrt{2/\varepsilon}, \forall \bx \in \mX, |f(\bx) - f([\bx]_t)| \leq \varepsilon/2$. Here, we set $\underline{\mT}(\varepsilon) = \{t \in \N_+ \mid t \geq \sqrt{2/\varepsilon}\}$.
    By relying on the standard argument
    of MIG~\citep{srinivas10gaussian}, we observe the following inequality for any realizations and $\varepsilon > 0$:
    \begin{equation}
        \min_{t \in \mT(\varepsilon) \cap \underline{\mT}(\varepsilon)} \sigma(\bx_t; \bX_{t-1}) \leq \sqrt{\frac{C \gamma_{|\mT(\varepsilon) \cap \underline{\mT}(\varepsilon)|}(\mX)}{|\mT(\varepsilon) \cap \underline{\mT}(\varepsilon)|}}, 
    \end{equation}
    where $\mT(\varepsilon) = \{t \in [T] \mid f(\bx^{\ast}) -f(\bx_t) > \varepsilon \}$ and $C = 2/\ln(1 + \sigma^{-2})$. Under $\mA$, we further have the following inequalities for any $\tilde{t} \in \argmin_{t \in \mT(\epsilon) \cap \underline{\mT}(\varepsilon)} \sigma(\bx_t; \bX_{t-1})$\footnote{If $\mT(\epsilon) \cap \underline{\mT}(\varepsilon) = \emptyset$, the theorem's statement clearly holds; therefore, we suppose $\mT(\epsilon) \cap \underline{\mT}(\varepsilon) \neq \emptyset$ in this proof.}:
    \begin{align}
        &\mu(\bx_{\tilde{t}}; \bX_{\tilde{t}-1}; \by_{\tilde{t}-1}) + \beta_{\tilde{t}}^{1/2} \sigma(\bx_{\tilde{t}}; \bX_{\tilde{t}-1}) \\
        &= \mu(\bx_{\tilde{t}}; \bX_{\tilde{t}-1}; \by_{\tilde{t}-1}) - \beta_{\tilde{t}}^{1/2} \sigma(\bx_{\tilde{t}}; \bX_{\tilde{t}-1}) + 2\beta_{\tilde{t}}^{1/2} \sigma(\bx_{\tilde{t}}; \bX_{\tilde{t}-1})\\
        &\leq f(\bx_{\tilde{t}}) + 2\beta_{\tilde{t}}^{1/2} \sigma(\bx_{\tilde{t}}; \bX_{\tilde{t}-1}) \\
        &< f(\bx^{\ast}) - \varepsilon + 2\sqrt{\frac{C \beta_{\tilde{t}} \gamma_{|\mT(\epsilon) \cap \underline{\mT}(\varepsilon)|}(\mX)}{|\mT(\epsilon) \cap \underline{\mT}(\varepsilon)|}} \\
        &\leq |f(\bx^{\ast}) - f([\bx^{\ast}]_{\tilde{t}})| + f([\bx^{\ast}]_{\tilde{t}}) - \varepsilon + 2\sqrt{\frac{C \beta_{\tilde{t}} \gamma_{|\mT(\epsilon) \cap \underline{\mT}(\varepsilon)|}(\mX)}{|\mT(\epsilon) \cap \underline{\mT}(\varepsilon)|}} \\
        &\leq \mu([\bx^{\ast}]_{\tilde{t}}; \bX_{\tilde{t}-1}; \by_{\tilde{t}-1}) + \beta_{\tilde{t}}^{1/2} \sigma([\bx^{\ast}]_{\tilde{t}};\bX_{\tilde{t}-1}) - \frac{\varepsilon}{2} + 2\sqrt{\frac{C \beta_{\tilde{t}} \gamma_{|\mT(\epsilon) \cap \underline{\mT}(\varepsilon)|}(\mX)}{|\mT(\epsilon) \cap \underline{\mT}(\varepsilon)| }},
    \end{align}
    where the second inequality follows from the definition of $\mT(\varepsilon)$, and the last inequality follows from $\tilde{t} \in \underline{\mT}(\varepsilon)$ and event $\mA$.
    Therefore, under $\mA$, the inequality $-\frac{\varepsilon}{2} + 2 \sqrt{\frac{C \beta_{\tilde{t}} \gamma_{|\mT(\epsilon) \cap \underline{\mT}(\varepsilon)|}(\mX)}{|\mT(\epsilon) \cap \underline{\mT}(\varepsilon)| }} \geq 0$ must hold; otherwise, $\mu(\bx_{\tilde{t}}; \bX_{\tilde{t}-1}; \by_{\tilde{t}-1}) + \beta_{\tilde{t}}^{1/2} \sigma(\bx_{\tilde{t}}; \bX_{\tilde{t}-1}) < \mu([\bx^{\ast}]_{\tilde{t}}; \bX_{\tilde{t}-1}; \by_{\tilde{t}-1}) + \beta_{\tilde{t}}^{1/2} \sigma([\bx^{\ast}]_{\tilde{t}};\bX_{\tilde{t}-1})$, which contradicts $\bx_{\tilde{t}} \in \argmax_{\bx \in \mX}\mu(\bx; \bX_{\tilde{t}-1}; \by_{\tilde{t}-1}) + \beta_{\tilde{t}}^{1/2} \sigma(\bx; \bX_{\tilde{t}-1})$.
    This further implies
    \begin{equation}
        \label{eq:tut_lb}
        |\mT(\epsilon) \cap \underline{\mT}(\varepsilon)| \leq \frac{16 C\beta_{\tilde{t}} \gamma_{|\mT(\epsilon) \cap \underline{\mT}(\varepsilon)|}(\mX)}{\varepsilon^2} \leq \frac{16 C\beta_{T} \gamma_{|\mT(\epsilon) \cap \underline{\mT}(\varepsilon)|}(\mX)}{\varepsilon^2}
    \end{equation}
    for any $\varepsilon > 0$.
    Furthermore,
    \begin{align}
        R_T^{(1)}(\varepsilon) 
        &= \sum_{t \in \mT(\epsilon)} f(\bx^{\ast}) - f(\bx_t) \\
        &= 2\csup \sqrt{\frac{2}{\varepsilon}} + \sum_{t \in \mT(\epsilon) \cap \underline{\mT}(\varepsilon)} f(\bx^{\ast}) - f(\bx_t) \\
        \label{eq:pre_leni_reg}
        &\leq 2\csup \sqrt{\frac{2}{\varepsilon}} + \frac{\pi^2}{6} + 2\sqrt{C \beta_T |\mT(\epsilon) \cap \underline{\mT}(\varepsilon)| \gamma_{|\mT(\epsilon) \cap \underline{\mT}(\varepsilon)|}(\mX)}
    \end{align}
    for any $\varepsilon > 0$. In the above expressions, the last inequality follows from \lemref{lem:reg_subset}.
    The remaining part of the proof is to substitute the quantity $|\mT(\epsilon) \cap \underline{\mT}(\varepsilon)|$ in Eq.~\eqref{eq:pre_leni_reg} into its upper bound, which is deduced from Eq.~\eqref{eq:tut_lb} depending on the kernel.

    \paragraph{For $k = \sek$.} 
    Under $k = \sek$, we crudely take the upper bound of $|\mT(\epsilon) \cap \underline{\mT}(\varepsilon)|$ as 
    \begin{equation}
        |\mT(\epsilon) \cap \underline{\mT}(\varepsilon)| \leq \frac{16 C\beta_{T} \gamma_{|\mT(\epsilon) \cap \underline{\mT}(\varepsilon)|}(\mX)}{\varepsilon^2} 
        \leq \frac{16 C\beta_{T} \gamma_{T}(\mX)}{\varepsilon^2}.
    \end{equation}
    The above upper bound implies $|\mT(\epsilon) \cap \underline{\mT}(\varepsilon)| = O\rbr{\beta_T \gamma_T(\mX)}$.
    Since $\gamma_T(\mX) = O(\ln^{d+1} T)$ under $k = \sek$,
    Eq.~\eqref{eq:pre_leni_reg} implies
    \begin{align}
        R_T^{(1)}(\varepsilon) &\leq 2\csup \sqrt{\frac{2}{\varepsilon}} + \frac{\pi^2}{6}
        + O\rbr{\sqrt{\beta_T (\beta_T \gamma_T(\mX)) \ln^{d+1}(\beta_T \gamma_T(\mX))}} \\
        &= O\rbr{\beta_T\sqrt{(\ln^{d+1} T) \ln^{d+1} (\ln^{d+2} T)}} \\
        &= O\rbr{\sqrt{(\ln T)^{d+3} (\ln \ln T)^{d+1}}} \\
        &= \tilde{O}(1).
    \end{align}

    \paragraph{For $k = \matk$.} 
    Set $\cmat > 0$ as the constant such that the following inequality holds:
    \begin{align}
        &\forall t \geq 2, \gamma_t(\mX) \leq \cmat t^{\frac{d}{2\nu+d}} \ln^{\frac{4\nu + d}{2\nu+d}} t.
    \end{align}
    The existence of $\cmat$ is guaranteed by the upper bound of MIG established in \corref{corollary:mig}.
    Then, if $|\mT(\epsilon) \cap \underline{\mT}(\varepsilon)| \geq 2$ holds, Eq.~\eqref{eq:tut_lb} implies
    \begin{align}
        &|\mT(\epsilon) \cap \underline{\mT}(\varepsilon)| \leq \frac{16 C\beta_{T} \cmat |\mT(\epsilon) \cap \underline{\mT}(\varepsilon)|^{\frac{d}{2\nu+d}} \ln^{\frac{4\nu + d}{2\nu+d}} |\mT(\epsilon) \cap \underline{\mT}(\varepsilon)|}{\varepsilon^2} \\
        &\Rightarrow 
        |\mT(\epsilon) \cap \underline{\mT}(\varepsilon)| \leq \frac{16 C\beta_{T} \cmat |\mT(\epsilon) \cap \underline{\mT}(\varepsilon)|^{\frac{d}{2\nu+d}} \ln^{\frac{4\nu + d}{2\nu+d}} T}{\varepsilon^2} \\
        &\Leftrightarrow 
        |\mT(\epsilon) \cap \underline{\mT}(\varepsilon)|^{\frac{2\nu}{2\nu + d}} \leq \frac{16 C\beta_{T} \cmat \ln^{\frac{4\nu + d}{2\nu+d}} T}{\varepsilon^2} \\
        &\Leftrightarrow 
        |\mT(\epsilon) \cap \underline{\mT}(\varepsilon)| \leq \rbr{\frac{16 C\beta_{T} \cmat \ln^{\frac{4\nu + d}{2\nu+d}} T}{\varepsilon^2}}^{1 + \frac{d}{2\nu}}.
    \end{align}
    Therefore, we have $|\mT(\epsilon) \cap \underline{\mT}(\varepsilon)| = \tilde{O}(1)$ under fixed $\varepsilon$, $d$, and $\nu$. Hence, from Eq.~\eqref{eq:pre_leni_reg}, we obtain $R_T^{(1)}(\varepsilon) = \tilde{O}(1)$.
\end{proof}

\section{Information Gain Upper Bound}
\label{sec:proof_mig}

Our analysis requires the upper bound of MIG with explicit dependence on the radius of the input domain. Several existing works~\citep{berkenkamp2019no,janz2022sequential,janz2020bandit} established such a result by extending the proof in \citep{srinivas10gaussian}. However, the proof strategy in \citep{srinivas10gaussian} result in $\tilde{O}(T^{\frac{d(d+1)}{2\nu + d(d+1)}})$ upper bound of MIG in Mat\'ern kernel, which is strictly worse than the best achievable $\tilde{O}(T^{\frac{d}{2\nu + d}})$ upper bound. \citet{vakili2021information} shows $\tilde{O}(T^{\frac{d}{2\nu+d}})$ upper bound of MIG with $\nu > 1/2$ under the uniform boundness assumption of the eigenfunctions. Furthermore, the following work~\citep{vakili2023kernelized} shows $\gamma_T(\{\bx \in \R^d \mid \|\bx\|_2 \leq \eta\}) = \tilde{O}(\eta^{\frac{2\nu d}{2\nu + d}}T^{\frac{d}{2\nu+d}})$ for any radius $\eta > 0$ if there exist eigenfunctions uniformly bounded without depending on $\eta > 0$.
Some of the related results supports the uniform boundness assumption under $d = 1$~\citep{janz2022sequential,yang2017frequentist}, or under the approximated version of the original Mat\'ern kernel~\citep{riutort2023practical,solin2020hilbert}; however, to our knowledge, we are not aware of any literature that rigorously support uniform boundness assumption under the general compact input domain with $d \geq 2$ and $\nu > 1/2$. See Chapter 4.4 in \citep{janz2022sequential} for the detailed discussion.
Therefore, this section's goal is twofold: (i) prove $\tilde{O}(T^{\frac{d}{2\nu+d}})$ upper bound as claimed in \citep{vakili2021information} without relying on the uniform boundness assumption, and (ii) clarify the explicit dependence on the input radius in the upper bound proved in (i). 

Below, we formally describe our MIG upper bound.

\begin{theorem}
    \label{thm:mig}
    Fix any $d \in \N_+$, $\lambda > 0$, and $T \in \N_+$.
    Let us assume $\mX = \{\bm{x} \in \R^d \mid \|\bx\|_2 \leq 1\}$. Then,
    \begin{itemize}
        \item For $k = \sek$, $\gamma_T(\mX)$ satisfies
        \begin{equation}
            \gamma_T(\mX) \leq 
                \frac{C_d^{(1)}}{\theta^d} \ln^{d+1} \rbr{1 + \frac{T}{\lambda^2}} + \ln \rbr{1 + \frac{T}{\lambda^2}} + C_d^{(2)} \exp\rbr{-\frac{2}{\theta} + \frac{1}{\theta^2}}
        \end{equation}
        if $\theta \leq e^2 c_d$ and $T/(e-1) \geq \lambda^2$. Furthermore, for any $\theta > e^2 c_d$, we have
        \begin{equation}
        \gamma_T(\mX) \leq \frac{C_d^{(3)}}{\ln^d \rbr{\frac{\theta}{e c_d}}} \ln^{d+1} \rbr{1 + \frac{T}{\lambda^2}} + C_d^{(4)} \ln \rbr{1 + \frac{T}{\lambda^2}} + C_d^{(5)}.
        \end{equation}
        Here, we set $\theta = 2\ell^2$ and $c_d = \max\cbr{1, \exp\rbr{\frac{1}{e}\rbr{\frac{d}{2} - 1}}}$. Furthermore, 
        $C_d^{(1)}, C_d^{(2)}, C_d^{(3)}, C_d^{(4)}, C_d^{(5)} > 0$ are the constants only depending on $d$.
        
        \item For $k = \matk$ with $\nu > 1/2$, $\gamma_T(\mX)$ satisfies
        \begin{equation}
            \label{eq:mig_mat}
            \gamma_T(\mX) \leq C(T, \nu, \lambda) \overline{\gamma}_T + C
        \end{equation}
        where $C(T, \nu, \lambda) = \max\cbr{1, \log_2 \rbr{1 + \frac{\Gamma(\nu)}{C_{\nu}} \ln \frac{T^2}{\lambda^2}} + \frac{1}{\nu} \log_2 \rbr{\frac{T^2}{\nu \Gamma(\nu) \lambda^2}} + 1}$. Here, $C_{\nu} > 0$ and $C > 0$ are the constant that only depends on $\nu > 0$, and an absolute constant, respectively. Furthermore, $\overline{\gamma}_T$ is defined as
        \begin{equation}
            \overline{\gamma}_T = C_{d, \nu}^{(1)} \ln \rbr{1 + \frac{2T}{\lambda^2}} + C_{d, \nu}^{(2)} \rbr{\frac{T}{\lambda^2 \ell^{2\nu}}}^{\frac{d}{2\nu + d}} \ln^{\frac{2\nu}{2\nu+d}} \rbr{1 + \frac{2T}{\lambda^2}},
        \end{equation}
        where $C_{d, \nu}^{(1)}, C_{d, \nu}^{(2)} > 0$ are the constants only depending on $d$ and $\nu$.
    \end{itemize}
\end{theorem}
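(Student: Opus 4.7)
The plan is to reduce the bound on $\gamma_T(\mX)$ to a spectral problem for the integral operator $T_k f(\bx) = \int_\mX k(\bx,\bx') f(\bx')\,\d\mu(\bx')$ associated with $k$ on the unit ball, where $\mu$ is a convenient reference measure (uniform). By Mercer's theorem one writes $k(\bx,\bx') = \sum_{m\ge 1} \lambda_m \phi_m(\bx)\phi_m(\bx')$ with $(\lambda_m)$ non-increasing and $\{\phi_m\}$ orthonormal in $L^2(\mX,\mu)$. Starting from the variational form $\gamma_T(\mX) = \frac{1}{2}\sup_{\bx_1,\ldots,\bx_T\in\mX}\ln\det(\bI_T + \lambda^{-2}\bK(\bX_T,\bX_T))$, the standard argument of \citet{srinivas10gaussian} (their Appendix~A) yields, for any integer $M\ge 1$,
\begin{equation*}
\gamma_T(\mX) \le \frac{1}{2}\sum_{m=1}^{M}\ln\rbr{1 + \lambda^{-2} T B_m \lambda_m} + \frac{T}{2\lambda^{2}}\sum_{m>M} B_m \lambda_m,
\end{equation*}
with $B_m = \|\phi_m\|_\infty^2$. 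The problem then splits into (i) sharp decay rates for $\lambda_m$ in terms of the lengthscale, and (ii) controlling the eigenfunction norms $B_m$ without imposing a uniform bound.

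For $k = \sek$, I would invoke the classical super-geometric decay $\lambda_m \le C_d \rho^{m^{1/d}}$, with $\rho\in(0,1)$ explicit in $\theta = 2\ell^2$, derived from the tensorised Hermite/ultraspherical expansion of the SE kernel restricted to the ball. Truncating at $M \asymp \ln^{d}(T/\lambda^2)$ makes the tail $O(1)$ and the head $O(\ln^{d+1}(T/\lambda^2))$. The split into the regimes $\theta \le e^2 c_d$ and $\theta > e^2 c_d$ comes from optimising this cutoff: when the lengthscale is small compared with the domain, the prefactor $C_d^{(1)}/\theta^d$ is correct; when $\theta$ is large, the relevant spectral index is much smaller and $1/\ln^d(\theta/(ec_d))$ takes over. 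The threshold $c_d$ is precisely what equates the two bounds at their crossing point and produces the additive residual $C_d^{(2)}\exp(-2/\theta + 1/\theta^2)$.

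For $k = \matk$, the eigenvalues decay only polynomially, $\lambda_m \lesssim \ell^{-2\nu} m^{-(2\nu+d)/d}$, by Weyl asymptotics for the associated Sobolev-type operator on the ball. Substituting into the truncated bound and optimising $M \asymp (T/\lambda^2)^{d/(2\nu+d)} \ln^{-d/(2\nu+d)}(T/\lambda^2)$ yields the base rate $\overline\gamma_T = \tilde O(T^{d/(2\nu+d)})$. The multiplicative factor $C(T,\nu,\lambda) = O(\log T)$ encodes the cost of avoiding a uniform eigenfunction bound: I would dyadically group the indices by eigenvalue scale, $B_j = \{m : \lambda_m \in [2^{-j-1},2^{-j})\}$, and use the pointwise Mercer identity $\sum_m \lambda_m \phi_m(\bx)^2 = k(\bx,\bx) \le 1$ to get $\sum_{m\in B_j}\phi_m(\bx)^2 \le 2^{j+1}$ uniformly in $\bx\in\mX$. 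Applying the Srinivas-type inequality block by block and summing over the $O(\log T)$ dyadic levels active at horizon $T$ reproduces $\overline\gamma_T$ with the $C(T,\nu,\lambda)$ overhead, with no appeal to uniformly bounded individual eigenfunctions.

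The main obstacle is exactly this Mat\'ern step: recovering the correct $T^{d/(2\nu+d)}$ exponent without the uniform-boundedness hypothesis of \citet{vakili2021information} is what forces the extra $O(\log T)$ factor, and ensuring that the optimisation of $M$ carries through each dyadic block with constants that can be collected into a single pair $C_{d,\nu}^{(1)}, C_{d,\nu}^{(2)}$ requires careful bookkeeping of the geometric sum of blockwise contributions. For the SE kernel the analogous difficulty is merely tracking the explicit dependence on $\theta = 2\ell^2$ through the two regimes, which is routine algebra once the eigenvalue bound is in hand.
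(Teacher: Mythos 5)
There is a genuine gap, and it sits exactly at the point you identify as the main obstacle. Your mechanism for avoiding the uniform eigenfunction bound does not work: from the pointwise Mercer identity $\sum_m \lambda_m\phi_m(\bx)^2 = k(\bx,\bx)\le 1$ and the dyadic blocks $B_j=\{m:\lambda_m\in[2^{-j-1},2^{-j})\}$ you get $\sum_{m\in B_j}\phi_m(\bx)^2\le 2^{j+1}$, hence $\sum_{m\in B_j}\lambda_m\phi_m(\bx)^2\le 2$ --- a constant \emph{per block} that does not decay with $j$. The quantity you must control in the Srinivas-type tail is $\sum_{m>M}\lambda_m\phi_m(\bx)^2$ uniformly in $\bx$, and you need it to be of order $\lambda^2/T$ after truncation; summing a non-decaying $O(1)$ contribution over the $O(\log T)$ active blocks gives a tail term of order $T\log T/\lambda^2$, which destroys the bound. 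The same issue infects your head term, which as written still requires the individual $B_m=\|\phi_m\|_\infty^2$. The Mercer identity simply cannot rule out that spectral mass at a given point $\bx$ concentrates on high-index eigenfunctions; this is precisely why the uniform-boundedness assumption of \citet{vakili2021information} is nontrivial and why no known non-asymptotic eigenfunction bounds exist for the ball with $d\ge 2$.

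The paper's resolution is structurally different. It first \emph{lifts} the unit ball in $\R^d$ to the hypersphere $\mbS^d\subset\R^{d+1}$ via $\bx\mapsto(x_1,\dots,x_d,\sqrt{1-\|\bx\|_2^2})$ and Oppenheim's inequality for Hadamard products (\lemref{lem:mig_sphere}): for SE the element-wise quotient matrix is itself an SE Gram matrix and hence PSD; for Mat\'ern one uses the lengthscale scale-mixture representation, splits the mixture into a negligible head, a negligible tail, and $Q=O(\log T)$ dyadic lengthscale blocks on each of which Oppenheim applies --- this mixture decomposition, not an eigenvalue grouping, is the true origin of the $C(T,\nu,\lambda)=O(\log T)$ factor. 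On the sphere the Mercer eigenfunctions are spherical harmonics, and the addition theorem gives the exact identity $\sum_{j=1}^{N_{d+1,m}}Y_{m,j}(\bx)^2=N_{d+1,m}/|\mbS^{d+1}|$, so the tail trace is controlled by $\sum_{m>M}\lambda_m N_{d+1,m}$, which genuinely decays once one has the eigenvalue bounds (Minh's theorem for SE; a Bessel-integral computation from the Mat\'ern spectral density for $\nu>1/2$). Your eigenvalue-decay claims on the ball are also not off-the-shelf with explicit lengthscale dependence (Hermite expansions concern the Gaussian measure on $\R^d$, not the uniform measure on the ball), but the unfixable step is the tail control, not the spectrum.
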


We also obtain the following corollary by adjusting the lengthscale parameter $\ell > 0$ based on the radius of the input domain.
\begin{corollary}
    \label{corollary:mig}
    Fix any $d \in \N_+$, $\lambda > 0$, $T \in \N_+$, $\eta > 0$.
    Let us assume $\mX = \{\bm{x} \in \R^d \mid \|\bx\|_2 \leq \eta\}$. 
    Then, 
    \begin{itemize}
        \item For $k = \sek$, $\gamma_T(\mX)$ satisfies
        \begin{equation}
        \gamma_T(\mX) \leq \frac{C_d^{(3)}}{\ln^d \rbr{\frac{2\ell^2}{\eta^2 e c_d}}} \ln^{d+1} \rbr{1 + \frac{T}{\lambda^2}} + C_d^{(4)} \ln \rbr{1 + \frac{T}{\lambda^2}} + C_d^{(5)}.
        \end{equation}
        if $2\ell^2/\eta^2 > e^2 c_d$.
        
        \item For $k = \matk$ with $\nu > 1/2$, $\gamma_T(\mX)$ satisfies Eq.~\eqref{eq:mig_mat}, with
        \begin{equation}
            \overline{\gamma}_T = C_{d, \nu}^{(1)} \ln \rbr{1 + \frac{2T}{\lambda^2}} + C_{d, \nu}^{(2)} \eta^{\frac{2\nu d}{2\nu+d}} \rbr{\frac{T}{\lambda^2 \ell^{2\nu}}}^{\frac{d}{2\nu + d}} \ln^{\frac{2\nu}{2\nu+d}} \rbr{1 + \frac{2T}{\lambda^2}}.
        \end{equation}
    \end{itemize}
    The constants in the above statements are the same as those in \thmref{thm:mig}.
\end{corollary}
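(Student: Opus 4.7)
The plan is to prove Corollary~\ref{corollary:mig} by a simple rescaling reduction to Theorem~\ref{thm:mig}. The key structural observation is that both $\sek$ and $\matk$ are stationary kernels depending on the inputs only through the ratio $\|\bx-\tilde{\bx}\|_2/\ell$, so rescaling the input domain can be absorbed into rescaling the lengthscale parameter. Concretely, I consider the bijection $\phi:\mX\to\mX'$ defined by $\phi(\bx)=\bx/\eta$, which sends $\mX=\{\bx\in\R^d:\|\bx\|_2\le\eta\}$ exactly onto the unit ball $\mX'=\{\bx'\in\R^d:\|\bx'\|_2\le 1\}$. The identity $\|\bx-\tilde{\bx}\|_2/\ell=\|\phi(\bx)-\phi(\tilde{\bx})\|_2/(\ell/\eta)$ shows that the kernel on $\mX$ with lengthscale $\ell$ coincides, under $\phi$, with the kernel of the same family on $\mX'$ but with lengthscale $\ell/\eta$. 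Because MIG is the supremum over input sequences of a functional of the gram matrix, and $\phi$ preserves these gram matrices while biject-ing the feasible region, I conclude $\gamma_T(\mX;k_\ell)=\gamma_T(\mX';k_{\ell/\eta})$, where the subscript denotes the lengthscale and $\lambda$ is unaffected (it appears only as regularization).

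It then suffices to apply Theorem~\ref{thm:mig} with $\ell$ replaced by $\ell/\eta$ and read off the resulting expressions. For $\sek$, the parameter $\theta=2\ell^2$ becomes $\theta=2\ell^2/\eta^2$, so the corollary's hypothesis $2\ell^2/\eta^2>e^2 c_d$ is exactly the hypothesis $\theta>e^2 c_d$ needed to invoke the second (large-$\theta$) branch of the SE bound in Theorem~\ref{thm:mig}; substituting yields the corollary's SE inequality verbatim. For $\matk$, the quantity $\ell^{2\nu}$ in the dominating term of $\overline{\gamma}_T$ becomes $(\ell/\eta)^{2\nu}=\ell^{2\nu}/\eta^{2\nu}$, which produces
\[
\rbr{\frac{T}{\lambda^2(\ell/\eta)^{2\nu}}}^{\frac{d}{2\nu+d}}=\eta^{\frac{2\nu d}{2\nu+d}}\rbr{\frac{T}{\lambda^2\ell^{2\nu}}}^{\frac{d}{2\nu+d}},
\]
matching the corollary's Mat\'ern expression. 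All other terms are untouched, since the constants $C_{d,\nu}^{(1)},C_{d,\nu}^{(2)}$ (and the $C_d^{(i)}$ for SE) depend only on $d$ and $\nu$, and the prefactor $C(T,\nu,\lambda)$ does not involve $\ell$.

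The main ``obstacle'' is really bookkeeping rather than substance: one must verify that the constants appearing in Theorem~\ref{thm:mig} are genuinely independent of the lengthscale, so that the substitution $\ell\to\ell/\eta$ does not smuggle in $\eta$-dependence through them; this is immediate from inspection of the theorem's proof, where $\ell$ enters only through the explicit factors already written. A second minor point is that the corollary deliberately omits the small-$\theta$ branch of the SE bound from Theorem~\ref{thm:mig}: the restriction $2\ell^2/\eta^2>e^2 c_d$ selects the regime relevant for our later use (where $\eta$ shrinks), and no additional content is needed. With these checks, the corollary follows with no further work.
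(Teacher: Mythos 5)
Your proposal is correct and is exactly the argument the paper intends: the paper derives the corollary from Theorem~\ref{thm:mig} "by adjusting the lengthscale parameter $\ell$ based on the radius of the input domain," which is precisely your rescaling $\phi(\bx)=\bx/\eta$, $\ell\mapsto\ell/\eta$. Your substitutions ($\theta=2\ell^2/\eta^2$ for SE, $\ell^{2\nu}\mapsto\ell^{2\nu}/\eta^{2\nu}$ for Mat\'ern) and the check that the remaining constants and the prefactor $C(T,\nu,\lambda)$ are $\ell$-free match the paper's statement verbatim.
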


While the above results ignore the explicit dependence on $d$ and $\nu$, all the other parameters are explicitly stated in our upper bound of MIG.
We would like to emphasize that we do not rely on the uniform boundness assumption of the eigenfunctions to prove the above results.
In the above results for $k = \sek$, we obtain the same $O(\ln^{d+1} T)$ upper bound as that in \citep{vakili2021information} except for the constant factor. 
For $k = \matk$, we also obtain the same $\tilde{O}(T^{\frac{d}{2\nu +d}})$ upper bound as that in \citep{vakili2021information}, while the logarithmic dependence get worse from $O\rbr{\ln^{d/(2\nu+d)} T}$ to $O\rbr{\ln^{(4\nu + d)/(2\nu+d)} T}$. Furthermore, the above result reveals the explicit dependence of the radius $\eta$ of the input domain. Regarding the case $k = \matk$, our result suggests $\tilde{O}(\eta^{\frac{2\nu d}{2\nu + d}} T^{\frac{d}{2\nu+d}})$ upper bound of MIG, which is consistent with that in \citep{vakili2023kernelized} with uniform boundness assumption.

\paragraph{Proof overview.}
The basic proof strategy follows that in \citep{vakili2021information}, which leverages the Mercer decomposition of the kernel.
To bypass the uniform boundness assumption in the proof of \citep{vakili2021information}, we must resort to other specific properties of the eigenfunction. However, except for some exceptional cases, the eigenfunction of the kernel on the general compact domain is difficult to specify in an analytical form and complex to analyze.
To avoid this issue, instead of studying the original definition of the MIG on $\R^d$, we consider reducing the original MIG on $\mX \coloneqq \{\bx \in \R^d \mid \|\bx\|_2 \leq 1\}$ to that on a hypersphere $\mbS^{d} \coloneqq \{\bx \in \R^{d+1} \mid \|\bx\|_2 = 1\}$ defined in $\R^{d+1}$. 
The eigensystems on $\mbS^{d}$ are one of the exceptional cases, whose eigenfunctions are specified as a special function on $\mbS^{d}$, called \emph{spherical harmonics}~\citep{atkinson2012spherical,efthimiou2014spherical}. 
Indeed, by using the addition theorem of the spherical harmonics (\thmref{thm:addition}), the existing works~\citep{iwazaki2024no,kassraie2022neural,vakili2021uniform} already demonstrated that the upper bound of MIG on $\mbS^{d}$ is proved as with \citep{vakili2021information} without the uniform boundness assumption. We use their technique to show the upper bound of MIG under SE and Mat\'ern kernels on $\R^d$, while the original motivation of these existing works is to study the MIG under the neural tangent kernel on a hypersphere. The remaining parts of this section are constructed as follows:
\begin{itemize}
    \item In \secref{sec:reduce_mig}, we show our core result (\lemref{lem:mig_sphere}) that guarantees the MIG on $\{\bx \in \R^d \mid \|\bx\|_2 \leq 1\}$ is bounded from above by that on $\mbS^d$ up to logarithmic factor.
    \item In \secref{sec:mercer_decomp_sphere}, we summarize the basic known results about Mercer decomposition on $\mbS^{d}$, which is the foundation of the following subsections.
    \item In \secref{sec:mig_mercer_decomp}, we provide the general upper bound of the information gain on $\mbS^{d}$ (\lemref{lem:mig_gen_ub}), represented by the kernel function's eigenvalues. This subsection's result has no intrinsic change from those in \citep{kassraie2022neural,vakili2021uniform}; however, we provide details for completeness.
    \item In \secref{sec:se_mat_decay}, we provide the upper bound of the decaying rate of the eigenvalues in SE and Mat\'ern kernels.
    \item In \secref{sec:proof_of_sphere_mig}, we establish the full proof of \thmref{thm:mig} based on the results in Sections~\ref{sec:reduce_mig}--\ref{sec:se_mat_decay}.
\end{itemize}

\subsection{Reduction of the MIG on $\R^d$ to $\mbS^{d}$}
\label{sec:reduce_mig}
\begin{lemma}[Reduction to the hypersphere in $\R^{d+1}$]
    \label{lem:mig_sphere}
    Fix any $d \in \N_+$, $\lambda > 0$, and $T \in \N_+$. 
    Suppose $\mX = \{(x_1, \ldots, x_d, 0)^{\top} \in \R^{d+1} \mid \sum_{i=1}^d x_i^2 \leq 1\}$, and define $\mbS^d$ as $\mbS^d = \{ \bx \in \R^{d+1} \mid \|\bx\|_2 = 1 \}$. 
    Then, 
    \begin{itemize}
        \item For $k = \sek$, we have
        \begin{equation}
        \max_{\bx_1, \ldots, \bx_T \in \mX} \ln \det (\bI_T + \lambda^{-2} \bK(\bX_T, \bX_T))
        \leq \max_{\bx_1, \ldots, \bx_T \in \mbS^d} \ln \det (\bI_T + \lambda^{-2} \bK(\bX_T, \bX_T)).
        \end{equation}
        \item For $k = \matk$, we have
        \begin{align}
        &\max_{\bx_1, \ldots, \bx_T \in \mX} \ln \det (\bI_T + \lambda^{-2} \bK(\bX_T, \bX_T)) \\
        &\leq  C(T, \nu, \lambda) \max_{\bx_1, \ldots, \bx_T \in \mbS^d} \ln \det (\bI_T + 2 \lambda^{-2} \bK(\bX_T, \bX_T)) + C,
        \end{align}
        where $C(T, \nu, \lambda) = \max\cbr{0, \log_2 \rbr{1 + \frac{\Gamma(\nu)}{C_{\nu}} \ln \frac{T^2}{\lambda^2}} + \frac{1}{\nu} \log_2 \rbr{\frac{T^2}{\nu \Gamma(\nu) \lambda^2}} + 1}$. Here, $C_{\nu} > 0$ is the constant that only dependes on $\nu > 0$. Furthermore, $C > 0$ is an absolute constant.
    \end{itemize}
\end{lemma}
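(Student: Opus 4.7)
The plan is to embed the ball $\mX$ into the hypersphere $\mbS^{d}$ and then perform an operator-level comparison between the resulting kernel matrices. For the SE case, I would exploit the factorization
$\sek(\bx, \bx') = \exp\bigl(-\|\bx\|_2^2/(2\ell^2)\bigr)\, \exp\bigl(-\|\bx'\|_2^2/(2\ell^2)\bigr)\, \exp\bigl(\bx^\top \bx'/\ell^2\bigr)$
to write the ball kernel matrix as $\bK(\bX_T, \bX_T) = D \mathbf{M} D$, where $D = \mathrm{diag}\bigl(e^{-\|\bx_i\|_2^2/(2\ell^2)}\bigr)$ satisfies $\|D\|_{\mathrm{op}} \leq 1$ and $\mathbf{M}_{ij} = e^{\bx_i^\top \bx_j/\ell^2}$ is PSD. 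The classical eigenvalue inequality $\lambda_i(D \mathbf{M} D) \leq \lambda_i(\mathbf{M})$, which follows from $\|D\|_{\mathrm{op}} \leq 1$ via Courant--Fischer, then yields $\ln \det (\bI_T + \lambda^{-2} \bK(\bX_T, \bX_T)) \leq \ln \det(\bI_T + \lambda^{-2} \mathbf{M})$.

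Next, I would lift each $\bx_i$ to the hemisphere point $\bu_i = (\bx_i, \sqrt{1 - \|\bx_i\|_2^2}) \in \mbS^d$, treating $\bx_i \in \R^{d+1}$ with last coordinate zero. Setting $s_i = \sqrt{1 - \|\bx_i\|_2^2}$, the sphere inner-product matrix $\widetilde{\mathbf{M}}_{ij} = e^{\bu_i^\top \bu_j/\ell^2}$ dominates $\mathbf{M}$ in PSD order, because
$\widetilde{\mathbf{M}} - \mathbf{M} = \mathbf{M} \circ \bigl(e^{s_i s_j/\ell^2} - 1\bigr)_{ij}$
is a Schur product of two PSD matrices: the second factor expands as the nonnegative series $\sum_{k \geq 1} (s_i s_j)^k/(k! \ell^{2k})$, a sum of rank-one PSD kernels. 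Combining this with $\bK(\mathbf{U}_T, \mathbf{U}_T) = e^{-1/\ell^2}\widetilde{\mathbf{M}}$ and another eigenvalue comparison gives the SE bound, modulo a multiplicative constant $e^{1/\ell^2}$ that needs to be absorbed by exploiting the additional freedom in choosing sphere configurations (e.g., by a rescaling argument, or by noting that the sphere maximum is achieved on a strictly larger configuration space than the hemisphere lift).

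For the Matern case, no such clean product factorization exists, so I would instead use the Gaussian scale-mixture representation
$\matk(\bx, \bx') \propto \int_0^\infty g_\nu(s)\, e^{-s \|\bx - \bx'\|_2^2/\ell^2}\, \mathrm{d}s$,
which decomposes Matern into a weighted integral of rescaled SE kernels. Discretizing this integral over $O(\log T)$ dyadically spaced scales in $s$ and applying the SE-type argument at each scale produces a sum of contributions bounded by the sphere MIG. The factor $C(T, \nu, \lambda)$, growing like $\log T / \nu$, then naturally arises as the number of scales needed to cover the effective spectral support; the additive constant $C$ bounds the truncation residual from extreme scales; and the factor $2$ on $\lambda^{-2}$ on the sphere side likely originates from splitting each scale's contribution into a principal term handled on the sphere and a correction term absorbed into $C$.

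The main obstacle throughout is obtaining the precise constants stated in the lemma---specifically, making the $e^{1/\ell^2}$ factor disappear cleanly in the SE bound (which may require a more refined embedding than the simple hemisphere lift, or an averaging argument over the antipodal lift $\bu_i \mapsto -\bu_i$) and tightly controlling both the scale discretization error and the sphere-side factor of $2$ in the Matern case. Of these, I expect the Matern discretization to be the more delicate step, since aligning the logarithmic growth of $C(T,\nu,\lambda)$ with the dyadic scale count requires a careful choice of partition endpoints calibrated to $\Gamma(\nu)$ and $C_\nu$.
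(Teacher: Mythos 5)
There is a genuine gap, and it is not the cosmetic constant you hope it is. Your SE chain passes through $\mathbf{M}_{ij} = e^{\bx_i^\top \bx_j/\ell^2}$ and its spherical lift $\widetilde{\mathbf{M}}_{ij} = e^{\bu_i^\top \bu_j/\ell^2}$, but the SE kernel matrix on the sphere is $\bK(\mathbf{U}_T,\mathbf{U}_T) = e^{-1/\ell^2}\widetilde{\mathbf{M}}$, so your argument terminates at $\ln\det\rbr{\bI_T + \lambda^{-2} e^{1/\ell^2}\bK(\mathbf{U}_T,\mathbf{U}_T)}$ rather than the claimed bound with $\lambda^{-2}$. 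No rescaling of the configuration space or antipodal averaging removes a multiplicative constant sitting inside the determinant; at best you trade it for a degraded noise parameter $\lambda e^{-1/(2\ell^2)}$ or a multiplicative $e^{1/\ell^2}$ outside the log-det, neither of which is the lemma. The missing idea is to compare the two kernel matrices \emph{directly} via Oppenheim's inequality $\det(A \odot B) \geq \det(B)\prod_i A_{ii}$ applied to $B = \bI_T + \lambda^{-2}\bK(\bX_T,\bX_T)$ and the elementwise ratio $A_{ij} = k(\tilde\bx_i,\tilde\bx_j)/k(\bx_i,\bx_j) = \exp\rbr{-\rbr{s_i - s_j}^2/(2\ell^2)}$ with $s_i = \sqrt{1-\|\bx_i\|_2^2}$. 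This ratio matrix is itself a one-dimensional SE Gram matrix (hence PSD) with \emph{unit diagonal}, so Oppenheim yields $\det(\bI_T + \lambda^{-2}\bK(\tilde\bX_T,\tilde\bX_T)) \geq \det(\bI_T + \lambda^{-2}\bK(\bX_T,\bX_T))$ with no spurious constant. That is the paper's entire SE proof.

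The gap is fatal, not merely inelegant, once you reach the Mat\'ern case. Your plan is to apply the SE-type argument at each dyadic scale $z$ of the mixture $\matk \propto \int_0^\infty z^{\nu-1}e^{-z}\exp\rbr{-\|\bx-\bx'\|_2^2/(2\ell^2 z \nu^{-1})}\,\mathrm{d}z$, where the effective squared lengthscale is $\ell^2 z/\nu$. The smallest scale that must be retained (the tails below $\eta_1 \asymp (\lambda^2/T^2)^{1/\nu}$ and above $\eta_2 \asymp \ln(T^2/\lambda^2)$ can be discarded by a Frobenius-norm bound on the top eigenvalue, contributing the additive $C$) forces your constant to be $e^{\nu/(\ell^2\eta_1)} = \exp\rbr{\Theta\rbr{(T^2/\lambda^2)^{1/\nu}}}$, i.e., $\exp(\mathrm{poly}(T))$, which destroys the bound. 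The unit-diagonal Oppenheim comparison is scale-uniform and avoids this entirely. Also, the factor $2$ in $2\lambda^{-2}$ does not come from splitting each scale into a principal and a correction term: it comes from a positive-definiteness domination, namely that $2\exp\rbr{-u^2/(2\ell^2\nu^{-1}z)} - \exp\rbr{-u^2/(2\ell^2\nu^{-1}z_{\max})}$ is a positive definite kernel for $z \leq z_{\max}$, which lets one dominate the Oppenheim-corrected band kernel $\tilde{\bK}_2^{(q)}$ by $2\bK_2^{(q)}(\tilde\bX_T,\tilde\bX_T)$ in PSD order and then by $2\bK(\tilde\bX_T,\tilde\bX_T)$, with the dyadic band count $Q = O\rbr{\log_2(\eta_2/\eta_1)}$ supplying $C(T,\nu,\lambda)$. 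Your scale-discretization skeleton is right; the per-scale comparison inside it needs to be replaced.
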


\begin{proof}
    For any $\bx_1, \ldots, \bx_T \in \mX$, we construct the new input sequence $\tilde{\bx}_1, \ldots, \tilde{\bx}_T$ on $\mbS^d$, where $\tilde{\bx}_i = \rbr{x_{i, 1}, \ldots, x_{i, d}, \sqrt{1 - \sum_{j=1}^d x_{i, j}^2}}^{\top}$. 
    \paragraph{Under $k = \sek$.}
    It is enough to show the following inequality:
    \begin{equation}
        \det (\bI_T + \lambda^{-2} \bK(\bX_T, \bX_T))
        \leq \det (\bI_T + \lambda^{-2} \bK(\tilde{\bX}_T, \tilde{\bX}_T)),
    \end{equation}
    where $\tilde{\bX}_{T} = (\tilde{\bx}_{1}, \ldots, \tilde{\bx}_T)$.
    From the definition of $\tilde{\bx}_{i}$, we rewrite R.H.S. in the above inequality as
    \begin{equation}
        \label{eq:had_rep}
        \det (\bI_T + \lambda^{-2} \bK(\tilde{\bX}_T, \tilde{\bX}_T)) = \det (\tilde{\bK} \odot (\bI_T + \lambda^{-2} \bK(\bX_T, \bX_T)),
    \end{equation}
    where $[\tilde{\bK}]_{i, j} = k(\tilde{\bx}_i, \tilde{\bx}_j)/k(\bx_i, \bx_j)$. Here, $ A\odot B$ denotes the Hadamard product of the matrices $A$ and $B$. 
    Then, Oppenheim inequality~(e.g.,~Theorem~7.27 in \citep{zhang2011matrix}) implies $\det(A \odot B) \geq 
    \det(B) \prod_{i} A_{ii}$ for any positive semi-definite matrices $A$ and $B$. 
    Therefore, if $\tilde{\bK}$ is a positive semi-definite matrix, Eq.~\eqref{eq:had_rep} immediately implies
    \begin{equation}
        \det (\bI_T + \lambda^{-2} \bK(\tilde{\bX}_T, \tilde{\bX}_T)) \geq \det(\bI_T + \lambda^{-2} \bK(\bX_T, \bX_T)) \prod_{i \in [T]} \tilde{\bK}_{ii} = \det(\bI_T + \lambda^{-2} \bK(\bX_T, \bX_T)).
    \end{equation}
    From the definition of $\sek$ and $\tilde{\bx}_i$, we have
    \begin{align}
        \frac{k(\tilde{\bx}_i, \tilde{\bx}_j)}{k(\bx_i, \bx_j)} 
        &= \exp\rbr{-\frac{\|\bx_i - \bx_j\|_2^2 + \rbr{\sqrt{1 - \|\bx_i\|_2^2} - \sqrt{1 - \|\bx_j\|_2^2}}^2}{2\ell^2}+\frac{\|\bx_i - \bx_j\|_2^2}{2\ell^2}} \\
        &= \exp\rbr{-\frac{\rbr{\sqrt{1 - \|\bx_i\|_2^2} - \sqrt{1 - \|\bx_j\|_2^2}}^2}{2\ell^2}}.
    \end{align}
    The above equation suggests that $\tilde{\bK}$ is equal to the kernel matrix of the one-dimensional SE-kernel, whose inputs are transformed by $\sqrt{1 - \|\cdot\|_2^2}$. Since the SE kernel is positive definite, the matrix $\tilde{\bK}$ is also positive semi-definite, and we complete the proof for $k = \sek$.
    \paragraph{Under $k = \matk$.}
    Similarly to the proof for $k = \sek$, we consider the application of Oppenheim inequality; however, the positive semi-definiteness of element-wise quotient matrix $\tilde{\bK}$ is unknown for $k = \matk$. To avoid this problem, we leverage the following representation of $\matk$, which is given as the form of the lengthscale mixture of the SE kernel~\citep{tronarp2018mixture}:
    \begin{equation}
        k(\bx, \tilde{\bx}) = \frac{1}{\Gamma(\nu)} \int_0^{\infty} z^{\nu-1} e^{-z} \exp\rbr{-\frac{\|\bx - \tilde{\bx}\|_2^2 }{2\ell^2 z\nu^{-1}}} \mathrm{d}z.
    \end{equation}
    Based on the above representation, we decompose the original kernel function $k$ into the following three components:
    \begin{equation}
        k(\bx, \tilde{\bx}) = k_1(\bx, \tilde{\bx}) + k_2(\bx, \tilde{\bx}) + k_3(\bx, \tilde{\bx}),
    \end{equation}
    where:
    \begin{align}
        k_1(\bx, \tilde{\bx}) &= \frac{1}{\Gamma(\nu)} \int_0^{\eta_1} z^{\nu-1} e^{-z} \exp\rbr{-\frac{\|\bx - \tilde{\bx}\|_2^2 }{2\ell^2 z\nu^{-1}}} \mathrm{d}z, \\
        k_2(\bx, \tilde{\bx}) &= \frac{1}{\Gamma(\nu)} \int_{\eta_1}^{\eta_2} z^{\nu-1} e^{-z} \exp\rbr{-\frac{\|\bx - \tilde{\bx}\|_2^2 }{2\ell^2 z\nu^{-1}}} \mathrm{d}z, \\
        k_3(\bx, \tilde{\bx}) &= \frac{1}{\Gamma(\nu)} \int_{\eta_2}^{\infty} z^{\nu-1} e^{-z} \exp\rbr{-\frac{\|\bx - \tilde{\bx}\|_2^2 }{2\ell^2 z\nu^{-1}}} \mathrm{d}z
    \end{align}
    with some $\eta_2 > \eta_1 > 0$\footnote{Note that the linear combination of the positive definite kernel with non-negative coefficients and its limit are also positive definite. Therefore, $k_1, k_2$, and $k_3$ are also positive definite as far as $\eta_2 > \eta_1$.}. Then, as with the proof of Theorem~3 in \citep{krause2011contextual}, we have
    \begin{align}
        \label{eq:mig_decomp}
        \begin{split}
        &\ln \det(\bI_T + \lambda^{-2} \bK(\bX_T, \bX_T)) \\
        &\leq \underbrace{\ln \det(\bI_T + \lambda^{-2} \bK_1(\bX_T, \bX_T))}_{(i)}
        + \underbrace{\ln \det(\bI_T + \lambda^{-2} \bK_2(\bX_T, \bX_T))}_{(ii)}
        + \underbrace{\ln \det(\bI_T + \lambda^{-2} \bK_3(\bX_T, \bX_T))}_{(iii)},
        \end{split}
    \end{align}
    where $\bK_1(\bX_T, \bX_T), \bK_2(\bX_T, \bX_T)$, and $\bK_3(\bX_T, \bX_T)$ are the kernel matrix of $k_1$, $k_2$, and $k_3$, respectively. We set sufficiently small $\eta_1$ and large $\eta_2$ so that the crude upper bound of the first term (i) and the third term (iii) are sufficiently small. For this purpose, the following settings of $\eta_1$ and $\eta_2$ are sufficient (we confirm in the next paragraphs):
    \begin{align}
        \eta_1 = \rbr{\frac{\nu \Gamma(\nu) \lambda^2}{T^2}}^{\frac{1}{\nu}},~\eta_2 = \max\cbr{1, \frac{\Gamma(\nu)}{C_{\nu}} \ln \frac{T^2}{\lambda^2}},
    \end{align}
    where $C_{\nu} > 0$ is the constant such that $\forall z \geq 1,~ z^{\nu - 1} e^{-z} \leq C_\nu e^{-z/2}$. Hereafter, we suppose that $\eta_1 < \eta_2$ holds with the above definition. The case $\eta_1 \geq \eta_2$ is considered in the last parts of the proof.

    \paragraph{Upper bound for the first term (i).} From the definition of $k_1$, we have
    \begin{align}
        |k_1(\bx, \tilde{\bx})| 
        &= \frac{1}{\Gamma(\nu)} \int_0^{\eta_1} z^{\nu-1} e^{-z} \exp\rbr{-\frac{\|\bx - \tilde{\bx}\|_2^2 }{2\ell^2 z\nu^{-1}}} \mathrm{d}z \\
        &\leq \frac{1}{\Gamma(\nu)} \int_0^{\eta_1} z^{\nu-1} e^{-z} \mathrm{d}z \\
        &\leq \frac{1}{\Gamma(\nu)} \int_0^{\eta_1} z^{\nu-1}\mathrm{d}z \\
        &= \frac{1}{\nu \Gamma(\nu)} [z^{\nu}]_0^{\eta_1} \\
        &= \frac{1}{\nu \Gamma(\nu)} \eta_1^{\nu}.
    \end{align}
    Then, from the definition of $\eta_1$, we have
    \begin{align}
        \label{eq:k1_upper}
        \frac{1}{\nu \Gamma(\nu)} \eta_1^{\nu} = \frac{1}{\nu \Gamma(\nu)} \rbr{\frac{\nu \Gamma(\nu) \lambda^2}{T^2}} = \frac{\lambda^2}{T^2}.
    \end{align}
    Therefore, by denoting the eigenvalues of $\bK_1(\bX_T, \bX_T)$ with decreasing order as $(\lambda_i)_{i \in [T]}$\footnote{Note that $\lambda_i$ is non-negative from the positive semi-definiteness of $\bK_1(\bX_T, \bX_T)$.},
    we have
    \begin{align}
        \ln \det(\bI_T + \lambda^{-2} \bK_1(\bX_T, \bX_T))
        &= \ln \prod_{i = 1}^T (1 + \lambda^{-2} \lambda_i) \\
        &\leq \ln (1 + \lambda^{-2} \lambda_1)^T \\
        &\leq \ln (1 + T^{-1})^T,
    \end{align}
    where the last inequality follows from $\lambda_1 \leq \sqrt{\sum_{i=1}^T \lambda_i^2} = \|\bK_1(\bX_T, \bX_T)\|_F = \sqrt{\sum_{i,j} k_1(\bx_i, \bx_j)^2}\leq \lambda^2/T$. Since $\ln (1 + T^{-1})^{T} \rightarrow 1$ as $T \rightarrow \infty$, there exists constant $C > 0$ such that $\ln \det(\bI_T + \lambda^{-2} \bK_1(\bX_T, \bX_T)) \leq C$ for all $T \in \N_+$.

    \paragraph{Upper bound for the third term (iii).} From the definition of $k_3$, we have
    \begin{align}
        |k_3(\bx, \tilde{\bx})| 
        &= \frac{1}{\Gamma(\nu)} \int_{\eta_2}^{\infty} z^{\nu-1} e^{-z} \exp\rbr{-\frac{\|\bx - \tilde{\bx}\|_2^2 }{2\ell^2 z\nu^{-1}}} \mathrm{d}z \\
        &\leq \frac{1}{\Gamma(\nu)} \int_{\eta_2}^{\infty} z^{\nu-1} e^{-z} \mathrm{d}z \\
        &\leq \frac{C_{\nu}}{\Gamma(\nu)} \int_{\eta_2}^{\infty} e^{-z/2} \mathrm{d}z \\
        &= \frac{-2C_{\nu}}{\Gamma(\nu)} [e^{-z/2}]_{\eta_2}^{\infty} \\
        &= \frac{2C_{\nu}}{\Gamma(\nu)} \exp\rbr{-\frac{\eta_2}{2}}.
    \end{align}
    Then, from the definition of $\eta_2$, we have
    \begin{align}
        \frac{2C_{\nu}}{\Gamma(\nu)} \exp\rbr{-\frac{\eta_2}{2}}
        \leq \frac{2C_{\nu}}{\Gamma(\nu)} \exp\rbr{-\frac{\Gamma(\nu)}{2C_{\nu}} \ln \rbr{\frac{T^2}{\lambda^2}}} = \frac{\lambda^2}{T^2}.
    \end{align}
    By following the same arguments after Eq.~\eqref{eq:k1_upper} in the upper bound of the first term (i), we conclude that there exists constant $C > 0$ such that $\ln \det(\bI_T + \lambda^{-2} \bK_3(\bX_T, \bX_T)) \leq C$ for all $T \in \N_+$.

    \paragraph{Upper bound for the second term (ii).}
    We further divide $k_2$ with dyadic manner:
    \begin{align}
        k_2(\bx, \tilde{\bx}) = \sum_{q=1}^{Q} k_2^{(q)}(\bx, \tilde{\bx}),
    \end{align}
    where:
    \begin{equation}
        k_2^{(q)}(\bx, \tilde{\bx}) = \frac{1}{\Gamma(\nu)} \int_{\eta_1 2^{q-1}}^{\min\{\eta_1 2^{q}, \eta_2\}} z^{\nu-1} e^{-z} \exp\rbr{-\frac{\|\bx - \tilde{\bx}\|_2^2 }{2\ell^2 z\nu^{-1}}} \mathrm{d}z.
    \end{equation}
    Here, $Q \in \N_+$ is the minimum number such that $\eta_1 2^{Q} \geq \eta_2$ holds.
    Then, as with Eq.~\eqref{eq:mig_decomp}, we have
    \begin{equation}
    \label{eq:k_kq}
    \ln \det(\bI_T + \lambda^{-2} \bK_2(\bX_T, \bX_T))
    \leq \sum_{q=1}^Q \ln \det\rbr{\bI_T + \lambda^{-2} \bK_2^{(q)}(\bX_T, \bX_T)},
    \end{equation}
    where $\bK_2^{(q)}(\bX_T, \bX_T)$ is the kernel matrix of $k_2^{(q)}$.
    Next, for any $q$, we define new kernel function $\tilde{k}^{(q)}(\bx, \tilde{\bx})$ as
    \begin{equation}
        \tilde{k}_2^{(q)}(\bx, \tilde{\bx}) = k_2^{(q)}(\bx, \tilde{\bx}) \exp\rbr{-\frac{\rbr{\sqrt{1 - \|\bx\|_2^2} - \sqrt{1 - \|\tilde{\bx}\|_2^2}}^2}{2\ell^2 \nu^{-1} \min\{\eta_1 2^{q}, \eta_2\}}}.
    \end{equation}
    We further denote the kernel matrix of $\tilde{k}_2^{(q)}$ by $\tilde{\bK}_2^{(q)}(\bX_T, \bX_T)$. Then, from Oppenheim's inequality, we have
    \begin{equation}
        \label{eq:kq_tkq}
        \ln \det\rbr{\bI_T + \lambda^{-2} \bK_2^{(q)}(\bX_T, \bX_T)}
        \leq \ln \det\rbr{\bI_T + \lambda^{-2} \tilde{\bK}_2^{(q)}(\bX_T, \bX_T)}.
    \end{equation}
    Furthermore, for any $z \in [\eta_1 2^{q-1}, \min\{\eta_1 2^{q}, \eta_2\}]$, the following kernel function $\hat{k}_2^{(q)}(\bx, \tilde{\bx}; z)$ is positive definite (e.g., Lemma A.5 in \citep{capone2022gaussian}):
    \begin{equation}
        \hat{k}_2^{(q)}(\bx, \tilde{\bx}; z) = 2 \exp\rbr{-\frac{\rbr{\sqrt{1 - \|\bx\|_2^2} - \sqrt{1 - \|\tilde{\bx}\|_2^2}}^2}{2\ell^2 \nu^{-1} z}}
        - \exp\rbr{-\frac{\rbr{\sqrt{1 - \|\bx\|_2^2} - \sqrt{1 - \|\tilde{\bx}\|_2^2}}^2}{2\ell^2 \nu^{-1} \min\{\eta_1 2^{q}, \eta_2\}}}.
    \end{equation}
    Note that $2k_2^{(q)}(\tilde{\bx}_i, \tilde{\bx}_j) - \tilde{k}_2^{(q)}(\bx_i, \bx_j)$ is represented as 
    \begin{align}
        &2k_2^{(q)}(\tilde{\bx}_i, \tilde{\bx}_j) - \tilde{k}_2^{(q)}(\bx_i, \bx_j) \\
        &= \frac{1}{\Gamma(\nu)} \int_{\eta_1 2^{q-1}}^{\min\{\eta_1 2^{q}, \eta_2\}} z^{\nu-1} e^{-z} \exp\rbr{-\frac{\|\bx_i - \bx_j\|_2^2 }{2\ell^2 z\nu^{-1}}} \hat{k}_2^{(q)}(\bx_i, \bx_j; z) \mathrm{d}z.
    \end{align}
    By noting that the product of two positive definite kernels is also positive definite, 
    the above expression implies that $2\bK_2^{(q)}(\tilde{\bX}_T, \tilde{\bX}_T) - \tilde{\bK}_2^{(q)}(\bX_T, \bX_T)$ is the positive semi-definite matrix. Therefore, we have\footnote{For any positive semi-definite matrices $A$, $B$ such that $A - B$ is positive semi-definite, we have $\lambda_i^{(A)} \geq \lambda_i^{(B)}$, where $(\lambda_i^{(A)})$ and $(\lambda_i^{(B)})$ is a non-negative eigenvalues of $A$ and $B$ with decreasing order. (This is a consequence of Courant–Fischer's min-max theorem.)
    Therefore, we have $\det(A) = \prod_i \lambda_i^{(A)} \geq \prod_i \lambda_i^{(B)} = \det(B)$ for such $A$ and $B$.}
    \begin{align}
        \sum_{q=1}^Q \ln \det\rbr{\bI_T + \lambda^{-2} \tilde{\bK}_2^{(q)}(\bX_T, \bX_T)} &\leq 
        \sum_{q=1}^Q \ln \det\rbr{\bI_T + 2\lambda^{-2} \bK_2^{(q)}(\tilde{\bX}_T, \tilde{\bX}_T)} \\
        \label{eq:tkq_k}
        &\leq Q \ln \det\rbr{\bI_T + 2\lambda^{-2} \bK(\tilde{\bX}_T, \tilde{\bX}_T)},
    \end{align}
    where the second inequality follows from the fact that $\bK(\tilde{\bX}_T, \tilde{\bX}_T) - \bK_2^{(q)}(\tilde{\bX}_T, \tilde{\bX}_T)$ is positive semi-definite.
    From the definition of $Q$, we have
    \begin{align}
        Q 
        &\leq \log_2 \rbr{\frac{\eta_2}{\eta_1}} + 1 \\
        &= \log_2 \eta_2 - \log_2 \eta_1 + 1 \\
        &= \log_2 \max\cbr{1, \frac{\Gamma(\nu)}{C_{\nu}} \ln \frac{T^2}{\lambda^2}} - \log_2 \rbr{\frac{\nu \Gamma(\nu) \lambda^2}{T^2}}^{\frac{1}{\nu}} + 1 \\
        \label{eq:Q_ub}
        &\leq \log_2 \rbr{1 + \frac{\Gamma(\nu)}{C_{\nu}} \ln \frac{T^2}{\lambda^2}} + \frac{1}{\nu} \log_2 \rbr{\frac{T^2}{\nu \Gamma(\nu) \lambda^2}} + 1.
    \end{align}
    By combining Eqs.~\eqref{eq:k_kq}, \eqref{eq:kq_tkq}, \eqref{eq:tkq_k}, and \eqref{eq:Q_ub}, we conclude
    \begin{align}
        &\ln \det(\bI_T + \lambda^{-2} \bK_2(\bX_T, \bX_T)) \\
        &\leq 
        \sbr{\log_2 \rbr{1 + \frac{\Gamma(\nu)}{C_{\nu}} \ln \frac{T^2}{\lambda^2}} + \frac{1}{\nu} \log_2 \rbr{\frac{T^2}{\nu \Gamma(\nu) \lambda^2}} + 1} \ln \det\rbr{\bI_T + 2\lambda^{-2} \bK(\tilde{\bX}_T, \tilde{\bX}_T)}.
    \end{align}
    By aggregating the upper bounds of (i), (ii), and (iii), we have 
    the following inequality under $\eta_1 < \eta_2$:
    \begin{align}
        \ln \det(\bI_T + \lambda^{-2} \bK(\bX_T, \bX_T))
        \leq 
        C(T, \nu, \lambda) \ln \det\rbr{\bI_T + 2\lambda^{-2} \bK(\tilde{\bX}_T, \tilde{\bX}_T)} + 2C.
    \end{align}
    Finally, if $\eta_1 \geq \eta_2$, we have
    \begin{align}
        &\ln \det(\bI_T + \lambda^{-2} \bK(\bX_T, \bX_T)) \\
        &\leq \ln \det(\bI_T + \lambda^{-2} (\bK_1(\bX_T, \bX_T) + \bK_3(\bX_T, \bX_T))) \\
        &\leq \ln \det(\bI_T + \lambda^{-2} \bK_1(\bX_T, \bX_T))
        + \ln \det(\bI_T + \lambda^{-2} \bK_3(\bX_T, \bX_T)) \\
        &\leq 2C \\
        &\leq C(T, \nu, \lambda) \ln \det\rbr{\bI_T + 2\lambda^{-2} \bK(\tilde{\bX}_T, \tilde{\bX}_T)} + 2C,
    \end{align}
    where the last inequality follows from $\ln \det\rbr{\bI_T + 2\lambda^{-2} \bK(\tilde{\bX}_T, \tilde{\bX}_T)} \geq 0$ and $C(T, \nu, \lambda) \geq 0$. 
    The desired result is obtained by setting a new absolute constant $C$ as $2C$ in the above inequality.
\end{proof}
\subsection{Summary of Mercer Decomposition for Dot-Product Kernel on Sphere}
\label{sec:mercer_decomp_sphere}

In this subsection, we summarize the basic known results of the Mercer decomposition on $\mbS^{d}$. The content of this subsection is related to the analysis of the spherical harmonics. We refer to \citep{atkinson2012spherical,efthimiou2014spherical} as the basic textbook. In the kernel method literature, the Mercer decomposition of the dot-product kernel and its eigendecay have been studied. See, e.g.,~\citep{azevedo2014sharp,minh2006mercer,scetbon2021spectral}.
Furthermore, the existing analysis of the neural tangent kernel also leverages the Mercer decomposition based on the spherical harmonics. We also refer to the appendix of \citep{bach2017breaking,bietti2020deep} as the related works of this subsection.

We first describe Mercer's theorem. Let $L^2(\mX, \mu) \coloneqq \{ f: \mX \rightarrow \R \mid \int_{\mX} f^2(\bx) \mu(\text{d}\bx) < \infty\}$ be the square-integrable functions on $\mX$ under the measure $\mu$.
Furthermore, let us define the kernel integral operator $\mT_k: L^2(\mX, \mu) \rightarrow L^2(\mX, \mu)$ of a square-integrable kernel function $k: \mX \times \mX \rightarrow \R$ as 
$(\mT_k f)(\cdot) = \int_{\mX} k(\cdot, \bx) f(\bx) \mu(\text{d}\bx)$.
Then, Mercer's theorem guarantees that the positive kernel $k$ is decomposed based on the eigenvalues and eigenfunctions sequence of $\mT_k$ with absolute and uniform convergence on $\mX \times \mX$. We give the formal statement below.
\begin{theorem}[Mercer's theorem, e.g., Theorem 4.49 in \citep{christmann2008support}]
Let $\mX$ be a compact metric space, $\mu$ be a finite Borel measure whose support is $\mX$, and $k: \mX \times \mX \rightarrow \R$ be a continuous and square integrable-positive definite kernel on $(\mX, \mu)$.
Suppose that $(\phi_i)_{i \in \N}$ and $(\lambda_i)_{i \in \N}$ are eigenfunctions and eigenvalues of the kernel integral operator $\mT_k$, respectively. Namely, $(\phi_i)_{i \in \N}$ is an orthonormal bases of the eigenspace $\{ \mT_k f \mid f \in L^2(\mX, \mu) \}$ such that $\mT_k \phi_i(\cdot) = \lambda_i \phi_i(\cdot)$ for all $i \in \N$.
Then, we have
\begin{equation}
    k(\bx, \tilde{\bx}) = \sum_{i \in \N} \lambda_{i} \phi_i(\bx) \phi_i(\tilde{\bx}),
\end{equation}
where the convergence is absolute and uniform on $\mX \times \mX$.
\end{theorem}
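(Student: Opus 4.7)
The plan is to reduce the theorem to the spectral theorem for compact self-adjoint operators and then upgrade the resulting $L^2$-convergence of the Mercer series to absolute and uniform convergence by a Dini-type argument combined with Cauchy--Schwarz. Since $\mX$ is a compact metric space and $k$ is continuous on $\mX \times \mX$, the kernel is bounded, hence $k \in L^2(\mX \times \mX, \mu \otimes \mu)$; this means the integral operator $\mT_k$ is Hilbert--Schmidt on $L^2(\mX, \mu)$, and in particular compact. Symmetry of $k$ makes $\mT_k$ self-adjoint, and positive definiteness of $k$ translates to positivity of $\mT_k$, so $\lambda_i \geq 0$ for all $i$.

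By the spectral theorem for compact self-adjoint operators, there is an orthonormal system $(\phi_i)$ of eigenfunctions in $L^2(\mX, \mu)$ with associated eigenvalues $\lambda_i \downarrow 0$ spanning the range of $\mT_k$. A first step is to show that each $\phi_i$ admits a continuous representative: writing $\phi_i(\bx) = \lambda_i^{-1} \int k(\bx, \tilde{\bx}) \phi_i(\tilde{\bx}) \mu(\d\tilde{\bx})$ whenever $\lambda_i > 0$, continuity of $k$ together with dominated convergence gives continuity of $\phi_i$ on $\mX$. Defining the partial sums $k_N(\bx,\tilde{\bx}) \coloneqq \sum_{i=1}^N \lambda_i \phi_i(\bx) \phi_i(\tilde{\bx})$, the residual $r_N \coloneqq k - k_N$ is itself continuous and represents a positive operator, so $r_N(\bx,\bx) \geq 0$ for every $\bx$, and the diagonal sequence $(k_N(\bx,\bx))_N$ is monotone non-decreasing with $k_N(\bx,\bx) \leq k(\bx,\bx)$.

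The main obstacle is bridging the gap between $L^2$ convergence and the required absolute, uniform convergence. I would first establish pointwise convergence of $k_N(\bx,\bx)$ to $k(\bx,\bx)$ for every $\bx \in \mX$; this follows because $r_N(\bx,\cdot) \in L^2(\mX,\mu)$ is orthogonal to $\phi_1,\ldots,\phi_N$, and using $r_N$ as a reproducing-kernel-style representation one obtains $r_N(\bx,\bx) = \sum_{i > N} \lambda_i \phi_i(\bx)^2 \to 0$. With pointwise convergence on the diagonal in hand, Dini's theorem applies (monotone pointwise convergence of continuous functions to a continuous limit on a compact space), giving uniform convergence of $k_N(\bx,\bx) \to k(\bx,\bx)$ on $\mX$. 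In particular $\sup_{\bx \in \mX} \sum_{i > N} \lambda_i \phi_i(\bx)^2 \to 0$.

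Finally I would promote this diagonal control to joint uniform control off the diagonal via the Cauchy--Schwarz bound
\begin{equation}
  \Bigl| \sum_{i=M}^N \lambda_i \phi_i(\bx) \phi_i(\tilde{\bx}) \Bigr|
  \leq \Bigl( \sum_{i=M}^N \lambda_i \phi_i(\bx)^2 \Bigr)^{1/2}
       \Bigl( \sum_{i=M}^N \lambda_i \phi_i(\tilde{\bx})^2 \Bigr)^{1/2},
\end{equation}
valid for any $M \leq N$ because $\sqrt{\lambda_i}\phi_i(\bx)$ is a real scalar. This immediately shows both that $\sum_i \lambda_i \phi_i(\bx) \phi_i(\tilde{\bx})$ is absolutely convergent (take $M=N$ and sum) and that the tails are uniformly small in $(\bx,\tilde{\bx}) \in \mX \times \mX$, yielding the stated uniform convergence to $k(\bx,\tilde{\bx})$. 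The delicate part throughout is the diagonal monotonicity argument underlying Dini's step, as it is where compactness of $\mX$ and continuity of both $k$ and the $\phi_i$ are jointly essential.
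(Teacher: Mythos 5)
The paper itself does not prove this statement; it is imported verbatim as a classical result (Theorem~4.49 of Steinwart and Christmann), so there is no internal argument to compare against, and your sketch should be judged as a proof of Mercer's theorem on its own. Its architecture (Hilbert--Schmidt compactness, spectral theorem, continuity of the $\phi_i$ with $\lambda_i>0$, nonnegativity of the residual on the diagonal, Dini, then Cauchy--Schwarz) is the standard and correct one, but there is a genuine gap at the pivotal step: the identity $r_N(\bx,\bx)=\sum_{i>N}\lambda_i\phi_i(\bx)^2$, which you justify only by noting that $r_N(\bx,\cdot)$ is orthogonal to $\phi_1,\dots,\phi_N$ and invoking a ``reproducing-kernel-style representation''. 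That identity \emph{is} the pointwise Mercer expansion on the diagonal, so it cannot be extracted from orthogonality to finitely many eigenfunctions; what your monotonicity argument actually yields is only the inequality $\sum_{i\le M}\lambda_i\phi_i(\bx)^2\le k(\bx,\bx)$ for all $M$, i.e.\ $r_N(\bx,\bx)\ge\sum_{N<i\le M}\lambda_i\phi_i(\bx)^2$, not equality in the limit. The standard repair is: for fixed $\bx$, the Fourier coefficients of $k(\bx,\cdot)$ satisfy $\langle k(\bx,\cdot),\phi_i\rangle=(\mT_k\phi_i)(\bx)=\lambda_i\phi_i(\bx)$, and $k(\bx,\cdot)$ is orthogonal to $\ker\mT_k$ because for $f\in\ker\mT_k$ the continuous function $\mT_k f$ vanishes $\mu$-a.e.\ and hence everywhere, using $\mathrm{supp}\,\mu=\mX$; therefore $k(\bx,\cdot)=\sum_i\lambda_i\phi_i(\bx)\phi_i(\cdot)$ in $L^2(\mX,\mu)$, and since both sides are continuous in the second argument (the series by your Cauchy--Schwarz bound together with $\sum_i\lambda_i\phi_i(\by)^2\le k(\by,\by)\le\sup_{\by}k(\by,\by)<\infty$), equality holds for every $\by$, in particular at $\by=\bx$. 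Only after this identification does Dini's theorem apply.

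Note also that this repair, as well as your earlier assertion that a positive integral operator with continuous kernel has nonnegative diagonal, leans on the hypothesis that the support of $\mu$ is all of $\mX$ -- a hypothesis your sketch never invokes. The bump-function argument giving $r_N(\bx,\bx)\ge 0$ needs every neighborhood of $\bx$ to carry positive $\mu$-mass, and the upgrade from $\mu$-a.e.\ to everywhere equality fails off $\mathrm{supp}\,\mu$ (indeed the expansion need not converge to $k$ there). Make the role of $\mathrm{supp}\,\mu=\mX$ explicit at both places, and similarly record the short approximation argument showing that pointwise positive definiteness of $k$ implies positivity of $\mT_k$; with those insertions your outline becomes essentially the textbook proof that the paper cites.
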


Specifically, our interest is the Mercer decomposition of the kernel on $\mbS^{d}$. This is given as spherical harmonics on $\mbS^d$, which we define below.

\begin{definition}[Spherical harmonics, e.g., Definition 2.7 in \citep{atkinson2012spherical}]
    Fix any $d \geq 1$ and $m \in \N$. Let $\mathbb{Y}_m(\R^{d+1})$ be the all homogeneous polynomials of degree $m$ in $\R^{d+1}$ that are also harmonic\footnote{A polynomial $H(x_1, \ldots, x_{d+1})$ is called homogeneous of degree $m$ if $H(tx_1, \ldots, tx_{d+1}) = t^nH(x_1, \ldots, x_{d+1})$. Furthermore, a polynomial $H(x_1, \ldots, x_{d+1})$ is called harmonic if $\Delta_{d+1} H = 0$, where $\Delta_{d+1}$ is the Laplace operator. See Chapter 4 in \citep{efthimiou2014spherical} or Chapter 2 in \citep{atkinson2012spherical}.}. The space $\mathbb{Y}_m^{d+1} = \mathbb{Y}_m(\R^{d+1}) \mid_{\mbS^{d}}$ is called the spherical harmonic space of order $m$ in $d + 1$ dimensions. Any function in $\mathbb{Y}_m^{d+1}$ is called a spherical harmonic of order $m$ in $d + 1$ dimensions.
\end{definition}

The following lemmas provide the properties of the spherical harmonics, which guarantee that the Mercer decomposition of the continuous dot-product kernel on $\mbS^d$ is defined based on spherical harmonics. 

\begin{lemma}[Dimension and completeness of sphererical harmonics, e.g., Chapter 2.1.3, Corollary 2.15, and Theorem 2.38 in \citep{atkinson2012spherical}]
    \label{lem:comp}
    Fix any $d \geq 1$. Then, the following statements hold:
    \begin{itemize}
        \item For any $m \in \N$, we have $\mathrm{dim}( \mathbb{Y}_m^{d+1}) = N_{d+1, m}$ with $N_{d+1, m} = \frac{(2m + d -1)(m + d - 2)!}{m! (d - 1)!}$. Furthermore, For any $m, n \in \N$ with $m \neq n$, we have $\mathbb{Y}_m^{d+1} \perp \mathbb{Y}_n^{d+1}$\footnote{Here, as with the second statement, we consider $L^2(\mbS^{d}, \sigma)$.
        Therefore, the inner product for any $f, g : \mbS^d \to \R$ is defined on $L^2(\mbS^{d}, \sigma)$ as $\int_{\mbS^d} f(\bx)g(\bx) \sigma(\text{d}\bx)$.}.
        \item Let us define $(Y_{m, j})_{j \in [N_{d+1, m}]}$ be an orthonormal bases of $\mathbb{Y}_{m}^{d+1}$. Then, $\cup_{m \in \N} (Y_{m, j})_{j \in [N_{d+1, m}]}$ becomes an orthonormal bases of $L^2(\mbS^{d}, \sigma)$, where $\sigma(\cdot)$ is the induced Lebesgue measure on $\mbS^d$.
    \end{itemize}
\end{lemma}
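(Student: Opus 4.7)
The plan is to establish the three claims---the dimension of $\mathbb{Y}_m^{d+1}$, the orthogonality of harmonics of distinct degrees, and $L^2$-completeness---by combining the harmonic decomposition of homogeneous polynomials with the Stone--Weierstrass theorem. The load-bearing algebraic fact underlying both the dimension count and the decomposition is surjectivity of the Laplacian $\Delta: \mathbb{P}_m(\R^{d+1}) \to \mathbb{P}_{m-2}(\R^{d+1})$, where $\mathbb{P}_k(\R^{d+1})$ denotes the space of homogeneous polynomials of degree $k$ in $d+1$ variables. I would prove this via the apolar pairing $\langle P, Q \rangle \coloneqq [P(\partial) Q](0)$ on $\mathbb{P}_m(\R^{d+1})$, which is nondegenerate (diagonal on monomials up to positive scalars) and under which $\Delta$ is adjoint to multiplication by $\|\bx\|_2^2$; any element of the alleged cokernel of $\Delta$ would then annihilate $\|\bx\|_2^2 \,\mathbb{P}_{m-2}(\R^{d+1})$, forcing it to vanish.

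For the dimension formula, $\mathrm{dim}\,\mathbb{P}_m(\R^{d+1}) = \binom{m+d}{d}$ by counting monomials, and $\mathbb{Y}_m(\R^{d+1}) = \ker\bigl(\Delta|_{\mathbb{P}_m}\bigr)$. Surjectivity of $\Delta$ gives $\mathrm{dim}\,\mathbb{Y}_m(\R^{d+1}) = \binom{m+d}{d} - \binom{m+d-2}{d}$, and routine binomial algebra reduces this to $N_{d+1,m}$. Since a nonzero homogeneous polynomial of degree $m$ cannot vanish identically on $\mbS^d$ (homogeneity would then force it to vanish on all of $\R^{d+1}$), the restriction map $\mathbb{Y}_m(\R^{d+1}) \to \mathbb{Y}_m^{d+1}$ is a linear isomorphism and the dimension transfers.

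For orthogonality with $m \neq n$, I would apply Green's identity on the unit ball to arbitrary $H_m \in \mathbb{Y}_m(\R^{d+1})$ and $H_n \in \mathbb{Y}_n(\R^{d+1})$: both are harmonic, and by Euler's relation for homogeneous functions their outward normal derivatives on $\mbS^d$ are $m H_m|_{\mbS^d}$ and $n H_n|_{\mbS^d}$, respectively. Green's identity collapses to $(m - n)\int_{\mbS^d} H_m H_n\, d\sigma = 0$, giving orthogonality. Equivalently, $H_m|_{\mbS^d}$ is an eigenfunction of the Laplace--Beltrami operator on $\mbS^d$ with eigenvalue $-m(m + d - 1)$, and eigenspaces of distinct eigenvalues of a symmetric operator are orthogonal.

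For completeness, surjectivity of $\Delta$ yields the Fischer decomposition $\mathbb{P}_m(\R^{d+1}) = \mathbb{Y}_m(\R^{d+1}) \oplus \|\bx\|_2^2\,\mathbb{P}_{m-2}(\R^{d+1})$, and iterating gives $\mathbb{P}_m(\R^{d+1}) = \bigoplus_{k \geq 0} \|\bx\|_2^{2k}\,\mathbb{Y}_{m-2k}(\R^{d+1})$. Restricting to $\mbS^d$, where $\|\bx\|_2 = 1$, shows that every polynomial restricted to $\mbS^d$ lies in the algebraic span $\bigoplus_{m \in \N} \mathbb{Y}_m^{d+1}$. Stone--Weierstrass---the polynomial algebra in $d+1$ variables separates points on the compact set $\mbS^d$ and contains the constants---then makes polynomial restrictions uniformly dense in $C(\mbS^d)$; since $\sigma$ is a finite measure and $C(\mbS^d)$ is dense in $L^2(\mbS^d, \sigma)$, polynomial restrictions are dense in $L^2$ as well. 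Combined with the orthogonality across degrees and the assumed orthonormality of $(Y_{m,j})_{j \in [N_{d+1,m}]}$ within each $\mathbb{Y}_m^{d+1}$, the union $\cup_{m \in \N}(Y_{m,j})_{j \in [N_{d+1,m}]}$ is a complete orthonormal system in $L^2(\mbS^d, \sigma)$. The primary technical step---and the one I expect to be most delicate---is the surjectivity of $\Delta$ stated at the outset; once it is in hand, every remaining piece is routine.
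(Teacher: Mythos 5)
Your proposal is correct, but note that the paper does not prove this lemma at all: it is stated as classical background and delegated to the cited textbooks (Atkinson--Han; Efthimiou--Frye), so there is no in-paper argument to compare against. What you wrote is essentially the standard textbook derivation that those references contain: the Fischer decomposition $\mathbb{P}_m(\R^{d+1}) = \mathbb{Y}_m(\R^{d+1}) \oplus \|\bx\|_2^2\,\mathbb{P}_{m-2}(\R^{d+1})$ obtained from the apolar pairing, the dimension count $\binom{m+d}{d} - \binom{m+d-2}{d} = N_{d+1,m}$ (your binomial reduction is correct), injectivity of restriction to $\mbS^d$ by homogeneity, orthogonality of distinct degrees via Green's identity with Euler's relation (equivalently, distinct Laplace--Beltrami eigenvalues $-m(m+d-1)$), and completeness from the iterated decomposition plus Stone--Weierstrass and density of $C(\mbS^d)$ in $L^2(\mbS^d,\sigma)$. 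One small wording slip: an element $P$ of the cokernel of $\Delta$ does not annihilate $\|\bx\|_2^2\,\mathbb{P}_{m-2}$; rather, adjointness gives $\langle \|\bx\|_2^2 P, Q\rangle = \langle P, \Delta Q\rangle = 0$ for all $Q \in \mathbb{P}_m$, so $\|\bx\|_2^2 P = 0$ and hence $P = 0$ --- equivalently, surjectivity of $\Delta$ is dual to the (obvious) injectivity of multiplication by $\|\bx\|_2^2$. With that phrasing fixed, the argument is complete and matches the cited treatment.
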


\begin{lemma}[Funk-Hecke Formula, e.g., Theorem 2.22 in \citep{atkinson2012spherical} or Theorem 4.24 in \citep{efthimiou2014spherical}]
\label{lem:func_hecke}
    Fix any $d \geq 1$. Let $f: [-1, 1] \rightarrow \R$ be a continuous function. Define $|\mbS^{d-1}| \coloneqq \frac{2 \pi^{d/2}}{\Gamma(d/2)}$ as the surface area of $\mbS^{d-1}$.
    Then, for any $m \in \N$ and $Y_m \in \mathbb{Y}_m^{d+1}$, we have
    \begin{equation}
        \int_{\mbS^{d}} f(\bz^{\top} \bm{\eta}) Y_m(\bm{\eta}) \sigma(\mathrm{d} \bm{\eta}) = \lambda_m Y_m(\bz),
    \end{equation}
    where $\sigma(\cdot)$ is the induced Lebesgue measure on $\mbS^d$. Furthermore, $\lambda_m$ is defined as
    \begin{equation}
    \label{eq:funk_hecke_lambda}
        \lambda_m = |\mbS^{d-1}| \int_{-1}^1 P_{m, d+1}(t) f(t) (1 - t^2)^{\frac{d-2}{2}} \mathrm{d}t,
    \end{equation}
    where $P_{m, d+1}(t)$ is the Legendre polynomial of degree $m$ in $d+1$ dimensions, which is defined as
    \begin{equation}
        \label{eq:lp}
        P_{m,d+1}(t) = m! \Gamma\rbr{\frac{d}{2}} \sum_{k=0}^{\lfloor m/2 \rfloor} (-1)^k \frac{(1 - t^2)^k t^{m - 2k}}{4^k k! (m-2k)! \Gamma\rbr{k + \frac{d}{2}}}.
    \end{equation}
\end{lemma}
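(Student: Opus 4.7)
The plan is to derive the Funk--Hecke formula by exploiting the rotational symmetry of the kernel $f(\bz^{\top}\bm{\eta})$ to reduce the spherical integral to a one-dimensional integral in the polar angle. I would introduce the integral operator $T_f : L^2(\mbS^d, \sigma) \to L^2(\mbS^d, \sigma)$ defined by $(T_f Y)(\bz) = \int_{\mbS^d} f(\bz^{\top}\bm{\eta}) Y(\bm{\eta}) \sigma(d\bm{\eta})$, which is well-defined because continuity of $f$ on the compact interval $[-1,1]$ bounds the kernel uniformly. The first step is to establish rotation equivariance: for every $R \in SO(d+1)$ the change of variable $\bm{\eta} \mapsto R\bm{\eta}$ preserves $\sigma$, so $(T_f Y)(R\bz) = T_f(Y \circ R)(\bz)$, i.e.\ $T_f$ commutes with the regular representation of $SO(d+1)$ on $L^2(\mbS^d,\sigma)$.

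The second step is to invoke Schur's lemma. By the standard theory summarized in \lemref{lem:comp} and its references, $L^2(\mbS^d, \sigma) = \bigoplus_{m \in \N} \mathbb{Y}_m^{d+1}$ with each $\mathbb{Y}_m^{d+1}$ an irreducible $SO(d+1)$-submodule, and the summands are pairwise non-isomorphic (for $d \geq 2$ this is immediate from strict monotonicity of $N_{d+1,m}$; for $d=1$ it follows from the distinct characters of the rotations by angle $m\theta$). Hence every rotation-equivariant bounded operator preserves each $\mathbb{Y}_m^{d+1}$ and acts on it as a scalar. Applying this to $T_f$, there exists $\lambda_m \in \R$ with $T_f Y_m = \lambda_m Y_m$ for all $Y_m \in \mathbb{Y}_m^{d+1}$, which is exactly the eigenvalue identity in the lemma.

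The third step identifies the scalar $\lambda_m$ by testing against a zonal harmonic at the north pole. Set $\bz = \bm{e}_{d+1}$ and $Y_m(\bm{\eta}) = P_{m,d+1}(\eta_{d+1})$, which lies in $\mathbb{Y}_m^{d+1}$ as the zonal harmonic centred at $\bm{e}_{d+1}$. Using the slice decomposition $\bm{\eta} = (\sqrt{1 - t^2}\,\bm{\xi},\, t)$ with $t \in [-1,1]$ and $\bm{\xi} \in \mbS^{d-1}$, the surface measure factors as $\sigma(d\bm{\eta}) = (1-t^2)^{(d-2)/2}\, dt\, \sigma_{d-1}(d\bm{\xi})$, so the $\bm{\xi}$-integral contributes the factor $|\mbS^{d-1}|$. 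Evaluating the closed form \eqref{eq:lp} at $t = 1$ (only the $k=0$ term survives) gives $P_{m,d+1}(1) = 1$, so $(T_f Y_m)(\bm{e}_{d+1}) = \lambda_m\cdot P_{m,d+1}(1) = \lambda_m$ directly yields the claimed expression for $\lambda_m$.

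The main obstacle is justifying the Schur-type step, which relies on the irreducibility and pairwise non-isomorphism of the $\mathbb{Y}_m^{d+1}$ as $SO(d+1)$-modules; this is standard but requires care in the degenerate low-dimensional case. A purely analytic alternative I would fall back on is to expand $f$ in the Legendre polynomials $\{P_{n,d+1}\}$ that are orthogonal on $[-1,1]$ with respect to $(1-t^2)^{(d-2)/2}$, substitute the addition theorem $P_{n,d+1}(\bz^{\top}\bm{\eta}) = \frac{|\mbS^d|}{N_{d+1,n}} \sum_{j} Y_{n,j}(\bz) Y_{n,j}(\bm{\eta})$ term by term, and conclude via orthonormality of the $Y_{n,j}$; Stone--Weierstrass then handles the passage from polynomial to general continuous $f$, bypassing representation theory entirely.
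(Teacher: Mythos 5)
The paper does not prove this lemma at all: it is quoted verbatim as a textbook result (Atkinson--Han, Theorem~2.22; Efthimiou--Frye, Theorem~4.24), so there is no internal proof to compare against. Your argument is essentially the standard proof found in those references, and it is correct in outline: rotation equivariance of $T_f$, scalar action on each $\mathbb{Y}_m^{d+1}$, and identification of the scalar by evaluating the zonal harmonic $\bm{\eta}\mapsto P_{m,d+1}(\eta_{d+1})$ at the north pole, using the slice decomposition of $\sigma$ with weight $(1-t^2)^{(d-2)/2}$, the factor $|\mbS^{d-1}|$, and $P_{m,d+1}(1)=1$ (only the $k=0$ term of \eqref{eq:lp} survives). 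Two points deserve a touch more care. First, your Schur step for $d=1$: the two-dimensional real irreducibles of $SO(2)$ have commutant isomorphic to $\C$, so $SO(2)$-equivariance alone does not force a real scalar; you should either use full $O(2)$-equivariance (the kernel $f(\bz^{\top}\bm{\eta})$ is invariant under all orthogonal maps, and the commutant of the $O(2)$-action is $\R$) or invoke self-adjointness of $T_f$, either of which closes the gap. Second, you assert without justification that $\bm{\eta}\mapsto P_{m,d+1}(\eta_{d+1})$ restricted to $\mbS^d$ lies in $\mathbb{Y}_m^{d+1}$; if $P_{m,d+1}$ is taken to be \emph{defined} by the explicit formula \eqref{eq:lp}, this membership (harmonicity of the degree-$m$ homogeneous extension) is itself a small standard lemma that should be cited or checked. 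Your fallback route---expanding $f$ in the Legendre system orthogonal with respect to $(1-t^2)^{(d-2)/2}\,\mathrm{d}t$, applying the addition theorem (\lemref{thm:addition}) termwise, and passing to general continuous $f$ by density (note $L^2$-continuity of both sides in $f$ suffices, so uniform approximation is more than enough)---is also sound, sidesteps the representation-theoretic subtlety, and additionally needs the normalization $|\mbS^{d-1}|\int_{-1}^{1}P_{m,d+1}^2(t)(1-t^2)^{\frac{d-2}{2}}\mathrm{d}t=|\mbS^{d}|/N_{d+1,m}$ to match the claimed formula for $\lambda_m$.
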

\lemref{lem:func_hecke} suggests that the spherical harmonics are eigenfunctions of the continuous dot-product kernel $k(\bx, \tilde{\bx}) = \tilde{k}(\bx^{\top}\tilde{\bx})$ on $\mbS^{d}$. Furthermore, \lemref{lem:comp} guarantees the  $\cup_{m \in \N} (Y_{m, j})_{j \in [N_{d+1, m}]}$ forms an orthonormal bases of $L^2(\mbS^{d}, \sigma)$, which implies that they are the orthonormal bases of the eigenspace of $\mT_k$.
These facts give the following explicit form of Mercer decomposition for a continuous dot-product kernel.

\begin{corollary}
    \label{cor:mercer_s}
    Fix any $d \in \N_+$. 
    Suppose $\mX = \mbS^d$. Furthermore, assume the kernel function $k: \mX \times \mX \rightarrow \R$ is the positive definite kernel such that 
    $\forall \bx, \tilde{\bx} \in \mX, k(\bx, \tilde{\bx}) = \tilde{k}(\bx^{\top} \tilde{\bx})$ with some continuous function
    $\tilde{k}: [-1, 1] \rightarrow \R$.
    Then, we have the following Mercer decomposition of $k$:
    \begin{equation}
        \label{eq:mercerm_decomp_sphere}
        k(\bx, \tilde{\bx}) = \sum_{m=0}^{\infty} \lambda_m \sum_{j=1}^{N_{d+1,m}} Y_{m,j} (\bx) Y_{m,j} (\tilde{\bx}), 
    \end{equation}
    where $(Y_{m,j} (\cdot))_{j \in [N_{d+1, m}]}$ denotes the spherical harmonics, which consist of orthonormal bases of $\mathbb{Y}_m^{d+1}$. Furthermore, $\lambda_m \geq 0$ is defined as 
    \begin{equation}
        \label{eq:lam_kern}
        \lambda_m = |\mbS^{d-1}| \int_{-1}^1 P_{m, d+1}(t) \tilde{k}(t) (1 - t^2)^{\frac{d-2}{2}} \mathrm{d}t.
    \end{equation}
\end{corollary}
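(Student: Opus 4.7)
The plan is to assemble the claim from three ingredients that are already available in the excerpt: Mercer's theorem, the Funk--Hecke formula (\lemref{lem:func_hecke}), and the completeness and orthogonality of the spherical harmonics (\lemref{lem:comp}). First I would verify the hypotheses of Mercer's theorem for the kernel $k$ on $\mbS^{d}$ equipped with the induced Lebesgue measure $\sigma$: the sphere $\mbS^{d}$ is a compact metric space, $\sigma$ is a finite Borel measure whose support is all of $\mbS^{d}$, the assumed positive definiteness of $k$ is in hand, and continuity together with square integrability of $k$ follow from continuity of $\tilde{k}$ on the compact set $[-1,1]$ composed with the continuous map $(\bx,\tilde{\bx})\mapsto \bx^{\top}\tilde{\bx}$. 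Hence Mercer's theorem yields an absolutely and uniformly convergent expansion
\begin{equation*}
  k(\bx, \tilde{\bx}) = \sum_{i \in \N} \mu_i\, \phi_i(\bx) \phi_i(\tilde{\bx}),
\end{equation*}
where $(\phi_i, \mu_i)$ range over an orthonormal eigensystem of the integral operator $\mT_k$ on $L^2(\mbS^{d}, \sigma)$.

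Next I would identify this eigensystem explicitly with the spherical harmonics. Fix $m \in \N$ and any $Y \in \mathbb{Y}_m^{d+1}$; because $k(\bx, \tilde{\bx}) = \tilde{k}(\bx^{\top}\tilde{\bx})$, the Funk--Hecke formula (\lemref{lem:func_hecke}) applied with $f = \tilde{k}$ gives
\begin{equation*}
  (\mT_k Y)(\bz) = \int_{\mbS^{d}} \tilde{k}(\bz^{\top}\bm{\eta})\, Y(\bm{\eta})\, \sigma(\mathrm{d}\bm{\eta}) = \lambda_m\, Y(\bz),
\end{equation*}
with $\lambda_m$ exactly as in Eq.~\eqref{eq:lam_kern}. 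Thus every orthonormal basis $(Y_{m,j})_{j \in [N_{d+1,m}]}$ of $\mathbb{Y}_m^{d+1}$ consists of eigenfunctions of $\mT_k$ with common eigenvalue $\lambda_m$. By \lemref{lem:comp}, the family $\cup_{m \in \N} (Y_{m,j})_{j \in [N_{d+1,m}]}$ is a complete orthonormal basis of $L^2(\mbS^{d}, \sigma)$, so it exhausts the eigensystem of $\mT_k$ up to eigenfunctions belonging to the kernel of $\mT_k$ (which contribute nothing to Mercer's expansion). Substituting this eigensystem into the Mercer expansion reindexes the sum and yields exactly Eq.~\eqref{eq:mercerm_decomp_sphere}.

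Finally, to confirm $\lambda_m \geq 0$, I would use that positive definiteness of $k$ makes $\mT_k$ a positive operator on $L^2(\mbS^{d}, \sigma)$: for any $Y \in \mathbb{Y}_m^{d+1}$ with $\|Y\|_{L^2(\sigma)} = 1$,
\begin{equation*}
  \lambda_m = \langle \mT_k Y, Y \rangle_{L^2(\sigma)} = \int_{\mbS^{d}}\!\!\int_{\mbS^{d}} Y(\bx)\, k(\bx,\tilde{\bx})\, Y(\tilde{\bx})\, \sigma(\mathrm{d}\bx)\, \sigma(\mathrm{d}\tilde{\bx}) \geq 0,
\end{equation*}
where the last inequality is the positive-definiteness condition applied in its integral (Bochner) form, which is equivalent to the pointwise version for continuous kernels on a compact set.

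The main obstacle, which is conceptual rather than computational, is to make sure that the eigendecomposition supplied by Mercer's theorem really coincides with the one built from the spherical harmonics: Mercer delivers an abstract eigenbasis whereas Funk--Hecke only tells us that each $\mathbb{Y}_m^{d+1}$ is contained in the $\lambda_m$-eigenspace. The bridge is the completeness statement in \lemref{lem:comp}, which forbids any further eigenfunction outside $\bigcup_m \mathbb{Y}_m^{d+1}$ from contributing to a non-zero eigenvalue, and therefore legitimizes substituting $(Y_{m,j})$ into Mercer's series to obtain Eq.~\eqref{eq:mercerm_decomp_sphere} with absolute and uniform convergence on $\mbS^{d} \times \mbS^{d}$.
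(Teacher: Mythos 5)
Your proposal is correct and follows essentially the same route as the paper, which also obtains the corollary by combining Mercer's theorem with the Funk--Hecke formula (to identify the $Y_{m,j}$ as eigenfunctions of $\mT_k$ with eigenvalue $\lambda_m$) and the completeness of the spherical harmonics in $L^2(\mbS^d,\sigma)$ from \lemref{lem:comp}. Your added verifications (Mercer's hypotheses, $\lambda_m \geq 0$ via positivity of $\mT_k$, and the exhaustion of the eigensystem up to the null space) are exactly the details the paper leaves implicit.
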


Note that $\|\bx - \tilde{\bx}\|_2 = \sqrt{2 - 2\bx^{\top} \tilde{\bx}}$ for any $\bx, \tilde{\bx} \in \mbS^{d}$. Therefore, we can represent $\sek$ and $\matk$ on $\mbS^{d}$ by Eq.~\eqref{eq:mercerm_decomp_sphere}. 
Finally, we describe the following addition theorem of the spherical harmonics, which plays a central role in avoiding the uniform boundness assumption in the existing proof of MIG on $\mbS^{d}$.

\begin{lemma}[Addition theorem, e.g., Theorem 2.9 in \citep{atkinson2012spherical} or Theorem 4.11 in \citep{efthimiou2014spherical}]
    \label{thm:addition}
    Fix any $d \geq 1$ and $m \in \N$. Let $(Y_{m,j})_{j \in [N_{d+1, m}]}$ be an orthonormal bases of $\mathbb{Y}_{m}^{d+1}$. 
    Then, we have
    \begin{equation}
        \forall \bx, \tilde{\bx} \in \mbS^{d},~ \sum_{j=1}^{N_{d+1, m}} Y_{m,j}(\bx) Y_{m,j}(\tilde{\bx}) = \frac{N_{d+1, m}}{|\mbS^{d+1}|} P_{m, d}(\bx^{\top}\tilde{\bx}),
    \end{equation}
    where $P_{m, d+1}(t)$ is the Legendre polynomial of degree $m$ in $d+1$ dimensions, which is defined in Eq.~\eqref{eq:lp}.
\end{lemma}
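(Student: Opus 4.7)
The plan is to prove the identity via the classical ``reproducing kernel'' argument, exploiting rotational invariance and the characterization of zonal spherical harmonics. First, I would fix $\tilde{\bx} \in \mbS^d$ and define
\begin{equation}
    F_{\tilde{\bx}}(\bx) \coloneqq \sum_{j=1}^{N_{d+1,m}} Y_{m,j}(\bx) Y_{m,j}(\tilde{\bx}).
\end{equation}
The first key observation is that $F_{\tilde{\bx}}(\bx)$ does not depend on the particular choice of orthonormal basis $(Y_{m,j})_j$ of $\mathbb{Y}_m^{d+1}$: if $(Y'_{m,j})_j$ is any other orthonormal basis, then $Y'_{m,j} = \sum_k U_{jk} Y_{m,k}$ for some orthogonal matrix $U$, and expanding the sum shows the two expressions agree. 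By \lemref{lem:comp}, $F_{\tilde{\bx}}(\cdot)$ lies in $\mathbb{Y}_m^{d+1}$, so it is a spherical harmonic of order $m$ in the variable $\bx$.

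Next I would exploit rotational symmetry. For any orthogonal $R \in O(d+1)$, the family $(Y_{m,j} \circ R)_j$ is again an orthonormal basis of $\mathbb{Y}_m^{d+1}$ (the space is $O(d+1)$-invariant, and precomposition with $R$ is an isometry of $L^2(\mbS^d,\sigma)$). The basis-independence argument then yields $F_{\tilde{\bx}}(R\bx) = F_{R^\top \tilde{\bx}}(\bx)$, and in particular $F_{\tilde{\bx}}(\bx) = F_{R\tilde{\bx}}(R\bx)$. Choosing $R$ to send $\tilde{\bx}$ to the north pole $\bm{e}_{d+1}$, the problem reduces to computing $F_{\bm{e}_{d+1}}(\cdot)$, which is a spherical harmonic of order $m$ invariant under every rotation that fixes $\bm{e}_{d+1}$. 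Such zonal harmonics form a one-dimensional subspace of $\mathbb{Y}_m^{d+1}$ spanned by $P_{m,d+1}(\bx^\top \bm{e}_{d+1})$ (this is the standard characterization via the Funk–Hecke setup, and it also follows from the fact that a degree-$m$ harmonic polynomial in $\R^{d+1}$ depending only on $x_{d+1}$ and $\|\bx\|_2$ is, up to scalar, the Gegenbauer/Legendre polynomial). Hence $F_{\tilde{\bx}}(\bx) = c_{m,d}\, P_{m,d+1}(\bx^\top \tilde{\bx})$ for some constant $c_{m,d}$ depending only on $m$ and $d$.

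Finally I would pin down $c_{m,d}$ by a normalization argument. Setting $\bx = \tilde{\bx}$ and integrating over $\mbS^d$ gives
\begin{equation}
    \int_{\mbS^d} F_{\bx}(\bx)\, \sigma(\mathrm{d}\bx) = \sum_{j=1}^{N_{d+1,m}} \int_{\mbS^d} Y_{m,j}(\bx)^2\, \sigma(\mathrm{d}\bx) = N_{d+1,m},
\end{equation}
using orthonormality of $(Y_{m,j})_j$. On the other hand, evaluating the right-hand side at $\bx = \tilde{\bx}$ uses $\bx^\top \bx = 1$ and the convention $P_{m,d+1}(1) = 1$ (which follows from Eq.~\eqref{eq:lp} by inspection of the $k=0$ term after the standard identity $m!\,\Gamma(d/2)/\Gamma(d/2+m) \cdot$ coefficient check, or alternatively from Rodrigues' formula). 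Integrating then yields $c_{m,d}\, |\mbS^d| = N_{d+1,m}$, i.e., $c_{m,d} = N_{d+1,m}/|\mbS^d|$, matching the claimed identity.

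The main obstacle, and the only step that is more than routine bookkeeping, is the characterization of zonal harmonics as one-dimensional and spanned by the Legendre polynomial. A clean way to handle this is to apply \lemref{lem:func_hecke} (Funk–Hecke) to the kernel $k(\bx,\tilde{\bx}) = F_{\tilde{\bx}}(\bx)$: since each $Y_{m,j}$ is an eigenfunction of the integral operator with kernel $F_{\bm{e}_{d+1}}(\bx^\top \cdot)$ at the eigenvalue computed via Eq.~\eqref{eq:funk_hecke_lambda}, and since $F_{\tilde{\bx}}$ itself lives in $\mathbb{Y}_m^{d+1}$, a short reproducing-property calculation closes the loop and identifies the Legendre polynomial as the unique (up to scalar) zonal spherical harmonic of degree $m$. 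Once this is in hand, both the proportionality and the constant determination are immediate.
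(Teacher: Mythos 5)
The paper never proves this lemma itself --- it is quoted verbatim from the cited textbooks (Atkinson--Han, Theorem 2.9; Efthimiou--Frye, Theorem 4.11) --- so there is no in-paper argument to compare against; what you wrote is the standard textbook proof, and its outline is sound: basis-independence of $F_{\tilde{\bx}}$, membership of $F_{\tilde{\bx}}$ in $\mathbb{Y}_m^{d+1}$ via \lemref{lem:comp}, reduction by rotational invariance to a zonal harmonic about the north pole, identification of that zonal harmonic with the Legendre polynomial, and the normalization step using orthonormality together with $P_{m,d+1}(1)=1$ (which indeed follows from Eq.~\eqref{eq:lp}, since only the $k=0$ term survives at $t=1$).

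Two caveats. First, the step you correctly single out as the crux --- that the rotation-invariant elements of $\mathbb{Y}_m^{d+1}$ are one-dimensional and spanned by $\bx \mapsto P_{m,d+1}(\bx^{\top}\bm{e}_{d+1})$ --- is the only place where care is needed. Leaning on \lemref{lem:func_hecke} is admissible inside this paper (Funk--Hecke is likewise quoted from the same sources), but as a self-contained derivation it is delicate: the usual textbook proofs of Funk--Hecke themselves pass through the addition theorem, so your ``reproducing-property calculation'' risks circularity unless Funk--Hecke is established independently; moreover it still requires knowing separately that $P_{m,d+1}(\bx^{\top}\bm{e}_{d+1})$ restricted to $\mbS^d$ lies in $\mathbb{Y}_m^{d+1}$. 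Your parenthetical alternative --- classifying degree-$m$ harmonic homogeneous polynomials invariant under the stabilizer of $\bm{e}_{d+1}$, which forces dependence only on $x_{d+1}$ and $\|\bx\|_2$ and pins the polynomial down up to scale --- is the non-circular route and is the one to spell out. Second, your normalization gives the constant $N_{d+1,m}/|\mbS^{d}|$, the area of the sphere on which the harmonics are orthonormal; under the paper's own convention (in \lemref{lem:func_hecke}, $|\mbS^{d-1}|$ is the area of the unit sphere in $\R^{d}$) this is the standard and correct constant, so your result does not literally ``match the claimed identity,'' whose displayed $|\mbS^{d+1}|$ and $P_{m,d}$ are index slips relative to the statement's own ``where'' clause. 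The discrepancy is in the printed statement, not in your computation, and it is immaterial to how the lemma is used later (only $P_{m,d}(1)=1$ and absorbed constant factors enter the proof of \lemref{lem:mig_gen_ub}), but you should not present the mismatched constant as agreeing with the display.
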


\subsection{Upper Bound of MIG with Mercer Decomposition}
\label{sec:mig_mercer_decomp}

By using \corref{cor:mercer_s} and \lemref{thm:addition} in the previous subsection, we can derive the following general form of the upper bound of MIG.
\begin{lemma}[Adapted from \citep{vakili2021uniform}]
    \label{lem:mig_gen_ub}
    Suppose the kernel function $k$ satisfies the condition in \corref{cor:mercer_s}. Furthermore, assume $|k(\bx, \tilde{\bx})| \leq 1$ for all $\bx, \bx \in \mX$.
    Then, for any $M \in \N$, MIG on $\mbS^d$ satisfies
    \begin{equation}
        \label{eq:general_mig_upper}
        \frac{1}{2} \max_{\bx_1, \ldots, \bx_T \in \mbS^d} \ln \det (\bI_T + \lambda^{-2} \bK(\bX_T, \bX_T)) \leq N_M \ln \rbr{1 + \frac{T}{\lambda^2}} + \frac{T}{|\mbS^{d+1}| \lambda^2} \sum_{m=M+1}^{\infty} \lambda_m N_{d+1, m},
    \end{equation}
    where $N_M = \sum_{m=0}^M N_{d+1, m}$.
\end{lemma}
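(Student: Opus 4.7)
The plan is to exploit Mercer decomposition from \corref{cor:mercer_s} to split the kernel into a finite-rank ``low-frequency'' part and a small ``high-frequency'' tail, then bound the log-determinant of each piece separately. Concretely, I would define $k_P(\bx,\tilde\bx) = \sum_{m=0}^M \lambda_m \sum_{j=1}^{N_{d+1,m}} Y_{m,j}(\bx) Y_{m,j}(\tilde\bx)$ and $k_O = k - k_P$; both are positive definite (coming from truncations of the Mercer series), so the associated Gram matrices $\bK_P$ and $\bK_O$ are PSD with $\bK = \bK_P + \bK_O$, and $\bK_P$ has rank at most $N_M$ because it is built from only $N_M$ feature coordinates.

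The first step is to prove the determinant inequality
\begin{equation}
\ln \det (\bI_T + \lambda^{-2} \bK) \leq \ln \det (\bI_T + \lambda^{-2} \bK_P) + \lambda^{-2} \trace(\bK_O).
\end{equation}
I would factor $\bI + \lambda^{-2}\bK = (\bI + \lambda^{-2}\bK_P)^{1/2}(\bI + B)(\bI + \lambda^{-2}\bK_P)^{1/2}$ with $B := \lambda^{-2}(\bI + \lambda^{-2}\bK_P)^{-1/2}\bK_O(\bI + \lambda^{-2}\bK_P)^{-1/2}$, then apply $\ln\det(\bI + B) \leq \trace(B)$ (from $\ln(1+x)\leq x$ on the nonnegative spectrum of $B$), and finally $\trace(B) \leq \lambda^{-2}\trace(\bK_O)$ using $(\bI + \lambda^{-2}\bK_P)^{-1} \preceq \bI$ together with the cyclic-trace inequality $\trace(AC) \leq \trace(C)$ for $0 \preceq A \preceq \bI$ and $C \succeq 0$.

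The second step handles $\ln \det(\bI_T + \lambda^{-2}\bK_P)$: since $\bK_P$ has at most $N_M$ nonzero eigenvalues $\mu_1, \dots, \mu_{N_M}$, concavity of $\ln$ gives $\sum_i \ln(1 + \lambda^{-2}\mu_i) \leq N_M \ln(1 + \trace(\bK_P)/(\lambda^2 N_M))$, and monotonicity together with $\trace(\bK_P) \leq \trace(\bK) \leq T$ (the latter from $k(\bx,\bx)\leq 1$) yields $N_M \ln(1 + T/\lambda^2)$. For the third step, the tail trace is computed by invoking \thmref{thm:addition} at $\bx = \tilde\bx$: since $P_{m,d+1}(1) = 1$ (only the $k=0$ term survives in~\eqref{eq:lp}), one obtains $\sum_{j=1}^{N_{d+1,m}} Y_{m,j}(\bx)^2 = N_{d+1,m}/|\mbS^{d+1}|$ uniformly in $\bx \in \mbS^d$, hence $k_O(\bx,\bx) = |\mbS^{d+1}|^{-1} \sum_{m > M} \lambda_m N_{d+1,m}$ and summing over $T$ points gives the stated expression. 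Multiplying by $1/2$ and absorbing the leftover factor of two into the RHS produces the target bound.

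There is no serious obstacle: the argument is a clean instantiation of the Vakili--Khezeli template, and the two places where care is needed are (i) justifying the trace inequality in the determinant split without assuming symmetric products and (ii) checking $P_{m,d+1}(1) = 1$ so that the addition theorem produces a constant, $\bx$-independent diagonal of $k_O$, which is precisely why this route avoids the uniform boundedness hypothesis on individual eigenfunctions.
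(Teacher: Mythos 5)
Your proposal is correct and follows essentially the same route as the paper: split the Mercer series at degree $M$ into a rank-$N_M$ head and a PSD tail, bound the head's log-determinant by $N_M\ln(1+T/\lambda^2)$ via its low rank and Jensen's inequality, and bound the tail contribution by its trace, which the addition theorem evaluates as $\frac{T}{|\mbS^{d+1}|}\sum_{m>M}\lambda_m N_{d+1,m}$ without any uniform-boundedness assumption on the eigenfunctions. The only cosmetic difference is that you pass to the tail term via the symmetric conjugation and $\ln\det(\bI+B)\le\trace(B)$, whereas the paper uses the equivalent chain $\ln\det(A)\le T\ln(\trace(A)/T)$ followed by $\ln(1+z)\le z$; both yield the same bound.
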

The proof almost directly follows from \citep{vakili2021information}, while a minor modification is required to deal with the unboundness of the eigenfunctions through the addition theorem. The same proof strategy is already provided in \citep{kassraie2022neural,vakili2021uniform} for analyzing the MIG of the neural tangent kernel on the sphere. 
Although our proof has no intrinsic change from their proof, we give the details below for completeness of our paper.
\begin{proof}
    We first decompose the kernel matrix as $\bK(\bX_T, \bX_T) = \bK_{\mathrm{head}} + \bK_{\mathrm{tail}}$, where $[\bK_{\mathrm{head}}]_{i,l} = \sum_{m=0}^{M} \lambda_m \sum_{j=1}^{N_{d+1,m}} Y_{m,j} (\bx_i) Y_{m,j} (\bx_l)$ and $[\bK_{\mathrm{tail}}]_{i,l} = \sum_{m=M+1}^{\infty} \lambda_m \sum_{j=1}^{N_{d+1,m}} Y_{m,j} (\bx_i) Y_{m,j} (\bx_l)$.
    Then, as with the proof in \citep{vakili2021information}, the MIG is decomposed as
    \begin{align}
        &\frac{1}{2} \max_{\bx_1, \ldots, \bx_T \in \mbS^d} \ln \det (\bI_T + \lambda^{-2} \bK(\bX_T, \bX_T)) \\
        &= \frac{1}{2} \ln \det \rbr{\bI_T + \frac{1}{\lambda^2} \bK_{\mathrm{head}}} + \frac{1}{2} \ln \det\rbr{\bI_T + \frac{1}{\lambda^2}\rbr{\bI_T + \frac{1}{\lambda^2}\bK_{\mathrm{head}}}^{-1} \bK_{\mathrm{tail}} }.
    \end{align}
    Based on the feature representation of the kernel, the first term is further bounded from above as follows (see \citep{vakili2021uniform,vakili2021information}):
    \begin{equation}
        \frac{1}{2} \ln \det \rbr{\bI_T + \frac{1}{\lambda^2} \bK_{\mathrm{head}}}
        \leq N_M \ln \rbr{1 + \frac{T}{\lambda^2 N_M}} \leq N_M \ln \rbr{1 + \frac{T}{\lambda^2}},
    \end{equation}
    where the second inequality follows from $N_M \geq 1$.
    Regarding the second term, as with \citep{vakili2021information}, we have
    \begin{align}
        &\frac{1}{2} \ln \det\rbr{\bI_T + \frac{1}{\lambda^2}\rbr{\bI_T + \frac{1}{\lambda^2}\bK_{\mathrm{head}}}^{-1} \bK_{\mathrm{tail}} } \\
        &\leq T \ln \rbr{T^{-1} \mathrm{Tr}\rbr{\bI_T + \frac{1}{\lambda^2}\rbr{\bI_T + \frac{1}{\lambda^2}\bK_{\mathrm{head}}}^{-1} \bK_{\mathrm{tail}}}} \\
        \label{eq:tail_bound}
        &\leq T \ln \rbr{T^{-1} \rbr{T + \frac{1}{\lambda^2} \mathrm{Tr}\rbr{\bK_{\mathrm{tail}}}}},
    \end{align}
    where the first inequality follows from $\ln \det (A) \leq T \ln(\mathrm{Tr}(A)/T)$ for any positive definite matrix $A \in \R^{T \times T}$  (e.g., Lemma~1 in \citep{vakili2021information}).
    Then, from addition theorem (\thmref{thm:addition}), we have
    \begin{align}
        \mathrm{Tr}\rbr{\bK_{\mathrm{tail}}} &= \sum_{t=1}^T \sum_{m=M+1}^{\infty} \lambda_m \sum_{j=1}^{N_{d+1,m}} Y_{m,j} (\bx_t) Y_{m,j} (\bx_t) \\
        &= \sum_{t=1}^T \sum_{m=M+1}^{\infty} \lambda_m \frac{N_{d+1, m}}{|\mbS^{d+1}|} P_{m, d}(\bx_t^{\top}\bx_t) \\
        &= \frac{T}{|\mbS^{d+1}|} \sum_{m=M+1}^{\infty} \lambda_m N_{d+1, m},
    \end{align}
    where the last line use $P_{m, d}(\bx_t^{\top}\bx_t) = P_{m, d}(1) = 1$.
    By combining the above equation with Eq.~\eqref{eq:tail_bound}, we have
    \begin{align}
        \frac{1}{2} \ln \det\rbr{\bI_T + \frac{1}{\lambda^2}\rbr{\bI_T + \frac{1}{\lambda^2}\bK_{\mathrm{head}}}^{-1} \bK_{\mathrm{tail}} } 
        &\leq T \ln \rbr{1 + \frac{1}{\lambda^2 |\mbS^{d+1}|}\sum_{m=M+1}^{\infty} \lambda_m N_{d+1, m}} \\
        &\leq \frac{T}{\lambda^2 |\mbS^{d+1}|}\sum_{m=M+1}^{\infty} \lambda_m N_{d+1, m}, 
    \end{align}
    where the last line use $\forall z \in \R, \ln (1 + z) \leq z$.
\end{proof}

To obtain the explicit upper bound of Eq.~\eqref{eq:general_mig_upper}, we introduce the following lemma.
\begin{lemma}[Upper bound of $N_{d+1,m}$ and $N_M$]
    \label{lem:mul_ub}
    Fix any $d \in \N_+$. Then, for any $m \in \N_+$, we have
    \begin{equation}
        N_{d+1, m} \leq (d+1) e^{d-1} m^{d-1}.
    \end{equation}
    Futhermore, for any $M \in \N$, we have
    \begin{equation}
        N_M \leq 1 + (d+1) e^{d-1} M^{d}.
    \end{equation}
\end{lemma}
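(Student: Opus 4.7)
The plan is to derive both bounds directly from the explicit combinatorial formula $N_{d+1, m} = \frac{(2m + d -1)(m + d - 2)!}{m! (d - 1)!}$ given in \lemref{lem:comp}. The edge case $d = 1$ is trivial: $N_{2, m} = 2 = (d+1) e^{d-1} m^{d-1}$. So I assume $d \geq 2$ throughout.

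For the first inequality, I would rewrite
\begin{equation*}
N_{d+1, m} \;=\; \frac{(2m+d-1)}{(d-1)!}\prod_{k=1}^{d-2}(m+k),
\end{equation*}
where the product has $d-2$ factors (and is interpreted as $1$ when $d=2$). The key arithmetic step is the elementary inequality that for any $m \geq 1$ and any integer $k \geq 0$, we have $m + k \leq (k+1)m$ (equivalent to $k \leq k(m-1)+k \cdot 0$, which is trivial). Applied to each factor, this gives
\begin{equation*}
\prod_{k=1}^{d-2}(m+k) \;\leq\; \prod_{k=1}^{d-2}(k+1)\, m^{d-2} \;=\; (d-1)!\, m^{d-2},
\end{equation*}
and likewise $2m + d - 1 \leq (d+1)m$ for $m \geq 1$ (this amounts to $d-1 \leq (d-1)m$). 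Combining the two bounds, the $(d-1)!$ factors cancel cleanly, yielding $N_{d+1,m} \leq (d+1) m^{d-1}$. Since $e^{d-1} \geq 1$, this implies the stated bound.

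For the second inequality, I would split off the $m=0$ term, which satisfies $N_{d+1,0} = 1$ (directly from the formula: $(d-1)(d-2)!/(d-1)! = 1$), and apply the first inequality to the remaining terms. The only calculation left is the sum bound
\begin{equation*}
\sum_{m=1}^{M} m^{d-1} \;\leq\; M \cdot M^{d-1} \;=\; M^d,
\end{equation*}
so that $N_M \leq 1 + (d+1) e^{d-1} M^d$. There is no substantive obstacle in this proof; the only points needing a bit of care are handling $d=1$ separately (where the factorization breaks down because of $(d-2)!$) and verifying the simple inequalities $m+k \leq (k+1)m$ and $2m+d-1 \leq (d+1)m$ under $m \geq 1$. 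The appearance of $e^{d-1}$ in the final constant is not actually forced by this argument, but including it gives a cleaner form that will be convenient in downstream uses (e.g.\ in \lemref{lem:mig_gen_ub}).
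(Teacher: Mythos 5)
Your proof is correct, and it differs from the paper's in the one step that actually matters. The paper bounds the combinatorial part by passing to a binomial coefficient, $N_{d+1,m}=\frac{2m+d-1}{m}\binom{m+d-2}{d-1}$, and then invoking the standard estimate $\binom{m+d-2}{d-1}\le\bigl(\tfrac{(m+d-2)e}{d-1}\bigr)^{d-1}$, which is where the factor $e^{d-1}$ enters; it then uses $\tfrac{m+d-2}{d-1}\le m$ to land on $(d+1)e^{d-1}m^{d-1}$. You instead bound each linear factor of $\tfrac{(m+d-2)!}{m!}=\prod_{k=1}^{d-2}(m+k)$ by $(k+1)m$, so the $(d-1)!$ in the denominator cancels exactly and you obtain the strictly sharper bound $N_{d+1,m}\le(d+1)m^{d-1}$, with the $e^{d-1}$ inserted only for cosmetic agreement with the statement. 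Both arguments are elementary and of comparable length; yours avoids the binomial estimate entirely and shows the $e^{d-1}$ is slack, while the paper's has the (slight) advantage of reusing a completely standard inequality. The surrounding structure — treating $d=1$ separately, using $2m+d-1\le(d+1)m$, splitting off $N_{d+1,0}=1$, and bounding $\sum_{m=1}^{M}m^{d-1}\le M^{d}$ — is identical in both proofs. (Incidentally, your summation step also silently corrects a typo in the paper's final display, which writes $M^{d-1}$ where the statement, and the correct bound, has $M^{d}$.)
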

\begin{proof}
    Recall $N_{d+1, m} = \frac{(2m + d -1)(m + d - 2)!}{m! (d - 1)!}$.
    Under $d = 1$, we have 
    \begin{equation}
        N_{d+1, m} = \frac{(2m)(m - 1)!}{m!} = 2 = (d+1)e^{d-1} m^{d-1}
    \end{equation}
    for any $m \in \N_+$. Under $d \geq 2$, since $N_{d+1, m} = \frac{(2m + d -1)(m + d - 2)!}{m! (d - 1)!} = \frac{2m + d -1}{m} \binom{m+d-2}{d-1}$ and $\binom{m+d-2}{d-1} \leq \rbr{\frac{(m+d-2)e}{d-1}}^{d-1}$, we have
    \begin{align}
        N_{d+1, m} 
        &\leq \frac{2m + d -1}{m} \rbr{\frac{(m+d-2)e}{d-1}}^{d-1} \\
        &\leq (2 + d - 1) e^{d-1} \rbr{\frac{m+d-2}{d-1}}^{d-1} \\
        &\leq (d + 1) e^{d-1} m^{d-1}.
    \end{align}
    Finally, since $N_{d+1,0} = 1$, we have
    \begin{equation}
        N_M = 1 + \sum_{m=1}^M N_{d+1,m} \leq 1 + (d + 1) e^{d-1} \sum_{m=1}^M m^{d-1} \leq 1 + (d + 1) e^{d-1} M^{d-1}.
    \end{equation}
\end{proof}

\subsection{Eigendecay of SE and Mat\'ern Kernel}
\label{sec:se_mat_decay}

To obtain the explicit upper bound of Eq.~\eqref{eq:general_mig_upper}, we need the upper bound of the eigenvalue in Eq.~\eqref{eq:lam_kern} under SE and Mat\'ern kernel. Regarding SE kernel, several existing works have already studied it~\citep{minh2006mercer,narcowich2007approximation}. We formally provide the following lemma from \citep{minh2006mercer}.
\begin{lemma}[Eigendecay for $k = \sek$ on $\mbS^d$, Theorem~2 in \citep{minh2006mercer}]
    \label{lem:se_decay}
    Fix any $d \in \N_+$, $\theta > 0$, and define $\mX = \mbS^d$. Suppose that $k: \mX \times \mX \rightarrow \R$ is defined as $k(\bx, \tilde{\bx}) = \exp\rbr{-\frac{\|\bx - \tilde{\bx}\|_2^2}{\theta}}$. 
    Then, the eigenvalues $(\lambda_m)_{m \in \N_+}$ defined in \eqref{eq:lam_kern} satisfy
    \begin{equation}
        \lambda_m < |\mbS^{d}| \rbr{\frac{2e}{\theta}}^m \frac{(2e)^{\frac{d+1}{2}} \Gamma\rbr{\frac{d+1}{2}}}{\sqrt{\pi} (2m + d-1)^{m + \frac{d}{2}}} \exp\rbr{-\frac{2}{\theta} + \frac{1}{\theta^2}}.
    \end{equation}
\end{lemma}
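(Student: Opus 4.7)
The plan is to evaluate the Mercer coefficient in Eq.~\eqref{eq:lam_kern} via a Rodrigues-formula integration-by-parts, followed by a Gaussian majorization of the residual integral and an application of Stirling's formula to match the stated profile.

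First, on $\mbS^d$ we have $\|\bx - \tilde{\bx}\|_2^2 = 2 - 2t$ with $t = \bx^{\top}\tilde{\bx}$, so that $\tilde{k}(t) = e^{-2/\theta}\, e^{2t/\theta}$ and
\begin{equation*}
\lambda_m = |\mbS^{d-1}|\, e^{-2/\theta} \int_{-1}^{1} P_{m,d+1}(t)\, e^{2t/\theta}\, (1-t^2)^{(d-2)/2}\, \d t.
\end{equation*}
Next, I would invoke the Rodrigues-type identity
\begin{equation*}
P_{m,d+1}(t)\,(1-t^2)^{(d-2)/2} = \frac{(-1)^m\,\Gamma(d/2)}{2^m\,\Gamma(m+d/2)}\, \frac{\d^m}{\d t^m}\bigl[(1-t^2)^{m+(d-2)/2}\bigr],
\end{equation*}
which can be verified from the series representation \eqref{eq:lp} using the Gauss duplication formula. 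Integrating by parts $m$ times transfers every derivative onto $e^{2t/\theta}$; for $d \geq 2$ the weight vanishes to order $m + (d-2)/2$ at $t = \pm 1$, so all boundary terms drop out, and the $d = 1$ case is handled by a limiting argument (replace $(d-2)/2$ by a small positive perturbation). Each derivative on $e^{2t/\theta}$ contributes $2/\theta$, yielding
\begin{equation*}
\int_{-1}^{1} P_{m,d+1}(t)\, e^{2t/\theta}\, (1-t^2)^{(d-2)/2}\, \d t = \frac{\Gamma(d/2)}{\Gamma(m+d/2)}\, \theta^{-m}\, J_m,
\end{equation*}
where $J_m \coloneqq \int_{-1}^{1} (1-t^2)^{m+(d-2)/2}\, e^{2t/\theta}\, \d t$.

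The third step is to bound $J_m$ by a Gaussian majorization. For $m + (d-2)/2 > 0$, the inequality $\ln(1-t^2) \leq -t^2$ gives $(1-t^2)^{m+(d-2)/2} \leq e^{-(m+(d-2)/2)t^2}$; completing the square against $e^{2t/\theta}$ and extending the integration domain to $\R$ yields $J_m \leq \sqrt{\pi/(m+(d-2)/2)}\,\exp(1/(\theta^2 (m+(d-2)/2)))$, whose $\theta$-dependent factor is at most $\exp(1/\theta^2)$ uniformly in $m$. Combining this bound with the factors $\theta^{-m}$ and $\Gamma(d/2)/\Gamma(m+d/2)$, and invoking Stirling's estimate $\Gamma(m+d/2)^{-1} \lesssim (2e)^{m+d/2}/(2m+d-1)^{m+d/2}$, the product assembles into $(2e/\theta)^m /(2m+d-1)^{m+d/2}$ times a $d$-dependent constant. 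Converting the volume factor via $|\mbS^{d-1}|\,\Gamma(d/2) = 2\pi^{d/2} = |\mbS^d|\,\Gamma((d+1)/2)/\sqrt{\pi}$ finally yields the claimed prefactor.

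The main obstacle will be bookkeeping the universal constant so that it takes the precise form $(2e)^{(d+1)/2}\Gamma((d+1)/2)/\sqrt{\pi}$ rather than some asymptotically-equivalent weaker constant --- this requires a non-asymptotic version of Stirling's inequality valid for all $m \geq 0$ --- together with handling the small set of edge cases $d \in \{1, 2\}$ with $m = 0$ where the Gaussian majorization degenerates and $J_m$ must be bounded by direct elementary computation. Both issues are routine once the Rodrigues identity and the integration-by-parts reduction are in place, so I expect no substantive new ideas are needed beyond careful accounting.
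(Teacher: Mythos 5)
The paper does not prove this lemma at all: it is imported verbatim as Theorem~2 of \citet{minh2006mercer}, so there is no in-paper argument to compare against. Your proposal is a sound self-contained reconstruction, and it follows essentially the classical route behind the cited result: Funk--Hecke gives $\lambda_m = |\mbS^{d-1}| e^{-2/\theta}\int_{-1}^{1}P_{m,d+1}(t)e^{2t/\theta}(1-t^2)^{(d-2)/2}\,\d t$, the Rodrigues identity plus $m$ integrations by parts reduces this to $\Gamma(d/2)\Gamma(m+d/2)^{-1}\theta^{-m}J_m$, and $J_m$ is (up to normalization) exactly the integral representation of the modified Bessel function $I_{m+(d-1)/2}(2/\theta)$, which is how Minh et al.\ state their eigenvalue formula before bounding it. Your Rodrigues constant, the $\theta^{-m}$ bookkeeping, and the volume conversion $|\mbS^{d-1}|\Gamma(d/2)=2\pi^{d/2}=|\mbS^{d}|\Gamma(\tfrac{d+1}{2})/\sqrt{\pi}$ all check out, and the boundary terms in the integration by parts vanish directly even for $d=1$ (the $j$-th derivative of $(1-t^2)^{m-1/2}$ vanishes at $\pm1$ for all $j\le m-1$), so no limiting argument is needed there.

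Two small corrections to your accounting. First, the Gaussian majorization gives $J_m\le\sqrt{\pi/\alpha}\,\exp\rbr{1/(\alpha\theta^2)}$ with $\alpha=m+(d-2)/2$, and the claim that the $\theta$-dependent factor is at most $e^{1/\theta^2}$ requires $\alpha\ge1$; since the lemma's range is $m\in\N_+$, the only failure is $(d,m)=(1,1)$ (where $\alpha=1/2$), not the $m=0$ cases you list --- that single case does need the direct elementary bound you propose. Second, the Stirling step has little slack: running the numbers, $\Gamma(m+d/2)^{-1}\alpha^{-1/2}\sqrt{\pi}$ assembles into roughly $(3/\sqrt{2})(2e)^{d/2}(2e/\theta)^m(2m+d-1)^{-m-d/2}$ versus the target constant $(2e)^{(d+1)/2}=\sqrt{2e}\,(2e)^{d/2}$, and $3/\sqrt{2}\approx2.12<\sqrt{2e}\approx2.33$, so the strict inequality survives but only just; a non-asymptotic Stirling lower bound $\Gamma(x)\ge\sqrt{2\pi}\,x^{x-1/2}e^{-x}$ must be used, as you anticipate. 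Neither issue is a gap in the idea; both are the routine constant-chasing you already flagged.
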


Regarding Mat\'ern kernel, we provide the upper bound of $\lambda_m$ for $\nu > 1/2$ by extending the proof in \citep{geifman2020similarity}, which studies $\lambda_m$ for Laplace kernel (Mat\'ern with $\nu = 1/2$). As with the proof in \citep{geifman2020similarity}, we leverage the following lemma, which relates the spectral density of the kernel to $\lambda_m$.
\begin{lemma}[Eigenvalues and spectral density, Theorem 4.1 in \citep{narcowich2002scattered}]
    \label{lem:eigen_spectral}
    Fix any $d \in \N_+$. Suppose that $k: \R^{d+1} \times \R^{d+1} \rightarrow \R$ is a positive definite, stationary, and isotropic kernel function on $\R^{d+1}$ such that $\forall \bx, \tilde{\bx}, k(\bx, \tilde{\bx}) = \Phi(\bx - \tilde{\bx})$ for some function $\Phi(\cdot)$. 
    Furthermore, suppose $\Phi(\cdot)$ is represented as
    \begin{equation}
        \Phi(\bx) = \frac{1}{(2\pi)^{d+1}} \int_{\R^{d+1}} \hat{\Phi}(\|\bm{\eta}\|_2) e^{i\bm{\eta}^{\top} \bx} \mathrm{d} \bm{\eta},
    \end{equation}
    for some function $\hat{\Phi}$ such that $\forall a \geq 0, \hat{\Phi}(a) \geq 0$ and $\int_{\R^{d+1}} \hat{\Phi}(\|\bm{\eta}\|_2) \mathrm{d}\bm{\eta} < \infty$. Then, there exists a function $\tilde{k}: [-1, 1] \rightarrow \R$ such that 
    $\forall \bx, \tilde{\bx} \in \mbS^{d}, \tilde{k}(\bx^{\top} \tilde{\bx}) = k(\bx, \tilde{\bx})$. 
    Furthermore, $\lambda_m$ in Eq.~\eqref{eq:lam_kern} is given by
    \begin{equation}
        \lambda_m = \int_{0}^{\infty} t \hat{\Phi}(t) B_{m+\frac{d-1}{2}}^2(t) \mathrm{d}t,
    \end{equation}
    where $B_{m+\frac{d-1}{2}}(\cdot)$ is the usual Bessel function of the first kind and of order $m + \frac{d-1}{2}$.
\end{lemma}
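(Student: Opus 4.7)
The existence of the dot-product representation $\tilde k$ is immediate: for $\bx, \tilde{\bx} \in \mbS^d$ we have $\|\bx - \tilde{\bx}\|_2^2 = 2 - 2\bx^\top \tilde{\bx}$, so the isotropy of $\Phi$ implies that $k(\bx,\tilde{\bx})$ depends only on $\bx^\top \tilde{\bx}$; one may set $\tilde k(t) = \Phi(\bm v)$ for any vector $\bm v$ with $\|\bm v\|_2 = \sqrt{2-2t}$. By \corref{cor:mercer_s}, the eigenvalues of the integral operator on $L^2(\mbS^d, \sigma)$ are given by the Funk–Hecke formula (\lemref{lem:func_hecke}) applied to $\tilde k$. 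The plan is to substitute the Fourier representation of $\Phi$ into this formula and reduce the resulting integral to a classical Bessel identity.

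Concretely, fix any $Y_m \in \mathbb{Y}_m^{d+1}$ and $\tilde{\bx} \in \mbS^d$. Writing $\bm\eta = r\hat{\bm\eta}$ with $r \geq 0$ and $\hat{\bm\eta} \in \mbS^d$, Fubini's theorem (justified by $\int_{\R^{d+1}} \hat\Phi(\|\bm\eta\|_2)\mathrm{d}\bm\eta < \infty$) gives
\[
\lambda_m Y_m(\tilde{\bx}) = \frac{1}{(2\pi)^{d+1}} \int_{\R^{d+1}} \hat\Phi(\|\bm\eta\|_2)\, e^{-i\bm\eta^\top\tilde{\bx}} \left(\int_{\mbS^d} e^{i\bm\eta^\top\bx} Y_m(\bx)\,\sigma(\mathrm{d}\bx)\right)\mathrm{d}\bm\eta.
\]
For fixed $\hat{\bm\eta}$, the plane wave $e^{ir\hat{\bm\eta}^\top\bx}$ is a zonal function of $\bx$ with pole $\hat{\bm\eta}$, so a second application of Funk–Hecke yields $\int_{\mbS^d} e^{ir\hat{\bm\eta}^\top\bx} Y_m(\bx)\,\sigma(\mathrm{d}\bx) = \mu_m(r) Y_m(\hat{\bm\eta})$, where
\[
\mu_m(r) = |\mbS^{d-1}| \int_{-1}^1 P_{m,d+1}(t)\, e^{irt} (1-t^2)^{(d-2)/2}\mathrm{d}t.
\]
The classical Poisson-type representation $J_\nu(r) = \frac{(r/2)^\nu}{\sqrt\pi\,\Gamma(\nu + 1/2)}\int_{-1}^1 e^{irt}(1-t^2)^{\nu - 1/2}\mathrm{d}t$, combined with the Rodrigues-type formula \eqref{eq:lp} for $P_{m,d+1}$, identifies $\mu_m(r) = c_{d}\, i^m\, r^{-(d-1)/2}\, J_{m+(d-1)/2}(r)$ for a dimension-only constant $c_d$.

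Passing to polar coordinates in the outer integral and applying the same plane-wave decomposition to $e^{-ir\hat{\bm\eta}^\top\tilde{\bx}}$, the angular integration over $\hat{\bm\eta}$ contracts by orthogonality of $\{Y_{n,j}\}$ onto the single term indexed by $m$, producing $\overline{\mu_m(r)}\,Y_m(\tilde{\bx})$. Dividing both sides by $Y_m(\tilde{\bx})$, and using $\mu_m(r)\overline{\mu_m(r)} = c_d^2\, r^{-(d-1)}\, J^2_{m+(d-1)/2}(r)$, the $r^d$ Jacobian combines with $r^{-(d-1)}$ to leave $r$, and one arrives at
\[
\lambda_m = C_d \int_0^\infty r\, \hat\Phi(r)\, J^2_{m+(d-1)/2}(r)\,\mathrm{d}r,
\]
with $C_d = c_d^2\, |\mbS^d|/(2\pi)^{d+1}$. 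The main obstacle is verifying that $C_d = 1$. This is a routine but error-prone accounting of $|\mbS^{d-1}|$, $|\mbS^d|$, and $(2\pi)^{d+1}$, using $|\mbS^d| = 2\pi^{(d+1)/2}/\Gamma((d+1)/2)$ together with the exact Poisson integral constant $\sqrt\pi\,\Gamma(\nu+1/2)$; as a sanity check, one verifies on the Gaussian $\hat\Phi(r) \propto e^{-r^2/4\ell^2}$, where both sides have closed-form expressions.
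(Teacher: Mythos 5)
The paper does not actually prove this lemma: it is imported verbatim as Theorem~4.1 of \citet{narcowich2002scattered}, so there is no internal proof to compare against. Your sketch is, in substance, the standard derivation of that theorem, and the skeleton is sound: restrict the Fourier representation of $\Phi$ to the sphere, apply Funk--Hecke to the plane wave $e^{irt}$, identify the resulting Gegenbauer--Fourier integral with $i^m r^{-(d-1)/2} J_{m+(d-1)/2}(r)$ via Poisson's integral, and contract the angular integral over $\hat{\bm\eta}$ to get $|\mu_m(r)|^2$. Two details deserve attention. First, your claim that $c_d$ is dimension-only (independent of $m$) is correct but not automatic: the Gegenbauer--Bessel identity carries an $m$-dependent factor $\Gamma(m+2\lambda)/(m!\,\Gamma(\lambda))$ that cancels only because $P_{m,d+1}$ is the Gegenbauer polynomial normalized by $C_m^{\lambda}(1)=\Gamma(m+2\lambda)/(m!\,\Gamma(2\lambda))$; this cancellation is worth stating explicitly. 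Second, the one quantitatively substantive assertion of the lemma is that the overall constant is exactly $1$, and that is precisely the step you defer. It does come out to $1$: each of the two plane-wave expansions contributes a factor $(2\pi)^{(d+1)/2}$, whose product cancels the $(2\pi)^{-(d+1)}$ Fourier normalization, while $i^m\overline{i^m}=1$; note also that the $|\mbS^d|$ in your expression for $C_d$ appears to be a bookkeeping slip, since the angular integral over $\hat{\bm\eta}$ is consumed by the second Funk--Hecke application (or by orthonormality of $Y_{m,j}$ on $\mbS^d$, which carries no surface-area factor). Your fallback of pinning $C_d$ via the Gaussian test case is legitimate, since $C_d$ is manifestly independent of $\hat\Phi$. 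So: correct route, no missing idea, but the constant verification should be carried out rather than waved at, since it is the entire content of the identity beyond proportionality.
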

In the Matern kernel, the spectral density that satisfies the conditions in the lemma is defined when $\nu > 1/2$. Then, the explicit form of $\hat{\Phi}(t)$ is given as:
\begin{equation}
    \hat{\Phi}(t) = \frac{C_{d, \nu}}{\ell^{2\nu}} \rbr{\frac{2\nu}{\ell^2} + t^2}^{-\rbr{\nu + \frac{d+1}{2}}},
\end{equation}
where
\begin{equation}
    C_{d, \nu} = \frac{2^{d+1} \pi^{(d+1)/2} \Gamma\rbr{\nu + \frac{d + 1}{2}} (2\nu)^{\nu}}{\Gamma(\nu)}.
\end{equation}
See, Chapter~4.2 in \citep{Rasmussen2005-Gaussian}.
By using \lemref{lem:eigen_spectral}, we obtain the following lemma.
\begin{lemma}[Eigendecay for $k = \matk$ on $\mbS^d$]
    \label{lem:eigen_mat}
    Fix any $d \in \N_+$, $\ell > 0$, and define $\mX = \mbS^d$. Suppose that $k: \mX \times \mX \rightarrow \R$ is defined as $k(\bx, \tilde{\bx}) = \frac{2^{1-\nu}}{\Gamma(\nu)} \rbr{\frac{\sqrt{2\nu} \|\bx - \tilde{\bx}\|_2}{\ell}} J_{\nu}\rbr{\frac{\sqrt{2\nu} \|\bx - \tilde{\bx}\|_2}{\ell}}$. 
    Then, the eigenvalues $(\lambda_m)_{m \in \N_+}$ defined in \eqref{eq:lam_kern} satisfies
    \begin{equation}
        \lambda_m \leq \frac{\tilde{C}_{d, \nu}}{\ell^{2\nu}} m^{- 2\nu - d}.
    \end{equation}
    if $m > 2\nu$ and $\nu > 1/2$. Here, $\tilde{C}_{d, \nu}$ is defined as
    \begin{equation}
        \tilde{C}_{d, \nu} = C_{d, \nu} 
        \frac{\Gamma(2\nu + d)}{\Gamma^2\rbr{\nu + \frac{d + 1}{2}}}  \exp\rbr{2\nu + d + \frac{1}{6}}.
    \end{equation}
\end{lemma}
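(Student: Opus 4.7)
The plan is to apply \lemref{lem:eigen_spectral} with the Mat\'ern spectral density, simplify the resulting integral, evaluate it via the Weber--Schafheitlin identity, and control the residual Gamma-function ratio by an elementary Beta-integral estimate. Substituting the explicit form of $\hat\Phi$ gives
\begin{equation*}
    \lambda_m = \frac{C_{d,\nu}}{\ell^{2\nu}} \int_0^\infty t\left(\frac{2\nu}{\ell^2} + t^2\right)^{-(\nu+\frac{d+1}{2})} B_\mu^2(t)\, dt, \qquad \mu = m + \tfrac{d-1}{2}.
\end{equation*}
Since $2\nu/\ell^2 \geq 0$, dropping this summand produces the cleaner upper bound $\lambda_m \leq (C_{d,\nu}/\ell^{2\nu}) \int_0^\infty t^{-(2\nu+d)} B_\mu^2(t)\, dt$, and the $\ell$-dependence now lives entirely in the prefactor, matching the target.

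I would then invoke the classical Weber--Schafheitlin identity
\begin{equation*}
    \int_0^\infty t^{-\lambda} B_\mu^2(t)\, dt = \frac{\Gamma(\lambda)\,\Gamma(\mu - (\lambda-1)/2)}{2^\lambda\,\Gamma^2((\lambda+1)/2)\,\Gamma(\mu + (\lambda+1)/2)},
\end{equation*}
valid for $0 < \lambda < 2\mu + 1$, with $\lambda = 2\nu + d$. The admissibility condition reduces to $m > \nu$, which is implied by $m > 2\nu$. Substituting $\mu = m + (d-1)/2$ and tidying the Gamma arguments yields
\begin{equation*}
    \lambda_m \leq \frac{C_{d,\nu}}{\ell^{2\nu}} \cdot \frac{\Gamma(2\nu+d)}{2^{2\nu+d}\,\Gamma^2(\nu+(d+1)/2)} \cdot \frac{\Gamma(m-\nu)}{\Gamma(m+\nu+d)}.
\end{equation*}

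For the remaining Gamma ratio I would use the Beta-integral representation
\begin{equation*}
    \frac{\Gamma(m-\nu)}{\Gamma(m+\nu+d)} = \frac{1}{\Gamma(2\nu+d)} \int_0^1 t^{m-\nu-1}(1-t)^{2\nu+d-1}\, dt,
\end{equation*}
substitute $t = e^{-u}$, and apply the elementary inequality $1 - e^{-u} \leq u$ (valid since $2\nu+d-1 \geq 0$). The right-hand side reduces to a standard Gamma integral, giving $\Gamma(m-\nu)/\Gamma(m+\nu+d) \leq (m-\nu)^{-(2\nu+d)}$. The hypothesis $m > 2\nu$ implies $m - \nu > m/2$, so $(m-\nu)^{-(2\nu+d)} \leq 2^{2\nu+d} m^{-(2\nu+d)}$, which exactly cancels the $2^{-(2\nu+d)}$ from Weber--Schafheitlin. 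Collecting constants then produces the desired $\tilde C_{d,\nu}\,\ell^{-2\nu} m^{-(2\nu+d)}$ bound; in fact this route yields the sharper constant with $\exp(2\nu+d+1/6)$ replaced by $1$, but the stated constant is merely loose slack (alternatively, one can reach precisely $\exp(2\nu+d+1/6)$ via the Stirling bounds $\Gamma(z+1) \leq \sqrt{2\pi z}(z/e)^z e^{1/(12z)}$, in which the $e^{1/6}$ arises from the correction term controlled by $m > 2\nu$ together with $\nu > 1/2$).

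The main obstacle is essentially cosmetic bookkeeping: verifying the narrow admissibility window of the Weber--Schafheitlin identity and aligning the four Gamma arguments with those in the statement of $\tilde C_{d,\nu}$. Neither Bessel-function splitting, large-order Debye asymptotics, nor sharp uniform Bessel bounds are needed, because the Weber--Schafheitlin identity already encodes all $m$-dependence of the integral in closed form.
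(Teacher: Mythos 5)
Your proposal follows the paper's proof essentially step for step: reduce to the spectral-density integral via the Narcowich--Ward lemma, drop the $2\nu/\ell^2$ shift to obtain $\frac{C_{d,\nu}}{\ell^{2\nu}}\int_0^\infty t^{-(2\nu+d)}B_{m+(d-1)/2}^2(t)\,\mathrm{d}t$, and evaluate this integral in closed form via the Weber--Schafheitlin identity with the same parameter choices ($z=2\nu+d$, $p=q=m+\tfrac{d-1}{2}$, admissibility $m>\nu$). The only departure is the final Gamma-ratio estimate, where you derive $\Gamma(m-\nu)/\Gamma(m+\nu+d)\le(m-\nu)^{-(2\nu+d)}$ from the Beta-integral representation instead of the paper's Stirling bounds; this step is correct and in fact gives a slightly sharper constant (the factor $\exp(2\nu+d+\tfrac16)$ becomes $1$), so the stated bound follows a fortiori.
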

\begin{proof}
    From \lemref{lem:eigen_spectral}, we have
    \begin{align}
        \lambda_m 
        &= \int_{0}^{\infty} t \hat{\Phi}(t) B_{m+\frac{d-1}{2}}^2(t) \mathrm{d}t \\
        &= \frac{C_{d, \nu}}{\ell^{2\nu}} \int_{0}^{\infty} t \rbr{\frac{2\nu}{\ell^2} + t^2}^{-\rbr{\nu + \frac{d+1}{2}}} B_{m+\frac{d-1}{2}}^2(t) \mathrm{d}t \\
        \label{eq:spec_bound}
        &\leq \frac{C_{d, \nu}}{\ell^{2\nu}} \int_{0}^{\infty}  t^{-2\nu - d} B_{m+\frac{d-1}{2}}^2(t) \mathrm{d}t.
    \end{align}
    As with the proof of Theorem~7 in \citep{geifman2020similarity}, we evaluate the integral $\int_{0}^{\infty}  t^{-2\nu - d} B_{m+\frac{d-1}{2}}^2(t) \mathrm{d}t$ by using the following identity (Chapter 13.4.1 in \citep{gray1895treatise}):
    \begin{equation}
        \int_{0}^{\infty} \frac{B_{p}(at) B_{q}(at)}{t^z} \mathrm{d}t
        = \frac{\rbr{\frac{1}{2} a}^{z-1} \Gamma(z) \Gamma\rbr{\frac{1}{2}p + \frac{1}{2}q - \frac{1}{2} z + \frac{1}{2}}}{2\Gamma\rbr{\frac{1}{2}z + \frac{1}{2}q - \frac{1}{2} p + \frac{1}{2}} \Gamma\rbr{\frac{1}{2}z + \frac{1}{2}p + \frac{1}{2}q + \frac{1}{2}} \Gamma\rbr{\frac{1}{2}z + \frac{1}{2}p - \frac{1}{2}q + \frac{1}{2}}},
    \end{equation}
    where $p + q + 1 > z > 0$. By setting $p = q = m + \frac{d-1}{2}$, $z = 2\nu + d$, and $a = 1$, we have $p + q + 1 > z \Leftrightarrow m > \nu$. Hence, for any $m > \nu$, 
    we have
    \begin{equation}
        \label{eq:intg}
        \int_{0}^{\infty}  t^{-2\nu - d} B_{m+\frac{d-1}{2}}^2(t) \mathrm{d}t = \rbr{\frac{1}{2}}^{2\nu + d -1} \frac{\Gamma(2\nu + d) \Gamma\rbr{m - \nu}}{2\Gamma^2\rbr{\nu + \frac{d + 1}{2}} \Gamma\rbr{m + \nu + d}}.
    \end{equation}
    Stirling's formula implies that there exists a constant $C > 0$ such that
    \begin{align}
        \Gamma\rbr{m - \nu} &\leq C (m - \nu)^{m - \nu - \frac{1}{2}} \exp(-m + \nu) \exp\rbr{\frac{1}{12(m - \nu)}}, \\
        \Gamma\rbr{m + \nu + d} &\geq C (m + \nu + d)^{m + \nu + d - \frac{1}{2}} \exp\rbr{-(m + \nu + d)}.
    \end{align}
    Therefore, for any $m \geq 2\nu$ with $\nu > 1/2$, we have
    \begin{align}
        \frac{\Gamma\rbr{m - \nu}}{\Gamma\rbr{m + \nu + d}} 
        &\leq \frac{(m - \nu)^{m - \nu - \frac{1}{2}} \exp(-m + \nu) \exp\rbr{\frac{1}{12(m - \nu)}}}{(m + \nu + d)^{m + \nu + d - \frac{1}{2}} \exp\rbr{-(m + \nu + d)}} \\
        &\leq \frac{(m - \nu)^{m - \nu - \frac{1}{2}}}{(m + \nu + d)^{m + \nu + d - \frac{1}{2}}} \exp\rbr{2\nu + d + \frac{1}{6}} \\
        &\leq \frac{(m - \nu)^{m - \nu - \frac{1}{2}}}{(m - \nu)^{m + \nu + d - \frac{1}{2}}} \exp\rbr{2\nu + d + \frac{1}{6}} \\
        &= (m - \nu)^{- 2\nu - d} \exp\rbr{2\nu + d + \frac{1}{6}} \\
        \label{eq:gamma}
        &\leq 2^{2\nu + d} m^{- 2\nu - d} \exp\rbr{2\nu + d + \frac{1}{6}},
    \end{align}
    where the second inequality follows from $m - \nu \geq 1/2$ due to $m \geq 2 \nu \Leftrightarrow m - \nu \geq \nu$, the third inequality follows from $m + \nu + d \geq m - \nu$, and the last inequality follows from $m - \nu \geq m - m/2 \geq m/2$.
    By aggregating Eq.~\eqref{eq:spec_bound}, \eqref{eq:intg}, and ~\eqref{eq:gamma}, we have
    \begin{align}
        \lambda_m &\leq \frac{C_{d, \nu}}{\ell^{2\nu}} 
        \rbr{\frac{1}{2}}^{2\nu + d -1} \frac{\Gamma(2\nu + d)}{2\Gamma^2\rbr{\nu + \frac{d + 1}{2}}} 2^{2\nu + d} m^{- 2\nu - d} \exp\rbr{2\nu + d + \frac{1}{6}} \\
        &= \frac{\tilde{C}_{d, \nu}}{\ell^{2\nu}} m^{- 2\nu - d}.
    \end{align} 
\end{proof}

\subsection{Proof of \thmref{thm:mig}}
\label{sec:proof_of_sphere_mig}

\paragraph{Squared exponential kernel.}
From \lemref{lem:se_decay}, we have
\begin{align}
    \lambda_m &< |\mbS^{d}| \rbr{\frac{2e}{\theta}}^m \frac{(2e)^{\frac{d+1}{2}} \Gamma\rbr{\frac{d+1}{2}}}{\sqrt{\pi} (2m + d-1)^{m + \frac{d}{2}}} \exp\rbr{-\frac{2}{\theta} + \frac{1}{\theta^2}} \\
    &\leq |\mbS^{d}| \frac{(2e)^{\frac{d+1}{2}} \Gamma\rbr{\frac{d+1}{2}}}{\sqrt{\pi}} \exp\rbr{-\frac{2}{\theta} + \frac{1}{\theta^2}} \rbr{\frac{2e}{\theta}}^m (2m)^{-m - \frac{d}{2}} \\
    &\leq |\mbS^{d}| \frac{(2e)^{\frac{d+1}{2}} \Gamma\rbr{\frac{d+1}{2}}}{\sqrt{\pi}} \exp\rbr{-\frac{2}{\theta} + \frac{1}{\theta^2}} \rbr{\frac{e}{\theta}}^m m^{-m - \frac{d}{2}}.
\end{align}
Here, we set $C_{d, \theta}$ as 
\begin{equation}
    C_{d, \theta} = \frac{|\mbS^{d}|}{|\mbS^{d+1}|} \frac{(2e)^{\frac{d+1}{2}} \Gamma\rbr{\frac{d+1}{2}}}{\sqrt{\pi}} \exp\rbr{-\frac{2}{\theta} + \frac{1}{\theta^2}}.
\end{equation}
Then, 
\begin{align}
    \gamma_T(\mX) &\leq N_M \ln \rbr{1 + \frac{T}{\lambda^2}} + \frac{T}{|\mbS^{d+1}| \lambda^2} \sum_{m=M+1}^{\infty} \lambda_m N_{d+1, m} \\
    &\leq N_M \ln \rbr{1 + \frac{T}{\lambda^2}} + \frac{C_{d, \theta} T}{\lambda^2} \sum_{m=M+1}^{\infty} \rbr{\frac{e}{\theta}}^m m^{-m - \frac{d}{2}} N_{d+1, m} \\
    &\leq \sbr{1 + (d+1) e^{d-1} M^{d}} \ln \rbr{1 + \frac{T}{\lambda^2}} + \frac{C_{d, \theta} T}{\lambda^2} (d + 1) e^{d-1} \sum_{m=M+1}^{\infty} \rbr{\frac{e}{\theta m}}^m m^{\frac{d}{2} - 1},
\end{align}
where the first and last inequalities follow from Lemmas~\ref{lem:mig_sphere}, \ref{lem:mig_gen_ub} and \lemref{lem:mul_ub}, respectively.
Here, for any $d \in \N_+$ and $m \in \N_+$, we have $m^{\frac{d}{2} - 1} \leq c_d^m$ with $c_d = \max\cbr{1, \exp\rbr{\frac{1}{e}\rbr{\frac{d}{2} - 1}}}$. Indeed, when $d \leq 2$, we have 
$m^{d/2-1} \leq 1 = c_d$. When $d \geq 3$, the function $g(m) = m^{\frac{1}{m}\rbr{\frac{d}{2} - 1}}$ attains maximum at $m = e$ on $[1, \infty)$, which implies $g(m) \leq \exp\rbr{\frac{1}{e}\rbr{\frac{d}{2} - 1}} \Rightarrow m^{d/2-1} \leq \exp\rbr{\frac{1}{e}\rbr{\frac{d}{2} - 1}}^m \leq c_d^m$. Hence, we have
\begin{equation}
    \label{eq:se_lambda_bound_mig}
    \gamma_T(\mX) \leq \sbr{1 + (d+1) e^{d-1} M^{d}} \ln \rbr{1 + \frac{T}{\lambda^2}} + \frac{C_{d, \theta} T}{\lambda^2} (d + 1) e^{d-1} \sum_{m=M+1}^{\infty} \rbr{\frac{ec_d}{\theta m}}^m.
\end{equation}
In the remaining proof, we consider the upper bound of $\rbr{\frac{e c_d}{\theta m}}^m$ separately based on $\theta > 0$.
If $\theta \leq e^2 c_d$, we have
\begin{align}
    \rbr{\frac{e c_d}{\theta m}}^m 
    = \exp\rbr{- m \ln \rbr{\frac{\theta m}{e c_d}}} &\leq \exp\rbr{- m}
\end{align}
for any $m$ such that $\frac{\theta m}{e c_d} \geq e \Leftrightarrow m \geq \frac{e^2 c_d}{\theta}$.
Then, by noting that the condition $T/(e-1) \geq \lambda^2$ implies $\ln \rbr{1 + \frac{T}{\lambda^2}} \geq 1$, we have the following inequalities by setting $M = \left\lfloor \frac{e^2 c_d}{\theta} \ln \rbr{1 + \frac{T}{\lambda^2}}\right\rfloor$:
\begin{align}
    \sum_{m = M+1}^{\infty} \rbr{\frac{e c_d}{\theta m}}^m 
    &\leq \sum_{m = M+1}^{\infty} \exp(-m) \\
    &\leq \int_M^{\infty} \exp(-m) \mathrm{d}m \\
    &\leq \exp(-M) \\
    &\leq \exp\rbr{-\frac{e^2 c_d}{\theta} \ln \rbr{1 + \frac{T}{\lambda^2}} + 1} \\
    &\leq e \rbr{1 + \frac{T}{\lambda^2}}^{-\frac{e^2 c_d}{\theta}} \\
    &\leq e \rbr{1 + \frac{T}{\lambda^2}}^{-1} \\
    \label{eq:tail_ub_bound_se}
    &\leq e \frac{\lambda^2}{T}.
\end{align}
Therefore, for $\theta \leq e^2 c_d$, the following inequality holds from Eqs.~\eqref{eq:se_lambda_bound_mig} and \eqref{eq:tail_ub_bound_se}, and the definition of $M$:
\begin{align}
    \label{eq:theta_zero}
    \gamma_T(\mX) 
    &\leq \sbr{1 + (d+1) e^{d-1}\rbr{\frac{e^2 c_d}{\theta} \ln \rbr{1 + \frac{T}{\lambda^2}}}^{d}} \ln \rbr{1 + \frac{T}{\lambda^2}} + e C_{d, \theta} (d + 1) e^{d-1}.
\end{align}
Next, if $\theta > e^2 c_d$, we have
\begin{align}
    \rbr{\frac{e c_d}{\theta m}}^m 
    = \exp\rbr{- m \ln \rbr{\frac{\theta m}{e c_d}}} \leq \exp\rbr{- m \ln \rbr{\frac{\theta}{e c_d}}}
\end{align}
for any $m \in \N_+$. Then, similarly to the proof under $\theta \leq e^2 c_d$, we have the following inequalities for any $M \in \N$:
\begin{align}
    \sum_{m = M+1}^{\infty} \rbr{\frac{e c_d}{\theta m}}^m 
    &\leq \sum_{m = M+1}^{\infty} \exp\rbr{-m \ln \rbr{\frac{\theta}{e c_d}}} \\
    &\leq \int_M^{\infty} \exp\rbr{-m \ln \rbr{\frac{\theta}{e c_d}}} \mathrm{d}m \\
    &= \frac{1}{\ln \rbr{\frac{\theta}{e c_d}}} \exp\rbr{-M \ln \rbr{\frac{\theta}{e c_d}}} \\
    &< \exp\rbr{-M \ln \rbr{\frac{\theta}{e c_d}}},
\end{align}
where the last inequality follows from $\theta/e c_d > e \Leftrightarrow \theta > e^2 c_d$.
By setting $M = \left\lceil \frac{1}{\ln \rbr{\frac{\theta}{e c_d}}} \ln \rbr{1 + \frac{T}{\lambda^2}}\right\rceil$, we have
\begin{align}
    \sum_{m = M+1}^{\infty} \rbr{\frac{e c_d}{\theta m}}^m 
    \leq \exp\rbr{-\ln \rbr{1 + \frac{T}{\lambda^2}}} 
    = \rbr{1 + \frac{T}{\lambda^2}}^{-1}
    \leq \frac{\lambda^2}{T}.
\end{align}
Hence, for $\theta > e^2 c_d$, we have
\begin{align}
    \label{eq:theta_div}
    \gamma_T(\mX) &\leq \sbr{1 + (d+1) e^{d-1} \rbr{\frac{1}{\ln \rbr{\frac{\theta}{e c_d}}} \ln \rbr{1 + \frac{T}{\lambda^2}} + 1}^{d}} \ln \rbr{1 + \frac{T}{\lambda^2}} + C_{d, \theta} (d + 1) e^{d-1} .
\end{align}
Finally, aligning Eqs.~\eqref{eq:theta_zero} and \eqref{eq:theta_div} by focusing on the dependence on $T$, $\lambda^2$, and $\theta$, we obtain the desired result.

\paragraph{Mat\'ern kernel.} 
Similarly to the proof for the SE kernel, for any $M \geq 2\nu$, we have
\begin{align}
    &\frac{1}{2} \max_{\bx_1, \ldots, \bx_T \in \mbS^d} \ln \det (\bI_T + \lambda^{-2} \bK(\bX_T, \bX_T)) \\
    &\leq N_M \ln \rbr{1 + \frac{T}{\lambda^2}} + \frac{T}{|\mbS^{d+1}| \lambda^2} \sum_{m=M+1}^{\infty} \lambda_m N_{d+1, m} \\
    &\leq \sbr{1 + (d+1) e^{d-1} M^{d}} \ln \rbr{1 + \frac{T}{\lambda^2}} + \frac{T(d + 1) e^{d-1}}{|\mbS^{d+1}|\lambda^2} \sum_{m=M+1}^{\infty} \lambda_m m^{d-1} \\
    &\leq \sbr{1 + (d+1) e^{d-1} M^{d}} \ln \rbr{1 + \frac{T}{\lambda^2}} + \frac{T(d + 1) \tilde{C}_{d, \nu} e^{d-1}}{|\mbS^{d+1}|\lambda^2 \ell^{2\nu}} \sum_{m=M+1}^{\infty} m^{- 2\nu - 1} \\
    &= \sbr{1 + (d+1) e^{d-1} M^{d}} \ln \rbr{1 + \frac{T}{\lambda^2}} + \frac{T \overline{C}_{d, \nu}}{\lambda^2 \ell^{2\nu}} \sum_{m=M+1}^{\infty} m^{- 2\nu - 1},
\end{align}
where the second inequality follows from \lemref{lem:mul_ub}, 
and the third inequality follows from $M \geq 2\nu$ and \lemref{lem:eigen_mat}. In the last equation, we set $\overline{C}_{d, \nu} = \frac{(d + 1) \tilde{C}_{d, \nu} e^{d-1}}{|\mbS^{d+1}|}$.
Furthermore,
\begin{equation}
    \sum_{m=M+1}^{\infty} m^{- 2\nu - 1} \leq \int_{M}^{\infty} m^{- 2\nu - 1} \mathrm{d}m = \frac{M^{-2\nu}}{2\nu}.
\end{equation}
By balancing $M^d \ln (1 + T/\lambda^2)$ and $\frac{TM^{-2\nu}}{\lambda^2 \ell^{2\nu}}$ under the condition $M \geq 2\nu$, we set 
$M = \left\lceil \max\cbr{2\nu, \sbr{\frac{T}{\lambda^2 \ell^{2\nu}} \ln^{-1}\rbr{1 + \frac{T}{\lambda^2}}}^{1/(2\nu + d)}} \right\rceil$. Then,
\begin{align}
    &\frac{1}{2} \max_{\bx_1, \ldots, \bx_T \in \mbS^d} \ln \det (\bI_T + \lambda^{-2} \bK(\bX_T, \bX_T)) \\
    &\leq \sbr{1 + (d+1) e^{d-1} M^{d}} \ln \rbr{1 + \frac{T}{\lambda^2}} + \frac{T \overline{C}_{d, \nu}}{\lambda^2 \ell^{2\nu}} \frac{M^{-2\nu}}{2\nu} \\
    &\leq \sbr{1 + (d+1) e^{d-1} M^{d}} \ln \rbr{1 + \frac{T}{\lambda^2}} + \frac{\overline{C}_{d, \nu}}{2\nu} M^d \ln\rbr{1 + \frac{T}{\lambda^2}} \\
    &= \ln \rbr{1 + \frac{T}{\lambda^2}} + \sbr{\frac{\overline{C}_{d, \nu}}{2\nu} + (d+1) e^{d-1} } M^{d} \ln \rbr{1 + \frac{T}{\lambda^2}} \\
    &= \ln \rbr{1 + \frac{T}{\lambda^2}} + C_{d, \nu}' \rbr{(2\nu)^d + \rbr{\frac{T}{\lambda^2 \ell^{2\nu}}}^{\frac{d}{2\nu + d}}\ln^{-\frac{d}{2\nu + d}} \rbr{1 + \frac{T}{\lambda^2}} }  \ln \rbr{1 + \frac{T}{\lambda^2}} \\
    \label{eq:final_ub_mig_mat}
    &= \rbr{C_{d, \nu}' (2\nu)^d + 1}\ln \rbr{1 + \frac{T}{\lambda^2}} + C_{d, \nu}' \rbr{\frac{T}{\lambda^2 \ell^{2\nu}}}^{\frac{d}{2\nu + d}} \ln^{\frac{2\nu}{2\nu+d}} \rbr{1 + \frac{T}{\lambda^2}},
\end{align}
where we set $C_{d, \nu}' = \frac{\overline{C}_{d, \nu}}{2\nu} + (d+1) e^{d-1}$. In the above equations, the second inequality follows from
\begin{align}
    M^d \ln (1 + T/\lambda^2) \geq \frac{TM^{-2\nu}}{\lambda^2 \ell^{2\nu}} &\Leftrightarrow \frac{\lambda^2 \ell^{2\nu}}{T} \ln (1 + T/\lambda^2) \geq M^{-2\nu - d} \\
    &\Leftrightarrow \sbr{\frac{\lambda^2 \ell^{2\nu}}{T} \ln (1 + T/\lambda^2)}^{-\frac{1}{2\nu + d}} \leq M \\
    &\Leftrightarrow \sbr{\frac{T}{\lambda^2 \ell^{2\nu}} \ln^{-1} (1 + T/\lambda^2)}^{\frac{1}{2\nu + d}} \leq M.
\end{align}
Finally, combining Eq.~\eqref{eq:final_ub_mig_mat} with \lemref{lem:mig_sphere}, we obtain the desired result~\footnote{Note that we need to adjust the noise variance parameter by a factor $1/\sqrt{2}$ from \lemref{lem:mig_sphere}.}.
\qed

\section{Auxiliary Lemmas}
\label{sec:auxiliary_lemma}

\begin{lemma}[Sub-optimality gap and the neighborhood around the maximizer]
    \label{lem:sogap_maximizer}
    Suppose $f$ is continuous.
    Then, under conditions 1 and 3 in \lemref{lem:maximum}, $\bx \in \mB_2(\rquad; \bx^{\ast})$ holds for any $\bx \in \mX$ such that $f(\bx^{\ast}) - f(\bx) \leq \varepsilon$ with $\varepsilon = \min\{\cgap, \cquad \rquad^2\}$.
\end{lemma}
\begin{proof}
    When $\mB_2(\rquad; \bx^{\ast}) = \mX$, the statement is trivial. Hereafter, we assume $\mB_2(\rquad; \bx^{\ast}) \neq \mX$.
    Here, note that $f(\bx^{\ast}) - f(\tilde{\bx}) \geq \cquad \rquad^2$ holds for any $\tilde{\bx} \in \mB_2^b(\rquad; \bx^{\ast})$ from condition 3 in \lemref{lem:maximum}, where $\mB_2^b(\rquad; \bx^{\ast}) = \{\bx \in \mX \mid \|\bx - \bx^{\ast}\|_2 = \rquad\}$.
    Furthermore, from the continuity of $f$ and the compactness of $(\mX \setminus \mB_2(\rquad; \bx^{\ast})) \cup \mB_2^b(\rquad; \bx^{\ast})$, there exists $\tilde{\bx}_{\ast} \in \argmax_{\bx \in (\mX \setminus \mB_2(\rquad; \bx^{\ast})) \cup \mB_2^b(\rquad; \bx^{\ast})} f(\bx)$. Then, we consider the following two cases separately.
    \begin{itemize}
        \item When $\cgap \geq \cquad \rquad^2$, $\varepsilon = \cquad \rquad^2$ holds. If there exists $\bx \in \mX \setminus \mB_2(\rquad; \bx^{\ast})$ such that $f(\bx^{\ast}) - f(\bx) \leq \varepsilon = \cquad \rquad^2$, we can choose $\tilde{\bx}_{\ast}$ such that $\tilde{\bx}_{\ast} \in \mX \setminus \mB_2(\rquad; \bx^{\ast})$ since $f(\bx^{\ast}) - f(\tilde{\bx}) \geq \cquad \rquad^2$ holds for any $\tilde{\bx} \in \mB_2^b(\rquad; \bx^{\ast})$. Furthermore, such $\tilde{\bx}_{\ast}$ is the local maximizer on $\mX$, which satisfies $f(\bx^{\ast}) - f(\tilde{\bx}_{\ast}) \leq f(\bx^{\ast}) - f(\bx) \leq \varepsilon_1 \leq \cgap.$ This contradicts condition 1 in \lemref{lem:maximum}.
        
        \item When $\cgap < \cquad \rquad^2$, $\varepsilon = \cgap$ holds. If there exists $\bx \in \mX \setminus \mB_2(\rquad; \bx^{\ast})$ such that $f(\bx^{\ast}) - f(\bx) \leq \varepsilon = \cgap$, we can choose $\tilde{\bx}_{\ast}$ such that $\tilde{\bx}_{\ast} \in \mX \setminus \mB_2(\rquad; \bx^{\ast})$ since $f(\bx^{\ast}) - f(\tilde{\bx}) \geq \cquad \rquad^2 > \cgap$ holds for any $\tilde{\bx} \in \mB_2^b(\rquad; \bx^{\ast})$. Furthermore, such $\tilde{\bx}_{\ast}$ is the local maximizer on $\mX$, which satisfies $f(\bx^{\ast}) - f(\tilde{\bx}_{\ast}) \leq f(\bx^{\ast}) - f(\bx) \leq \varepsilon = \cgap.$ This contradicts condition 1 in \lemref{lem:maximum}.
    \end{itemize}
    From the above two arguments, we have $\forall \bx \in \mX \setminus \mB_2(\rquad; \bx^{\ast}), f(\bx^{\ast}) - f(\bx) > \varepsilon$. This implies that it is necessary to satisfy $\bx \in \mB_2(\rquad; \bx^{\ast})$ under $f(\bx^{\ast}) - f(\bx) \leq \varepsilon$.
\end{proof}

\begin{lemma}[Upper bound of regret of GP-UCB for any index subset]
    \label{lem:reg_subset}
    Fix any index set $\mT \subset [T]$.
    Then, when running GP-UCB, we have the following inequality under $\mA$:
    \begin{equation}
        \sum_{t \in \mT} f(\bx^{\ast}) - f(\bx_t) \leq  2 \sqrt{C \beta_T |\mT| I(\bX_{\mT})} + \frac{\pi^2}{6} \leq 
        2 \sqrt{C \beta_T |\mT| \gamma_{|\mT|}(\mX)} + \frac{\pi^2}{6},
    \end{equation}
    where $C = 2/\ln(1 + \sigma^{-2})$ and $\bX_{\mT} = (\bx_t)_{t \in \mT}$.
\end{lemma}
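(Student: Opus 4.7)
The plan is to mimic the proof of the standard GP-UCB regret bound (Theorem~2 in \citep{srinivas10gaussian}), but to carefully argue that the final step still goes through when the sum is restricted to an arbitrary index subset $\mT \subset [T]$. The only nontrivial modification is in relating $\sum_{t \in \mT} \sigma(\bx_t; \bX_{t-1})$ to the information gain $I(\bX_{\mT})$ of the \emph{subsequence}, which will be handled via the monotonicity of the posterior variance.

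First, I will work under event $\mA$ and establish the pointwise bound
\[
f(\bx^{\ast}) - f(\bx_t) \leq \frac{1}{t^2} + 2 \beta_t^{1/2} \sigma(\bx_t; \bX_{t-1})
\]
for every $t \in \N_+$. This follows from the two sub-events of $\mA$ recalled in the summary in \secref{sec:prelim}: the discretization error gives $f(\bx^{\ast}) \leq f([\bx^{\ast}]_t) + 1/t^2$; the confidence bound on $\mX_t$ together with the UCB selection rule gives $f([\bx^{\ast}]_t) \leq \mu(\bx_t; \bX_{t-1}, \by_{t-1}) + \beta_t^{1/2} \sigma(\bx_t; \bX_{t-1})$; and the lower confidence bound at the queried point gives $\mu(\bx_t; \bX_{t-1}, \by_{t-1}) - \beta_t^{1/2} \sigma(\bx_t; \bX_{t-1}) \leq f(\bx_t)$. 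Adding $2 \beta_t^{1/2} \sigma(\bx_t; \bX_{t-1})$ to the last inequality and chaining yields the claim.

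Next, I sum over $t \in \mT$, bound $\sum_{t \in \mT} 1/t^2 \leq \sum_{t=1}^{\infty} 1/t^2 = \pi^2/6$, and pull the monotone $\beta_t^{1/2} \leq \beta_T^{1/2}$ outside the sum. The crucial step is then to replace the conditioning set $\bX_{t-1}$ (the full history up to $t-1$) by the \emph{sub-history} from $\mT$. Enumerate $\mT = \{t_1 < \cdots < t_m\}$ with $m = |\mT|$, and let $\bX_{\mT, <i} = (\bx_{t_1}, \ldots, \bx_{t_{i-1}})$. Because conditioning on additional observations cannot increase the posterior variance of a GP, we have $\sigma(\bx_{t_i}; \bX_{t_i-1}) \leq \sigma(\bx_{t_i}; \bX_{\mT, <i})$. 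Applying Cauchy--Schwarz and the standard lemma $x \leq C \ln(1 + \sigma^{-2} x)$ for $x \in [0,1]$ (with $C = 2/\ln(1+\sigma^{-2})$, using $k(\bx,\bx)\leq 1$ from \asmpref{asmp:func}), telescoping the log-determinant yields
\[
\sum_{i=1}^m \sigma^2(\bx_{t_i}; \bX_{\mT, <i}) \leq C \cdot I(\bX_{\mT}),
\]
so that $\sum_{t \in \mT} \sigma(\bx_t; \bX_{t-1}) \leq \sqrt{C \, |\mT| \, I(\bX_{\mT})}$. Combining the pieces gives the first inequality of the lemma, and the second inequality follows immediately from $I(\bX_{\mT}) \leq \gamma_{|\mT|}(\mX)$ by definition of MIG.

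The only place one might slip is Step~3: it would be tempting to keep the original conditioning $\bX_{t-1}$ and try to relate $\sum_{t \in \mT} \sigma^2(\bx_t; \bX_{t-1})$ directly to $I(\bX_T)$ rather than $I(\bX_{\mT})$, but that would give the weaker bound $\gamma_T(\mX)$ instead of $\gamma_{|\mT|}(\mX)$, which is essential for the dyadic peeling in the proof of \lemref{lem:r2}. The monotonicity argument above is what makes it possible to localize the information gain to the subsequence, and this is the main (modest) technical content of the lemma.
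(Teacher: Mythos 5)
Your proposal is correct and follows essentially the same route as the paper's proof: the pointwise confidence-bound argument under $\mA$, the $\pi^2/6$ discretization term, and crucially the monotonicity of the posterior variance to pass from conditioning on the full history $\bX_{t-1}$ to the sub-history indexed by $\mT$, after which the standard information-gain bound applies to the subsequence. Your closing remark about why one must localize to $I(\bX_{\mT})$ rather than $I(\bX_T)$ correctly identifies the one step where the lemma departs from the vanilla GP-UCB analysis.
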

\begin{proof}
    By following the proof strategy of GP-UCB, we have
    \begin{equation}
        \sum_{t \in \mT} f(\bx^{\ast}) - f(\bx_t)
        \leq \sum_{t \in \mT} \frac{1}{t^2} + \sum_{t \in \mT} f([\bx^{\ast}]_t) - f(\bx_t)
        \leq 2\beta_T^{1/2} \sum_{t \in \mT}\sigma(\bx_t; \bX_{t-1}) + \frac{\pi^2}{6}
    \end{equation}
    due to event $\mA$. Here, we define a new input sequence $(\tilde{\bx}_t)_{t \leq |\mT|}$ as $\tilde{\bx}_t = \bx_{j_t}$, where $j_t$ is the $t$-th element in $\mT$. Furthermore, we define $\tilde{\bX}_t = (\tilde{\bx}_1, \ldots, \tilde{\bx}_t)$.
    Then, from $\tilde{\bX}_t \subset \bX_{j_t}$ and the monotonicity of the posterior variance against the input data, we have
    \begin{align}
        \sum_{t \in \mT}\sigma(\bx_t; \bX_{t-1}) 
        &= \sum_{t = 1}^{|\mT|} \sigma(\tilde{\bx}_{t}; \bX_{j_{t}-1}) \\
        &\leq \sum_{t = 1}^{|\mT|} \sigma(\tilde{\bx}_{t}; \tilde{\bX}_{t-1}) \\
        &\leq \sqrt{C |\mT| I(\tilde{\bX}_{|\mT|})} \\
        &= \sqrt{C |\mT| I(\bX_{\mT})} \\
        &\leq \sqrt{C |\mT| \gamma_{|\mT|}(\mX)},
    \end{align}
    where the second inequality follows from Theorems 5.3 and 5.4 in \citep{srinivas10gaussian}.
\end{proof}

\section{Numerical Simulation for Information Gain}
\label{sec:experiment}

In addition to the simple example provided in \figref{fig:tight_mig_image}, we empirically confirm the gap between the worst-case (MVR) and GP-UCB's information gain under the Bayesian assumption.
In Figures~\ref{fig:avg_ig} and \ref{fig:quantile_ig}, we report the average and the quantile of realized information gain with GP-UCB, over 20 different sample paths, generated by changing the random seed, respectively. We conduct experiments under the same settings as in \figref{fig:tight_mig_image} of the main text. We also report the information gain corresponding to the sequence of maximum variance reduction (MVR), following the same setup as \figref{fig:tight_mig_image}. In all cases, consistent with \figref{fig:tight_mig_image} in the main text, we observe a noticeable gap in information gain between GP-UCB and MVR.

\begin{figure}[t]
    \centering
    \includegraphics[width=0.32\linewidth]{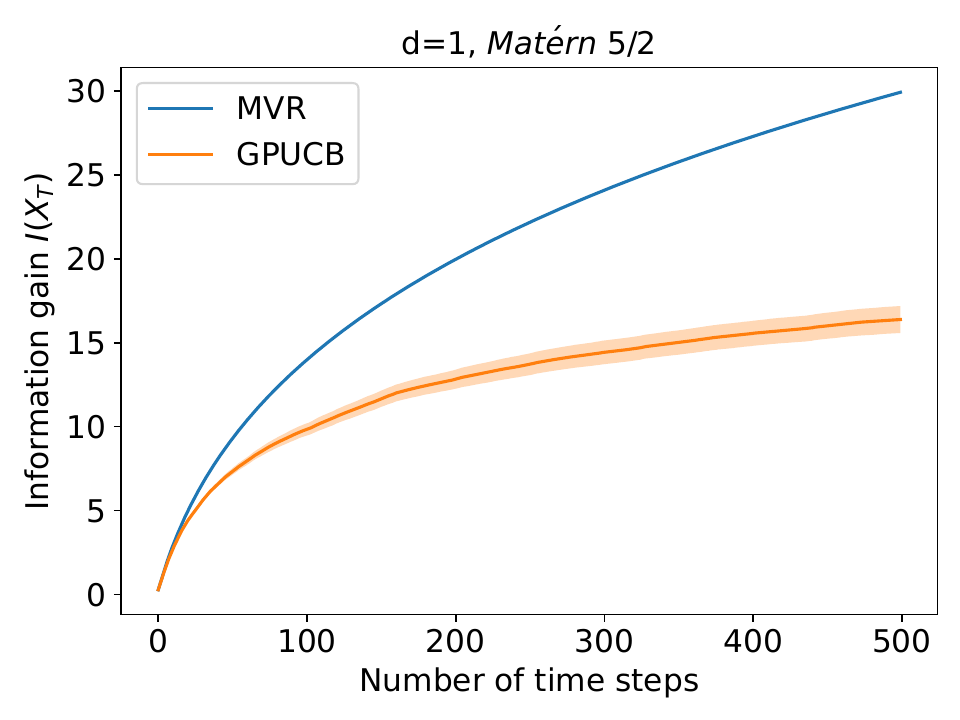}
    \includegraphics[width=0.32\linewidth]{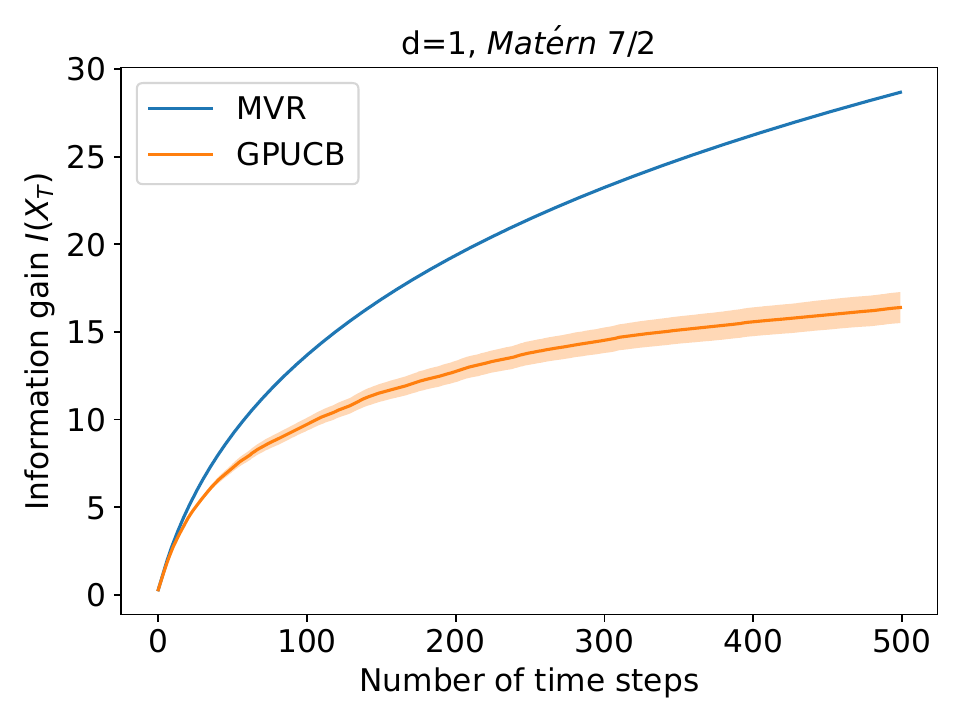}
    \includegraphics[width=0.32\linewidth]{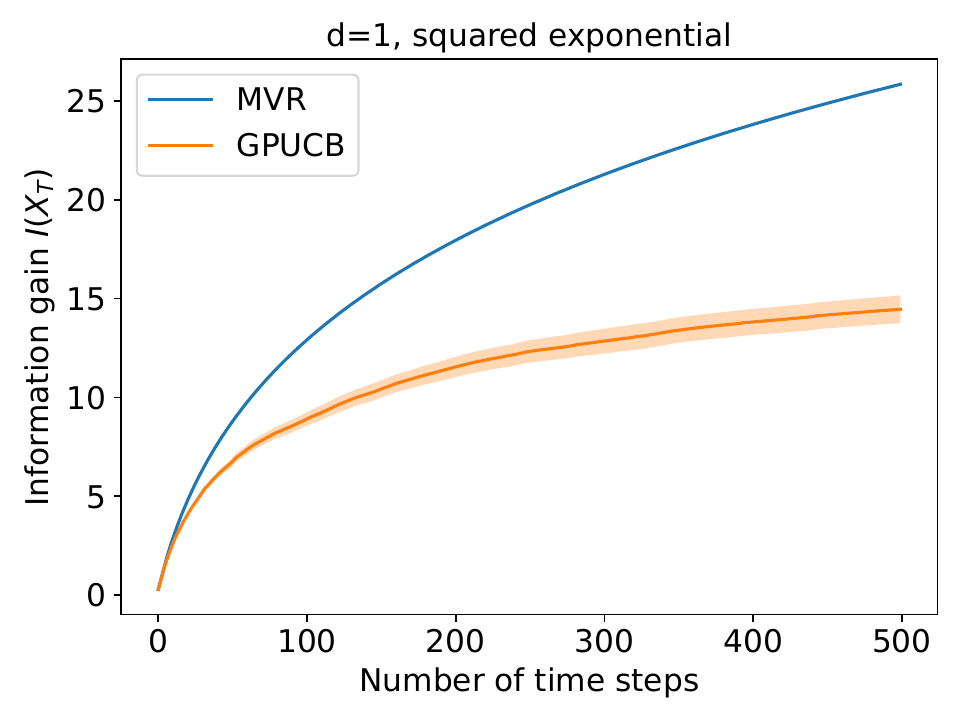}
    \includegraphics[width=0.32\linewidth]{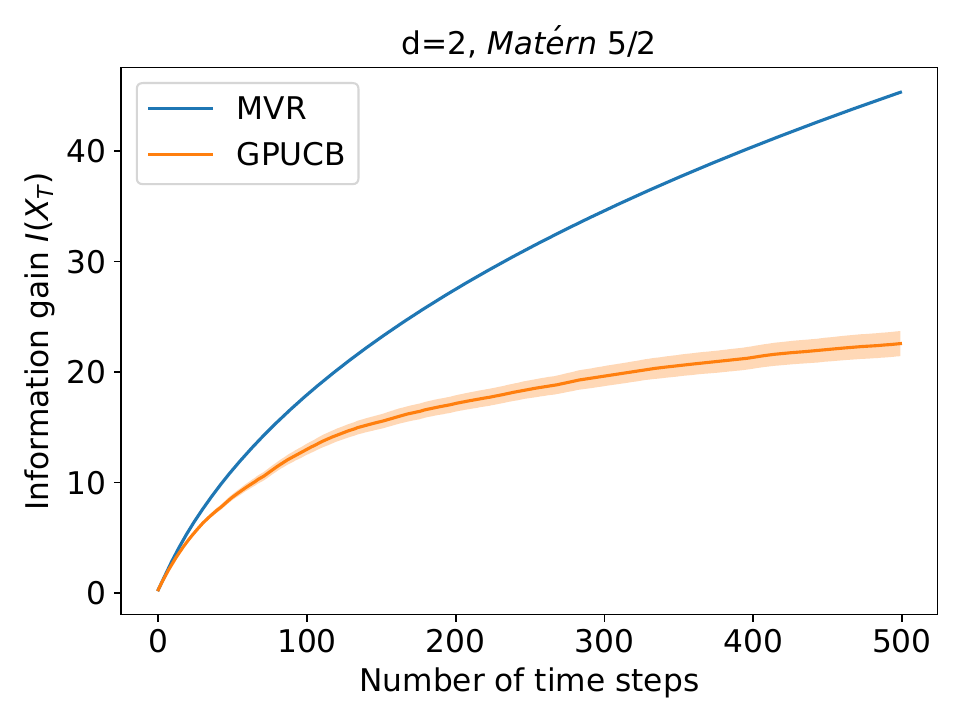}
    \includegraphics[width=0.32\linewidth]{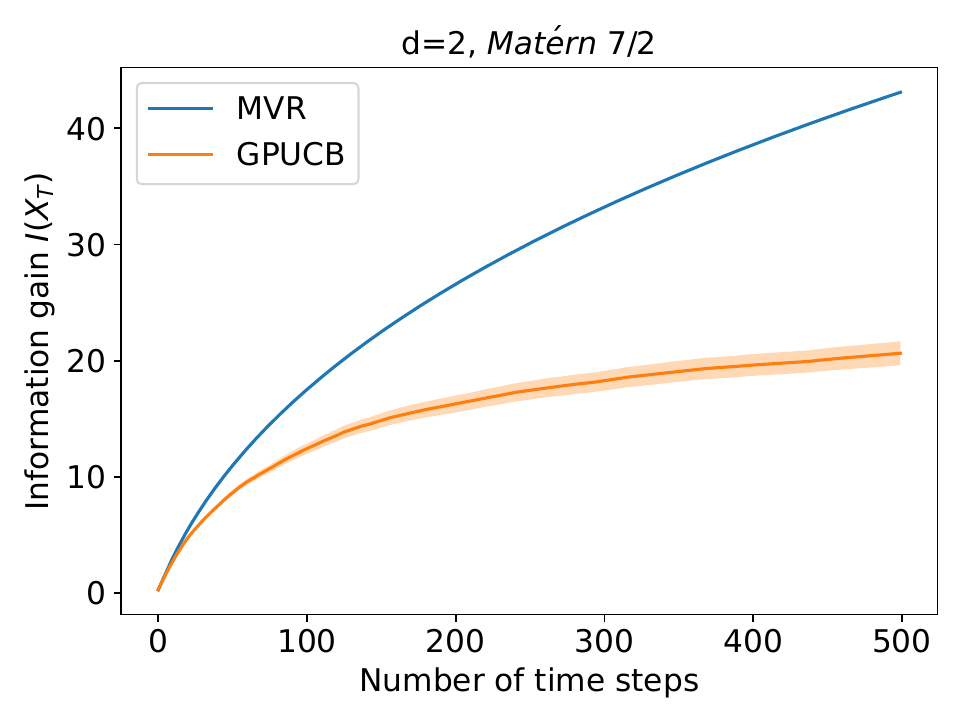}
    \includegraphics[width=0.32\linewidth]{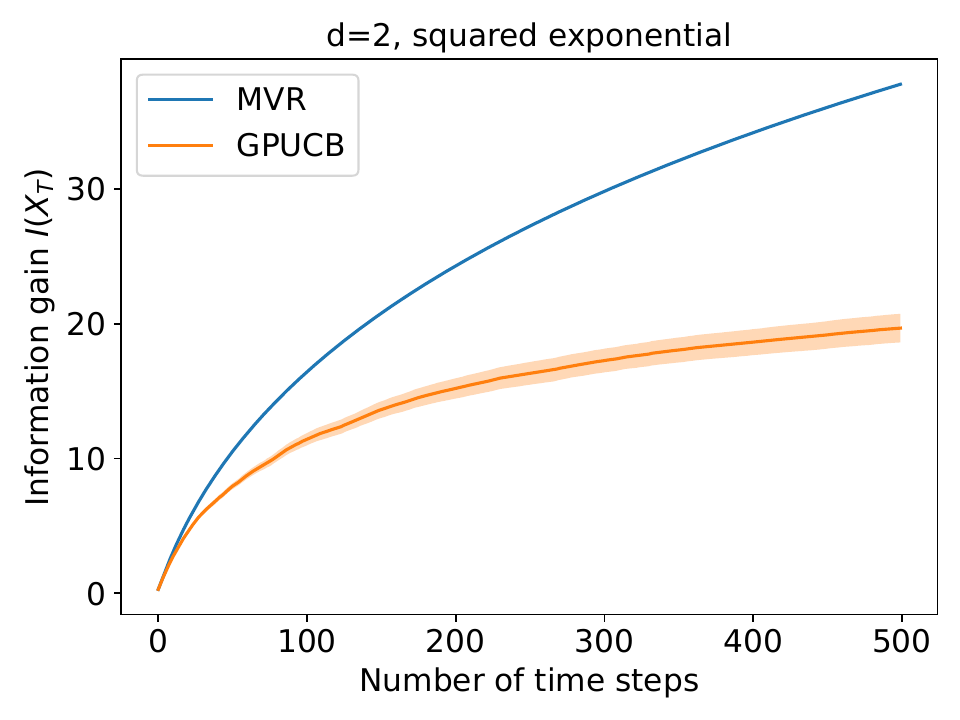}
    \caption{
        Average results of information gain under 20 different sample paths. The top row shows the results for the input space $\mathcal{X} = [0,1]$ with lengthscale parameter $\ell = 0.1$. The bottom row corresponds to the input space $\mathcal{X} = [0,1]^2$ with lengthscale $\ell = 0.25$, and from left to right, we use the SE kernel, the Mat\'ern 5/2 kernel, and the Mat\'ern 7/2 kernel. The shaded regions indicate one standard error.
    } 
    \label{fig:avg_ig}
\end{figure}

\begin{figure}[t]
    \centering
    \includegraphics[width=0.32\linewidth]{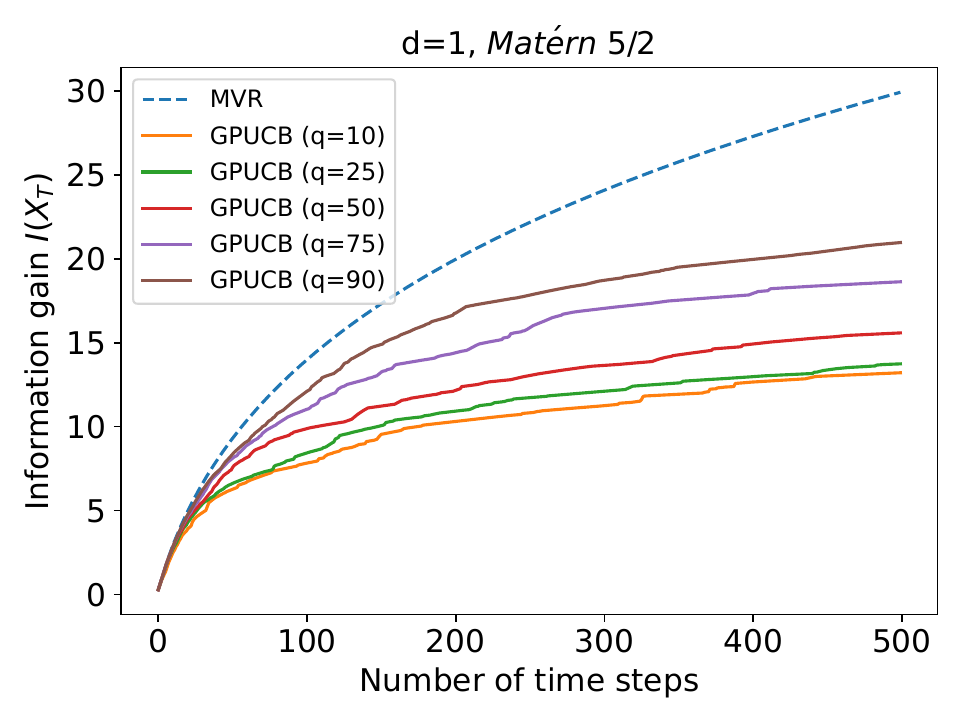}
    \includegraphics[width=0.32\linewidth]{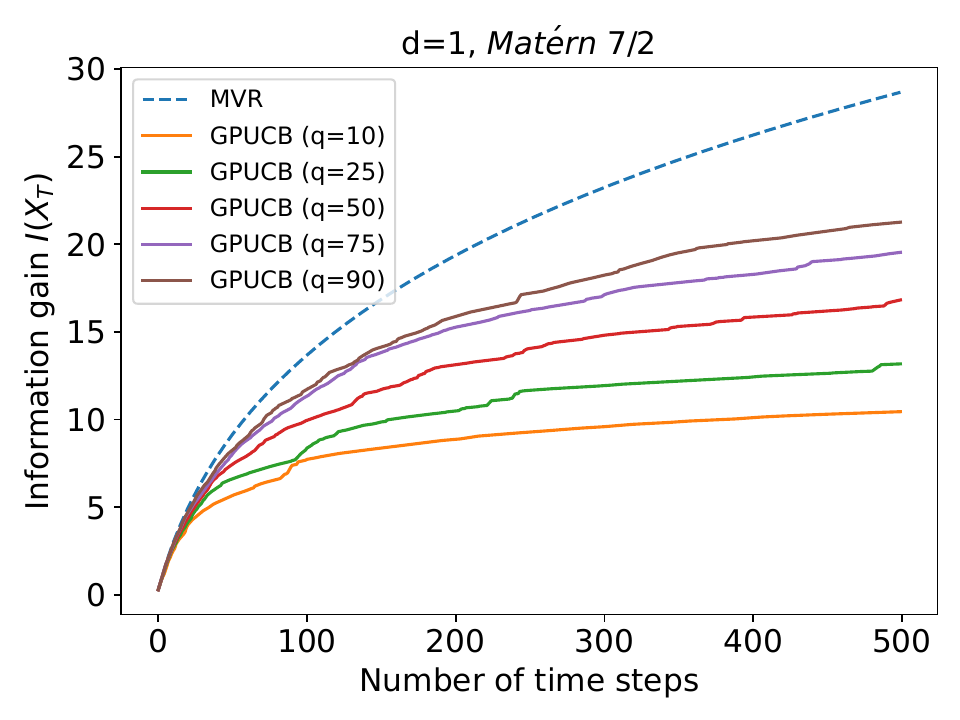}
    \includegraphics[width=0.32\linewidth]{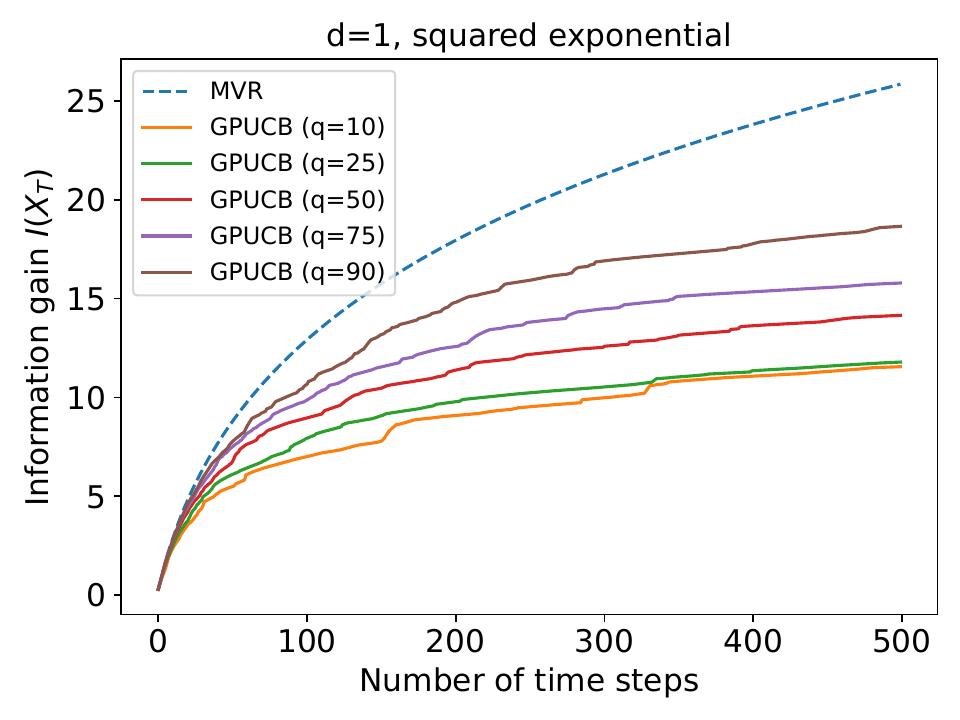}
    \includegraphics[width=0.32\linewidth]{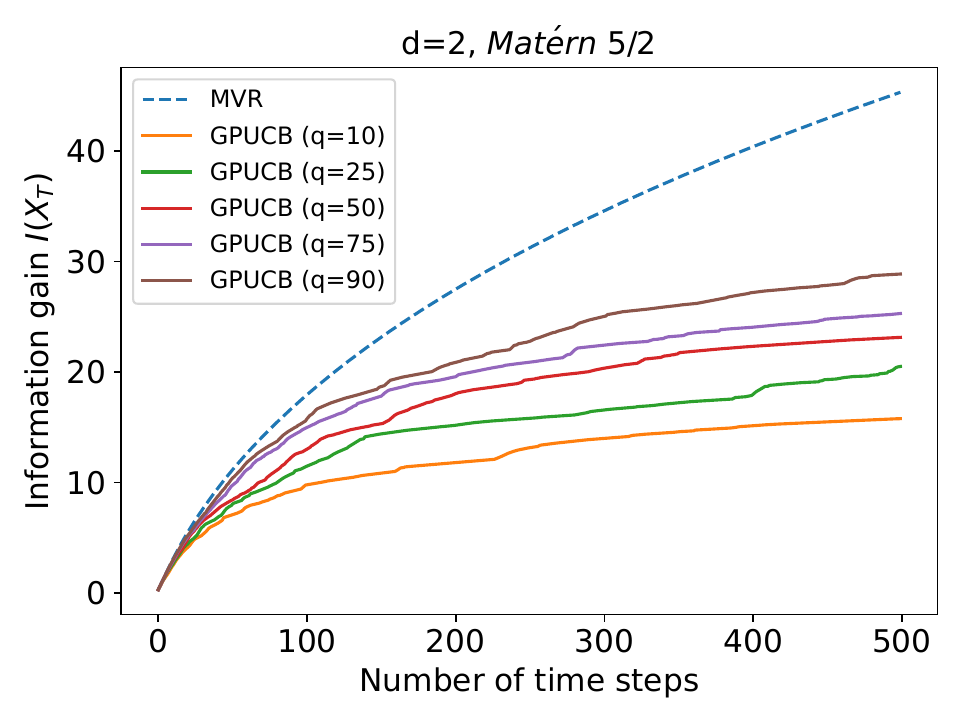}
    \includegraphics[width=0.32\linewidth]{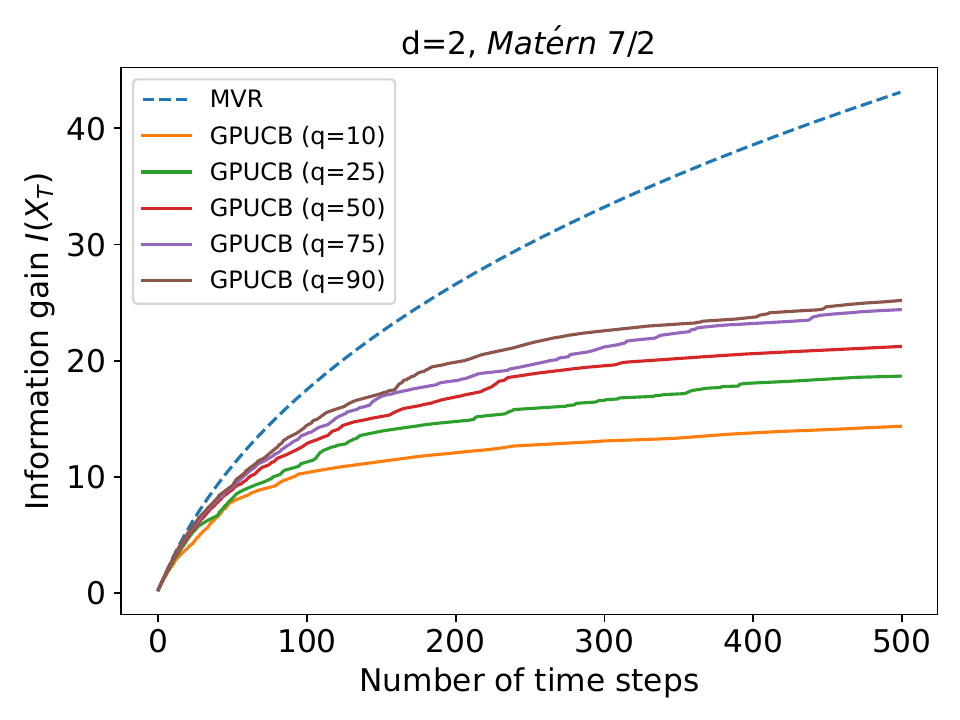}
    \includegraphics[width=0.32\linewidth]{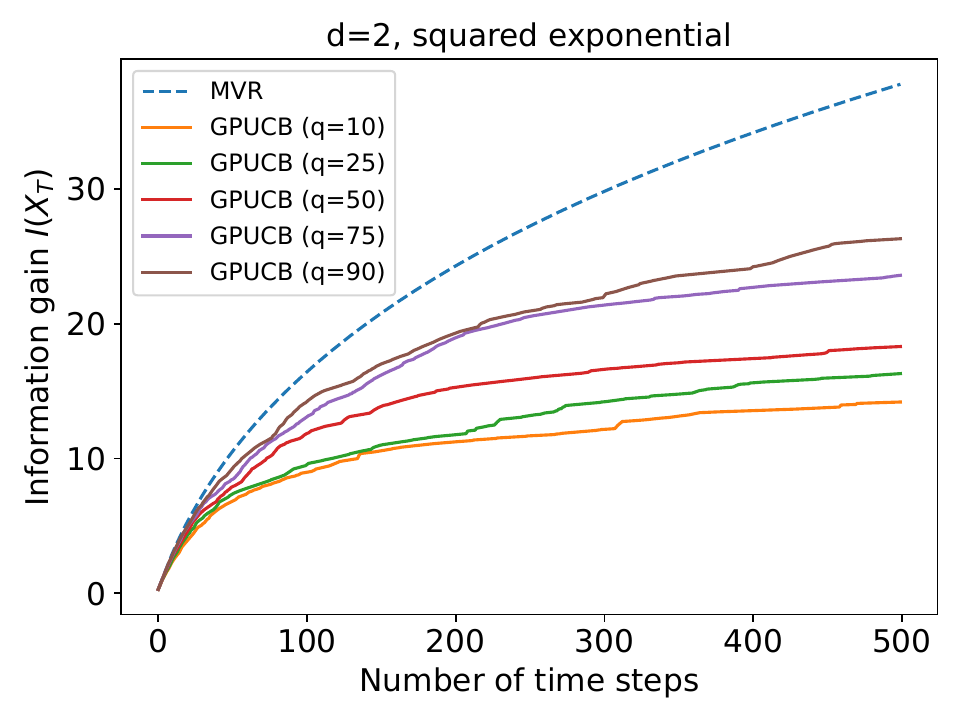}
    \caption{
    Quantiles of information gain over 20 different sample paths.
We report the 10\%, 25\%, 50\%, 75\%, and 90\% quantiles of the information gain of GP-UCB. 
    } 
    \label{fig:quantile_ig}
\end{figure}

\end{document}